\documentclass{article}

\usepackage{amsmath,amsfonts,amsthm,bm}

\def\eqref#1{equation~\ref{#1}}

\def\1{\bm{1}}

\newcommand{\opnorm}[1]{\norm{#1}_{\text{op}}}
\newcommand{\ltronorm}[1]{\norm{#1}_{2}}
\newcommand{\vect}[1]{\mathbf{#1}} 
\newcommand{\mat}[1]{\mathbf{#1}}  
\newcommand{\textsup}[1]{\textnormal{#1}}

\DeclareMathAlphabet{\mathsfit}{\encodingdefault}{\sfdefault}{m}{sl}
\SetMathAlphabet{\mathsfit}{bold}{\encodingdefault}{\sfdefault}{bx}{n}

\newcommand{\E}{\mathbb{E}}

\newcommand{\R}{\mathbb{R}}

\newcommand{\softmax}{\mathrm{softmax}}

\newcommand{\Cov}{\mathrm{Cov}}

\newcommand{\norm}[1]{\left|\left|#1\right|\right|}

\usepackage{microtype}
\usepackage{graphicx}
\usepackage{subcaption}
\usepackage{booktabs} 
\usepackage{enumitem}
\usepackage[table]{xcolor}

\usepackage{hyperref}

\usepackage[accepted]{icml2026}

\usepackage{amsmath}
\usepackage{amssymb}
\usepackage{mathtools}
\usepackage{amsthm}
\usepackage{listings}
\usepackage{makecell}
\usepackage{tcolorbox}
\tcbuselibrary{breakable}
\newtcolorbox{boxI}[1][]{colback=white, colframe=black!30, boxrule=0.4pt, left=6pt, right=6pt, top=6pt, bottom=6pt, breakable, #1}
\newcommand{\keycap}[1]{\fbox{\texttt{#1}}}

\usepackage[capitalize,noabbrev]{cleveref}

\theoremstyle{plain}
\newtheorem{theorem}{Theorem}[section]
\newtheorem{proposition}[theorem]{Proposition}

\newtheorem{corollary}[theorem]{Corollary}
\theoremstyle{definition}

\newtheorem{assumption}[theorem]{Assumption}
\theoremstyle{remark}

\usepackage[textsize=tiny]{todonotes}

\icmltitlerunning{CARE: Confounder-Aware Aggregation for Reliable LLM Evaluation}

\newcommand{\SYSNAME}{\textsc{CARE}}
\newcommand{\op}{\text{op}}
\newcommand{\dec}{\text{dec}}
\begin{document}

\twocolumn[
  \icmltitle{CARE: Confounder-Aware Aggregation for Reliable LLM Evaluation}

  \icmlsetsymbol{equal}{*}

  \begin{icmlauthorlist}
    \icmlauthor{Jitian Zhao}{equal,uw-st}
    \icmlauthor{Changho Shin}{equal,uw}
    \icmlauthor{Tzu-Heng Huang}{uw}
    \icmlauthor{Satya Sai Srinath Namburi GNVV}{uw}
    \icmlauthor{Frederic Sala}{uw}
  \end{icmlauthorlist}
   \icmlaffiliation{uw-st}{
    Department of Statistics,
    University of Wisconsin--Madison,
    Madison, WI, USA}

  \icmlaffiliation{uw}{
    Department of Computer Sciences,
    University of Wisconsin--Madison,
    Madison, WI, USA}

  \icmlcorrespondingauthor{Jitian Zhao}{jzhao326@wisc.edu}
  \icmlcorrespondingauthor{Changho Shin}{cshin23@wisc.edu}

  \icmlkeywords{LLM Evaluation, Weak Supervision, Confounding}

  \vskip 0.3in
]



\printAffiliationsAndNotice{}  

\begin{abstract}
%
LLM-as-a-judge ensembles are the standard paradigm for scalable evaluation, but their aggregation mechanisms suffer from a fundamental flaw: they implicitly assume that judges provide independent estimates of true quality.
However, in practice, LLM judges exhibit correlated errors caused by shared latent confounders---such as verbosity, stylistic preferences, or training artifacts---causing standard aggregation rules like majority vote or averaging to provide little gain or even amplify systematic mistakes.
To address this, we introduce CARE, a confounder-aware aggregation framework that explicitly models LLM judge scores as arising from both a latent true-quality signal and shared confounding factors.
Rather than heuristically re-weighting judges, CARE separates quality from confounders without access to ground-truth labels.
We provide theoretical guarantees for identifiability and finite-sample recovery under shared confounders, and we quantify the systematic bias incurred when aggregation models omit confounding latent factors.
Across 12 public benchmarks spanning continuous scoring, binary classification, and pairwise preference settings, CARE improves aggregation accuracy, reducing error by up to 26.8\%.
\end{abstract}

\section{Introduction}
Large language models (LLMs) have become the workhorse solution for automated evaluation of model-generated outputs.
\emph{LLM-as-a-judge} systems offer a scalable alternative to expert annotation by substantially reducing cost and latency~\citep{zheng2023judging}.
Consequently, a common evaluation paradigm ensembles multiple LLM judges to produce consensus scores \citep{hu2024language}.
While effective, LLM-based evaluation outputs can be biased or only weakly aligned with human judgments.
Moreover, these judges often exhibit correlated errors due to shared training data or modeling choices~\citep{ye2025justice,shi2024judging,wangpandalm,deutsch-etal-2022-limitations,li2025preference}.
As a result, heuristic aggregation strategies---such as majority vote, averaging, or ad-hoc reweighting---can yield diminishing returns and may even amplify shared mistakes.

Several techniques have been proposed to mitigate these issue, including \emph{prompt-level interventions} and \emph{specialized evaluators}~\citep{chen-etal-2024-humans,li2024calibraeval,zhujudgelm,wangpandalm}.
However, at the ensemble level, aggregation still typically defaults to majority voting or simple averaging~\citep{li2024llms}, implicitly assuming that judges are \emph{independent and similarly reliable}---assumptions that often fail in the presence of shared confounders and correlated errors.
Additionally, existing approaches remain largely heuristic: they target specific failure modes or rely on assumptions that may be violated when judges share latent confounders.
These limitations motivate the need for an aggregation framework that can \textbf{\emph{explicitly model and account for shared confounders among judges}}.

We propose \SYSNAME{}, a \emph{confounder-aware aggregation} framework that models evaluations from multiple LLM judges through latent factors corresponding to both true quality and shared confounders.
Rather than treating judges as independent sources of noisy labels, \SYSNAME{} explicitly separates the latent quality signal from spurious influences that simultaneously affect multiple judges.
Specially, we instantiate \SYSNAME{} via two complementary estimators with different scopes:
\begin{itemize}[nosep,topsep=0pt,leftmargin=*]
    \item \textbf{CARE-SVD} leverages sparse-plus-low-rank structure in the judge score matrix to recover a dominant latent quality direction while accounting for shared confounders.
    \item \textbf{CARE-Tensor} builds on the learned dependency structure among judges to form an approximate multi-view representation, enabling identifiable recovery of quality and confounders through tensor decomposition.
\end{itemize}

We provide theoretical analysis establishing identifiability and finite-sample recovery guarantees for latent quality and confounders under shared confounding.
These results characterize regimes in which heuristic aggregation methods are fundamentally misspecified and demonstrate when confounder-aware aggregation can reliably recover latent quality \emph{without access to ground-truth labels}.

\noindent\textbf{Summary of Contributions.}
\begin{enumerate}[leftmargin=*,noitemsep,topsep=0pt]
    \item We introduce \SYSNAME{}, a confounder-aware framework for aggregating LLM-as-a-judge evaluations that explicitly models shared latent confounders among judges.
    \item We develop two complementary estimators, CARE-SVD and CARE-Tensor, which operate under different information regimes and together support continuous, discrete, and preference-based evaluation settings.
    \item We provide theoretical guarantees on identifiability and sample complexity in the presence of shared confounders, clarifying when reliable aggregation is possible.
    \item We empirically demonstrate consistent improvements across diverse public benchmarks, \textbf{\emph{achieving up to 26.8\% error reduction}} over existing aggregation methods.
\end{enumerate}

By explicitly modeling shared confounders during aggregation, \SYSNAME{} offers a principled alternative to heuristic ensembling and substantially improves the reliability of LLM-as-a-judge evaluation.

\section{Background and Overview}
We begin with brief background on automated evaluation using LLM judges and its connection to probabilistic graphical models.

\noindent \textbf{LLM-as-a-judge.} 
%
%
LLMs can act as inexpensive and fast proxies for human raters by assessing model generations via 
(i) \emph{scalar quality scores} (e.g., 1–10 Likert or percentiles) \citep{zhujudgelm,wangpandalm,shi2024judging}, or 
(ii) \emph{pairwise preferences} indicating which of two responses is better, as widely used in RLHF pipelines~\citep{ouyang2022training,bai2022training}. 
%
%
Since individual LLM judges are often biased, recent work \citep{verga2024replacing} deploys \emph{multiple} judges and aggregates their opinions---typically via majority vote or average pooling---to improve robustness.
Our framework builds on this line of work \textbf{\emph{but adopts a more principled approach to multi-judge aggregation by explicitly modeling shared confounders and correlated errors across judges}}.

\noindent \textbf{Graphical Models and Latent-Variable MRFs.} 
Graphical models encode conditional independence structure in multivariate distributions, with Markov Random Fields (MRFs) being particularly well suited for modeling complex dependencies due to their flexibility, efficient inference, and amenability to structure learning.
In the LLM-as-a-judge setting, we use MRFs to jointly model observed judge scores ($J$), shared confounding factors ($C$), and latent quality variables.
This formulation allows us to capture structured dependencies among LLM evaluations while maintaining computationally efficient and statistically tractable inference.
When key influences are unobserved---most notably the true underlying quality of a model output---augmenting an MRF with latent nodes enables recovery of this hidden structure from noisy observations.
This latent-variable MRF perspective is crucial in our context, offering a principled method to estimate the latent, true-quality signal from observable judges’ scores while accounting for correlated judging errors.

\begin{figure*}[t!]
    \centering
    \subfloat[Naive: only true‐quality]{{\includegraphics[width=0.23\textwidth]{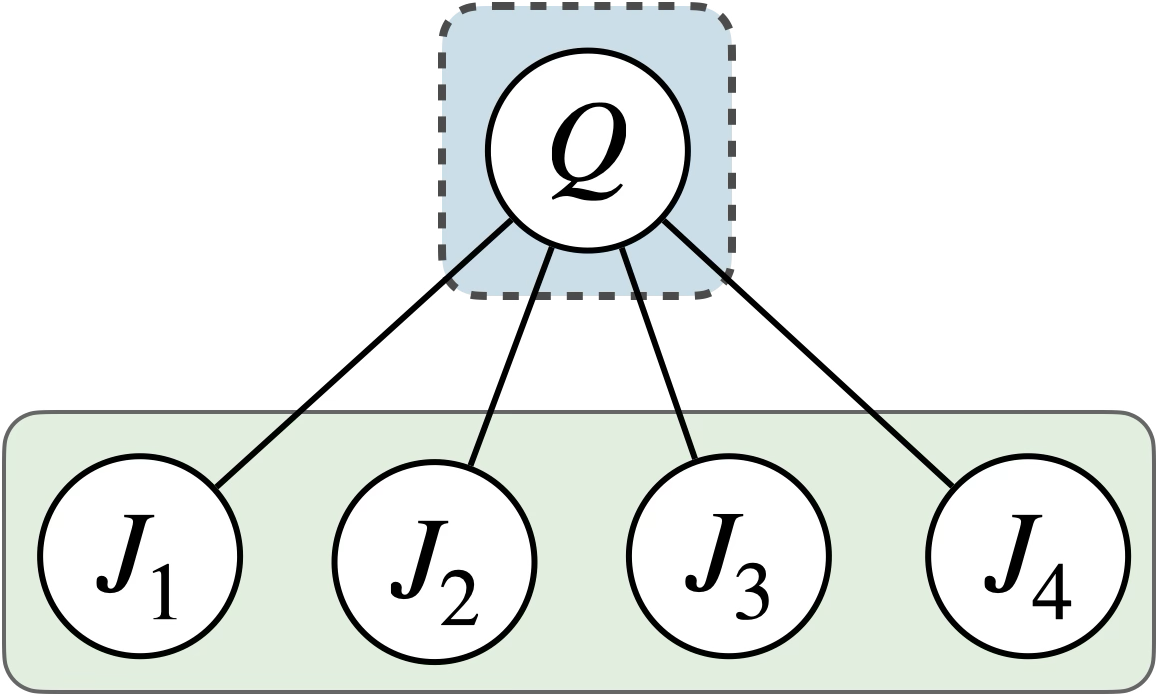}}}%
    \hspace{1cm}
    \subfloat[Connection-aware]
    {{\includegraphics[width=0.23\textwidth]{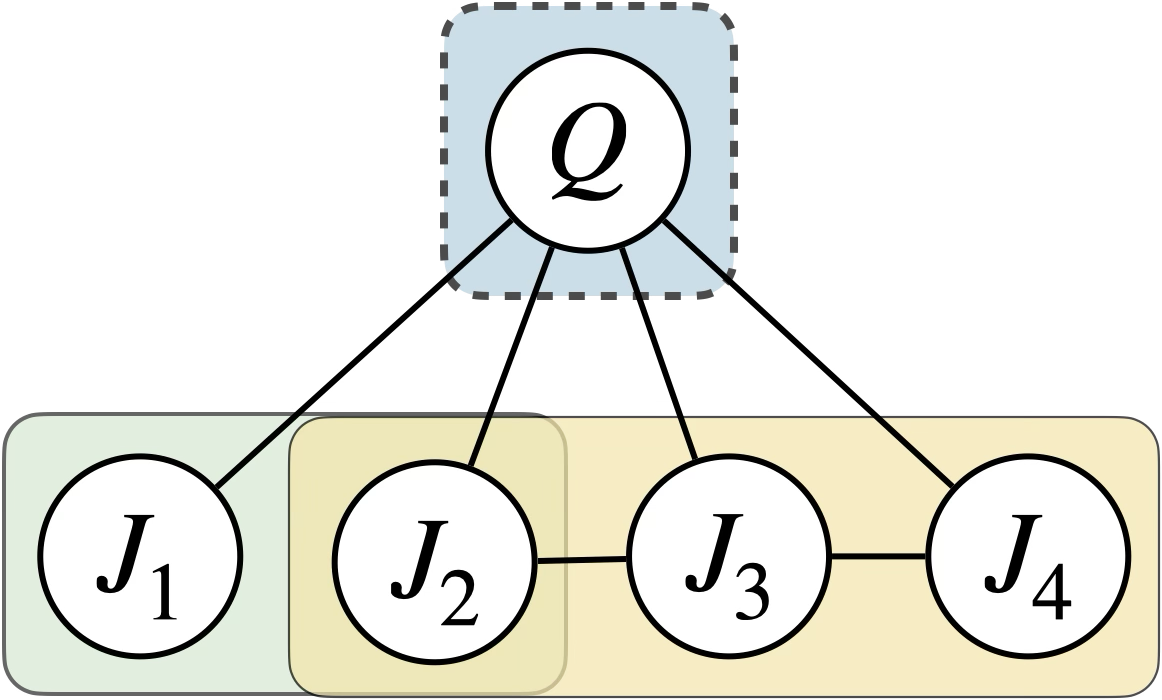}}}%
    \hspace{1cm}
    \subfloat[Confounder-aware (Ours)]
    {{\includegraphics[width=0.23\textwidth]{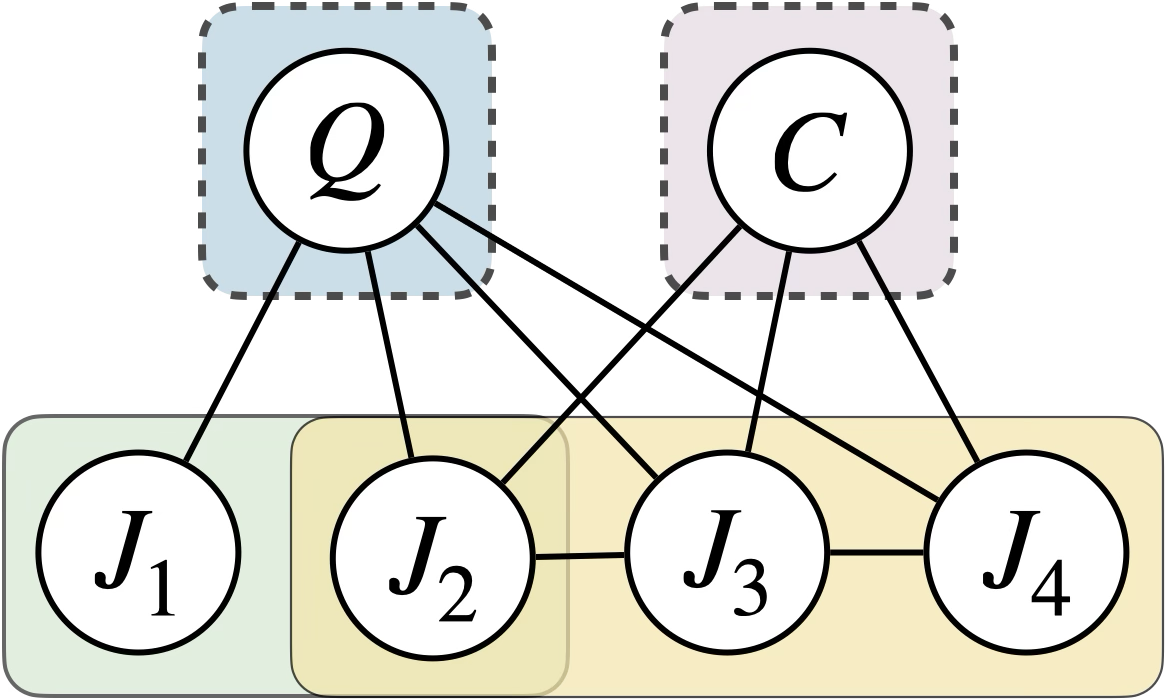}}}
    \vspace{-1mm}
    \caption{\textbf{Graphical models for aggregating judge scores under different structural assumptions.} \textbf{(a)} A naive model assumes scores reflect only a true latent quality ($Q$) and that all judges are equally reliable and represent independent views. \textbf{(b)} Connection-aware approach models intra-judge interactions ($J_2-J_3-J_4$), but still assumes the presence of a single latent quality score. \textbf{(c)} Our Confounder-aware model introduces additional latent confounders ($C$) influencing judge scores.}
    \vspace{-1mm}
    \label{fig:latent_subplots}
\end{figure*}

\section{CARE: Confounder-Aware Aggregation for Reliable Evaluation}\label{sec:method}


We introduce \SYSNAME{} (Confounder-Aware Aggregation for Reliable Evaluation), a graphical model framework and algorithms that aggregates multiple judge scores into an estimate of latent quality while accounting for shared confounders.

\subsection{Framework Setup}
Let $X\in\mathbb{R}^{n\times p}$ be the judge-score matrix, where each row corresponds to an evaluation item and contains the $p$ judge scores. Let $H=(Q,C)\in\mathbb{R}^{k+1}$ collect a true-quality factor $Q$ and $k$ confounders $C$. Our graphical model encodes the conditional independence structure among the nodes in $(J,Q,C)$: if two nodes are not connected by an edge, they are independent conditioned on all other nodes. An example is shown on the right in Fig.~\ref{fig:latent_subplots}. We assume this conditional dependence structure is \emph{sparse}; i.e., there are not too many edges in the graph, and make this precise later on. 

This framework is general and is compatible with a variety of distributions. For example, we may take $J,Q,C$ to involve discrete variables, Gaussians, or mixed. We can take the model to be an MRF or alternatively a mixture model. Our approaches are compatible with a broad range of choices, with practitioners able to select the most suitable modeling assumptions for their settings. In this paper, we focus on two concrete instantiations that match our experimental settings: CARE-SVD for continuous scores under a joint-Gaussian assumption, and CARE-Tensor for discrete or preference-based regimes where tensor methods are applicable. 

\noindent \textbf{Goals and Assumptions.} Under the chosen modeling assumptions, our goal is to learn the distribution over $J,Q,C$. This involves handling \textbf{\emph{three challenges}}. First, \textbf{C1}: \emph{we never observe the latents in $H$}---neither ground truth nor confounders. Second, \textbf{C2}: we \emph{cannot assume any particular interaction} in the graph. Third, \textbf{C3}: even if we recover the model parameters, we must be able to distinguish between $Q$ and the confounders $C$ \emph{to identify the model}. The latter is required to discover \textbf{\emph{which latent is the ground-truth quality---and which is a confounder}}. Once these obstacles are overcome, we seek to perform aggregation, e.g., compute a posterior $P(Q|J)$, the Bayesian estimate for the latent true quality conditioned on all observable judge scores. 


\subsection{CARE Algorithm}
The idea behind CARE is to examine two techniques,  each of which is stymied by one of the obstacles \textbf{C2} or \textbf{C3} and to \emph{delicately combine them in a novel way}. First, the sparsity of the conditional independence graph is encoded into a two-dimensional object that can be empirically estimated (e.g., the observable covariance matrix, or a cross-moment matrix). However, the presence of the latent variables (\textbf{C1}) obscures this structure---but a \emph{sparse + low-rank decomposition} can reveal it \citep{Chandrasekaran_2012}. However, while we can decompose the resulting low-rank term via SVD in the hope of identifying the model, we can only do so \emph{up to rotations} and with additional assumptions. Therefore we are blocked by \textbf{C3}.

Conversely, tensor product decompositions \citep{anandkumar2014tensor} exploit tensor rigidity to enable this decomposition to be uniquely identified. However, for these techniques the judges must be independent conditioned on the latents---and we cannot assume this by \textbf{C2}. 

CARE (Algorithm~\ref{alg:judge-graph}) is a framework with a shared backbone and instantiations that combines these approaches. First, from the observed precision matrix $\hat \Theta(X)$ of judge scores, it estimates the underlying sparse graph structure via the sparse + low-rank decomposition which produces $\hat{S}$ that encodes judge-judge conditional dependency and $\hat{L}$ that encodes latent-judge dependency, overcoming \textbf{C1} and \textbf{C2}. Second, using the recovered sparse structure, CARE selects subsets of judges with sufficient conditional independence, constructs a higher-order moment that can be decomposed to recover latent factors, mitigating \textbf{C3}.

The remaining \emph{label} ambiguity (\textbf{C3}) for deciding which recovered component corresponds to the true-quality factor $Q$ rather than a confounder, can be resolved by a lightweight symmetry-breaking step. This requires a weak assumption on the quality of the judges. In practice, even this assumption can be removed by employing simple heuristics to identify the true-quality factor among the latent factors. Finally, we aggregate judge scores into robust evaluations by weighting according to loadings from the identified quality factor.

CARE provides a shared backbone with model-specific \emph{instantiations}.
CARE-SVD uses only second-order structure and a symmetry-breaking rule to select the quality factor, while CARE-Tensor uses the sparse dependency structure to form approximately independent judge groups (``views'') and then applies tensor decomposition for identifiable recovery in discrete/mixture regimes.
Next, we study two special cases that instantiate this template for different data regimes (fully Gaussian vs.\ mixture settings); more general settings are shown in the Appendix.

\noindent \textbf{CARE for Gaussian Mixtures.}
For illustrative purposes, we demonstrate following case with one binary confounder and one binary true quality variable. However, this can be easily extended to multi-class or multiple confounders through our framework. We refer this method as CARE-Tensor.

Consider a Gaussian mixture with binary latent variables $(Q,C)\in\{0,1\}^2$ and mixing weights $\pi_{qc}$, where
\[
  J \,\big\vert\, (Q=q, C=c) \sim \mathcal{N}(\mu_{qc}, \Sigma), \qquad (q,c)\in\{0,1\}^2.
\]
The sparse\,+\,low-rank decomposition produces $(\hat S,\hat L)$; the learned sparse graph $\hat S$ induces a partition of judges into three conditionally independent groups $X_1,X_2,X_3$. For such a partition, third-order cross-moments factorize:
\begin{multline*}
  \mathbb{E}\bigl[X_1 \otimes X_2 \otimes X_3 \,\big\vert\, Q,C\bigr]\\
  =
  \mathbb{E}[X_1 \mid Q,C] \otimes \mathbb{E}[X_2 \mid Q,C] \otimes \mathbb{E}[X_3 \mid Q,C].
\end{multline*}
Consequently, the population tensor decomposes as
\[
  T := \mathbb{E}[X_1 \otimes X_2 \otimes X_3]
  = \sum_{q,c} \pi_{qc}\, \\
  \mu_{qc}^{(1)} \otimes \mu_{qc}^{(2)} \otimes \mu_{qc}^{(3)},
\]
where $\mu_{qc}^{(k)}$ denotes the mean restricted to group $X_k$. A CP decomposition of $T$ recovers $\{\mu_{qc}^{(k)}\}$, concatenating the groups yields $\{\mu_{qc}\}$, and the weights $\{\pi_{qc}\}$ follow from component norms. With $(\mu_{qc},\Sigma,\pi_{qc})$, Bayes’ rule gives the posteriors
\begin{equation}\label{eq:bayesian_rule}
\begin{aligned}
\alpha_{qc}(J) :=& \Pr(Q=q, C=c \mid J)
  \propto \pi_{qc}\,\phi(J;\mu_{qc},\Sigma),\\[-0.2ex]
\mathbb{E}[Q \mid J] &= \sum_{c\in\{0,1\}} \alpha_{1c}(J).
\end{aligned}
\end{equation}
where $\phi(\cdot;\mu,\Sigma)$ is the Gaussian density. Thus, in the mixed regime CARE leverages third-order moments (enabled by $\hat S$) to identify the conditional means and mixture proportions.

This setup also applies when each judge emits a binary label $X_k \in\{0,1\}^{d_k}$. The sparse-plus-low-rank decomposition and the induced three-view partition are unchanged, and the third-order moment factorization holds with $\mu_{qc}^{(k)} := \mathbb{E}[X_k \mid Q{=}q,C{=}c]\in[0,1]^{d_k}$ interpreted as per-judge success probabilities. A CP decomposition of $T$ recovers $\{\mu_{qc}^{(k)}\}$ and $\{\pi_{qc}\}$ exactly as in the Gaussian case. For the final posterior we adopt a \emph{Gaussian prior} $\alpha_{qc}(J)\propto \pi_{qc}\,\phi(J;\mu_{qc},\Sigma)$ to preserve residual within-view dependence among judges; this leaves the identifiable tensor backbone intact and performs well in practice. In Appendix~\ref{appsec:algorithm_details}, we also cover the fully binary setting, where judges emit binary labels and $(Q,C)$ are binary.

\noindent \textbf{CARE for Fully Gaussian.}
When all variables are jointly Gaussian, odd central moments vanish, so third-order tensors carry no information. CARE therefore relies on second-order structure, which we refer this method as CARE-SVD. Let $
  \Sigma = \Cov\bigl[(J,H)^\top\bigr], \qquad
  K = \Sigma^{-1} =
  \begin{pmatrix}
    K_{JJ} & K_{JH} \\
    K_{HJ} & K_{HH}
  \end{pmatrix}.
$
The marginal precision of $J$ satisfies
\[
  \Theta := \Sigma_{JJ}^{-1} = K_{JJ} - K_{JH}K_{HH}^{-1}K_{HJ} = S - L,
\]
with sparse component $S:=K_{JJ}$ and low-rank component $L:=K_{JH}K_{HH}^{-1}K_{HJ}$. Under the identifiability conditions summarized in Section \ref{sec:theory}, the eigenspace of $\hat L$ recovers latent–judge loadings up to sign and permutation, and we obtain an estimate of the latent quality by aggregating the judges with weights from the identified ``quality'' factor.


\begin{algorithm}[t]
\caption{CARE: Confounder-Aware Aggregation for Reliable Evaluation}
\label{alg:judge-graph}
\begin{algorithmic}[1]        
  \REQUIRE Judge score matrix $X\in\mathbb{R}^{n\times p}$, parameters
           $(\gamma_n, \tau)$, decomposition method $\mathcal{D}\in\{\text{SVD}, \text{Tensor}\}$ 
  \ENSURE  Estimated True Quality $\{\hat{q}^{(i)}\}_{i=1}^{n}$
  \smallskip
  \STATE \textbf{Graph Sparse Structure Estimation:} Compute precision of the observed matrix $\hat \Theta(X)$. 
  \smallskip
  \STATE \textbf{Sparse + low-rank decomposition:} 
    \[
\begin{aligned}
(\hat S,\hat L)\in \operatorname*{arg\,min}_{S,L}\;&
\frac{1}{2}\bigl\lVert \widehat{\Theta}(X)-(S-L)\bigr\rVert_F^{2} \\
&\;+\gamma_n (\lVert S\rVert_{1} +\tau \lVert L\rVert_{\ast}\,).
\end{aligned}
\]
  \STATE \textbf{Latent Factor Extraction:} 
    \IF[Fully Gaussian]{$\mathcal{D}=\text{SVD}$}
        \STATE Compute $U\Lambda U^\top \gets \operatorname{SVD}(\hat L)$, where $U\in\mathbb{R}^{p\times h}$
    \ELSIF[Gaussian mixture]{$\mathcal{D}=\text{Tensor}$}
        \STATE Partition judges into conditionally independent groups with $\hat S$
        \STATE Form empirical third-order tensor from judge groups
        \STATE Run tensor decomposition, obtain latent conditional means $\mu_{qc}$ and mixture proportions $\pi_{qc}$
    \ENDIF
  \smallskip
  \STATE \textbf{Symmetry Breaking:} Identify the true-quality factor using heuristics described in §\ref{subsec:method_heuristic}
  \smallskip
  \STATE \textbf{Latent Quality Estimation: } Use the identified quality factor, compute $\hat{q}^{(i)}$ for each example, where
$\hat{q}^{(i)} = P(Q=1 \mid J_i)$ for mixture model or
$\hat{q}^{(i)} = \mathbb{E}[Q \mid J]$ for fully gaussian. See formulas for each instantiation.
\end{algorithmic}
\end{algorithm}

\subsection{Heuristics for Identifiability and Robust Estimation}\label{subsec:method_heuristic}
Any instantiation of CARE will require symmetry-breaking procedures for latent variable identifiability. For example, the fully Gaussian case needs a heuristic to identify the true-quality direction among latent factors, distinguishing $Q$ from confounders $C$. In the binary-Gaussian mixture scenario, an additional step resolves ambiguity between latent states ($Q=0$ vs. $Q=1$). Doing so will require additional information that can come from modeling assumptions, the use of ground-truth samples, or heuristics. We detail some examples below: 

\noindent \textbf{Identifying True-Quality Factor for Fully Gaussian Model.}
Symmetry breaking in the fully Gaussian setting amounts to selecting which recovered low-rank direction corresponds to the true-quality latent $Q$ (rather than a confounder). Our guiding principle is that quality induces the strongest \emph{shared} variation across judges, whereas confounders are typically weaker or concentrated on a subset.
Accordingly, we take the leading eigenvector of the fitted low-rank component $\hat L$ as the shared ``quality'' axis and treat its associated factor as $Q$. This choice is objective and works well when the dominant shared variation reflects quality. When this assumption may be violated, we use lightweight alternatives (Appendix~\ref{appsec:algorithm_details}): (a) a small human-rated anchor set, or (b) a balanced-loading criterion that favors broadly distributed loadings. We also validate the leading-eigenvector rule empirically in Appendix~\ref{appsec:exp_details}.

\noindent \textbf{Identifying Latent States for Mixed Model.}
After recovering component means $\{\hat\mu_r\}$, we compute the leading eigenvector $v$ of $\hat L$ and score states by $s_r=v^\top\hat\mu_r$; the highest-scoring state(s) are mapped to $Q{=}1$ and the remainder to $Q{=}0$. If labeled anchors are available, we optionally verify this mapping by choosing the state whose means best match the $Q{=}1$ labeled subset (details in Appendix \ref{appsec:algorithm_details}).

\section{Theoretical Analysis}\label{sec:theory}
We formalize when CARE recovers the quality/confounder structure and how much data is needed. The section proceeds in four steps: (i) \emph{exact identifiability} under clean structure; (ii) \emph{stability} under perturbations; (iii) \emph{finite-sample} rates for the joint-Gaussian (spectral) path; and (iv) \emph{sample complexity} for the binary Gaussian mixture (tensor) path. We conclude with a brief remark on misspecification. All technical assumptions and proofs are deferred to the Appendix.

\paragraph{Notation and setup.}
Let $J\in\mathbb{R}^p$ denote judge scores. We write the observable precision as $K_{JJ}=S-L$, where $S$ is sparse and $L$ is low rank. In the joint-Gaussian setup, $L=K_{JH}K_{HH}^{-1}K_{HJ}$ with latent factors $H\in\mathbb{R}^h$. In the mixed binary Gaussian with three conditionally independent judge groups, we exploit a factorized third-order moment built from the three views. 

We start with fully Gaussian case. To ensure that latent directions are recoverable from $L$, we assume latent independence and orthogonality in the latent-observable link. Under these conditions, the columns of $K_{JH}$ (and hence the eigen-directions of $L$) are uniquely identified up to sign and permutation; moreover, the recovery is stable to small violations of orthogonality.

\begin{proposition}[Identifiability and stability of latent--judge directions]\label{prop:ident-stable}
Assume $K_{HH}=\mathrm{diag}(d_1,\dots,d_h)$ with $d_1>\cdots>d_h>0$ and the columns of $K_{JH}$ are orthogonal. Then the columns of $K_{JH}$ (equivalently, the latent directions encoded by $L$) are identifiable from $L$ up to sign and permutation. Moreover, if $K_{JH}$ is perturbed to $\tilde K_{JH}=K_{JH}+E$, letting $\delta_i$ denote the eigengap of $L$ at eigenvector $u_i$, then
\[
\|\hat u_i-u_i\|_2 \lesssim \|K_{HH}^{-1}\|_2\,\frac{\|E\|_2}{\delta_i}\qquad(i\in[h]).
\]
\end{proposition}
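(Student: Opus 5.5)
The approach is to compute $L$ in closed form. With $K_{HH}=\mathrm{diag}(d_1,\dots,d_h)$, write $K_{JH}=[k_1,\dots,k_h]$. Then
\[
L=K_{JH}K_{HH}^{-1}K_{HJ}=\sum_{i=1}^h d_i^{-1}\,k_ik_i^\top .
\]
Because the columns $k_i$ are orthogonal, setting $u_i=k_i/\|k_i\|_2$ turns this into a spectral decomposition of $L$:
\[
L=\sum_i \lambda_i u_iu_i^\top,\qquad \lambda_i=\|k_i\|_2^2/d_i .
\]
Identifiability then reduces to simplicity of the nonzero spectrum. If the $\lambda_i$ are distinct, each eigenvector $u_i$ is determined up to sign, so the columns $k_i=\pm\sqrt{\lambda_i d_i}\,u_i$ are recovered up to sign and permutation.

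The statement only orders the $d_i$, which does not by itself force distinct $\lambda_i$. I would therefore state explicitly, or read into the hypothesis $\delta_i>0$ that appears in the bound, that the $\lambda_i$ are distinct. One option is to normalize the columns to equal norm, in which case distinct $\lambda_i$ follows from distinct $d_i$. Recovering the scale $\|k_i\|$ separately from $d_i$ is not needed for the latent directions, and I would remark on this.

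For stability, the perturbed low-rank part is
\[
\tilde L=(K_{JH}+E)K_{HH}^{-1}(K_{JH}+E)^\top ,
\]
so
\[
\tilde L-L = EK_{HH}^{-1}K_{HJ}+K_{JH}K_{HH}^{-1}E^\top+EK_{HH}^{-1}E^\top .
\]
This gives
\[
\|\tilde L-L\|_2\le \|K_{HH}^{-1}\|_2\bigl(2\|K_{JH}\|_2\|E\|_2+\|E\|_2^2\bigr).
\]
Next I apply the Davis--Kahan $\sin\theta$ theorem in the Yu--Wang--Samworth form. For each simple eigenvalue $\lambda_i$ with gap $\delta_i$, after choosing the sign of $\hat u_i$ appropriately,
\[
\|\hat u_i-u_i\|_2\le 2\sqrt2\,\|\tilde L-L\|_2/\delta_i .
\]
Combining the two bounds gives the claim, with the hidden constant absorbing $2\|K_{JH}\|_2+\|E\|_2$. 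This constant is $O(1)$ when $\|K_{JH}\|_2$ is bounded and $\|E\|_2\le\|K_{JH}\|_2$.

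The main obstacle is bookkeeping rather than depth. First, $\lesssim$ must be made precise: the constant depends on $\|K_{JH}\|_2$, and I would note this and assume $\|E\|_2$ is small relative to $\|K_{JH}\|_2$. Second, the eigengap must be handled correctly. The gap $\delta_i$ must include the separation of $\lambda_i$ from the zero eigenvalue of multiplicity $p-h$, not just from the other $\lambda_j$. I would define $\delta_i=\min_{j\neq i}|\lambda_i-\lambda_j|$ with $\lambda_j=0$ allowed for $j>h$. Third, because $\tilde K_{JH}$ generally no longer has orthogonal columns, $\hat u_i$ is defined as the $i$-th eigenvector of $\tilde L$ rather than as a normalized column. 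Davis--Kahan handles this directly.
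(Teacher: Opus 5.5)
Your proposal is correct and follows the paper's proof essentially step for step. Both write $L=\sum_i d_i^{-1}k_ik_i^\top$ as a spectral decomposition and use distinct eigenvalues for identifiability, then bound the three-term expansion of $\tilde L-L$ by sub-multiplicativity and apply Davis--Kahan with a gap that includes separation from the zero eigenvalue. The only difference is that the paper's appendix assumes orthonormal columns ($K_{JH}^\top K_{JH}=I_h$), so $\lambda_i=1/d_i$ are automatically distinct, $\|K_{JH}\|_2=1$, and the explicit bound is $4\|K_{HH}^{-1}\|_2\|E\|_2/\delta_i+O(\|E\|_2^2)$; that is exactly the normalization you flagged as needed for the stated hypotheses to suffice.
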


We next quantify how many samples suffice to estimate these directions from data via $S{+}L$ estimation, highlighting the role of eigengaps and the curvature of the low-rank tangent space.

\begin{theorem}[Finite-sample recovery for the spectral path]\label{thm:spectral-rate}
Let $L^\star=K_{JH}K_{HH}^{-1}K_{HJ}$ have global eigengap $\delta>0$, and let $\xi(T)$ denote the curvature constant of the tangent space $T(L^\star)$ governing $S{+}L$ estimation.
Under standard identifiability/incoherence/curvature conditions (Appendix \ref{appsec:theory}), for any $\eta>0$, with probability at least $1-2e^{-\eta}$,
\[
\max_{i\le h}\,\|\hat u_i-u_i\|_2
=
O\!\left(\sqrt{\frac{\eta}{n}}\frac{1}{\xi(T)\,\delta}\right).
\]
\end{theorem}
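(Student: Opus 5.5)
The proof composes three ingredients: a concentration bound for the empirical precision matrix, the error guarantee for the sparse-plus-low-rank estimator of \citet{Chandrasekaran_2012}, and a Davis--Kahan perturbation bound for the eigenvectors of $\hat L$. The last step mirrors the stability part of Proposition~\ref{prop:ident-stable}. There, the eigenvector error was controlled by a perturbation of $K_{JH}$. Here, the perturbation is $\Delta_L := \hat L - L^\star$ itself, and the bound reads
\[
\|\hat u_i - u_i\|_2 \lesssim \frac{\|\Delta_L\|_2}{\delta}.
\]
It therefore suffices to show that $\|\Delta_L\|_2 = O\bigl(\sqrt{\eta/n}/\xi(T)\bigr)$ with probability at least $1-2e^{-\eta}$.

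\textbf{Step 1 (concentration).} Let $\hat\Sigma$ be the sample covariance of the $n$ Gaussian judge-score vectors. For fixed $p$, a Gaussian tail bound gives $\|\hat\Sigma - \Sigma_{JJ}\|_2 \lesssim \sqrt{\eta/n}$ with probability at least $1-2e^{-\eta}$; the same holds entrywise via a union bound, with constants absorbed. Concretely, I would apply Bernstein's inequality to the sub-exponential entries of $\hat\Sigma$, or use a covering argument for the operator norm. Since $\Sigma_{JJ}$ is well conditioned, the identity $\hat\Theta - \Theta = -\hat\Theta(\hat\Sigma-\Sigma_{JJ})\Theta$ transfers this to the precision matrix: $\|\hat\Theta - \Theta\|_2 \lesssim \sqrt{\eta/n}$ once $n \gtrsim \eta$.

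\textbf{Step 2 (deterministic $S{+}L$ error bound).} This is the main obstacle. I would invoke the Chandrasekaran et al.\ analysis for the convex program in Algorithm~\ref{alg:judge-graph}. Their conditions are: the transversality of the tangent spaces $\Omega(S^\star)$ and $T(L^\star)$, quantified by $\mu(\Omega)\,\xi(T)$ being small; the Fisher-information (irrepresentability) conditions; and the choice $\gamma_n \asymp \sqrt{\eta/n}/\xi(T)$ with $\tau$ in the admissible range. Under these conditions the program's solution satisfies, in the $\gamma$-weighted dual norm, $\max\bigl(\|\hat S - S^\star\|_\infty, \|\hat L - L^\star\|_2\bigr) \lesssim \gamma_n$, provided $\|\hat\Theta-\Theta\|_2$ is at most a constant multiple of $\xi(T)\gamma_n$.

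The technical care lies in two places. First, I must verify their primal--dual witness construction: solve the program restricted to the pair of tangent spaces, then certify strict dual feasibility on the complements. Second, I must track how $\xi(T)$ enters. It appears as $1/\xi(T)$ through the regularization scaling and the inverse restricted-curvature constant. My objective is a squared Frobenius loss rather than a log-determinant, so the Fisher information is the identity restricted to the tangent spaces, which simplifies the curvature argument.

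\textbf{Step 3 (combine).} Plugging Step 1 into Step 2 gives $\|\Delta_L\|_2 \lesssim \sqrt{\eta/n}/\xi(T)$ on the same high-probability event. For $n$ large enough, $\|\Delta_L\|_2 < \delta/2$, so the rank of $\hat L$ on its top $h$ eigenspace matches that of $L^\star$ and Davis--Kahan applies to each eigenvector using the global gap $\delta$. Each $\hat u_i$ is taken up to sign, choosing $\hat u_i^\top u_i \ge 0$, consistent with the sign ambiguity in Proposition~\ref{prop:ident-stable}. Taking the maximum over $i \le h$ yields the stated rate $O\bigl(\sqrt{\eta/n}\,/(\xi(T)\,\delta)\bigr)$.
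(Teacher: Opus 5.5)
Your proposal follows the paper's proof. The paper likewise takes $\|\hat L_n - L^\star\|_2 \le C_1\sqrt{\eta/n}/\xi(T)$, with probability at least $1-2e^{-\eta}$, from Theorem~4.1 of \citet{Chandrasekaran_2012} (regularization scaled as $\sqrt{\eta/n}/\xi(T)$), and then applies the Davis--Kahan variant of \citet{yu2015useful} eigenpair by eigenpair to obtain the $1/\delta$ factor. You go one step further than the paper by noticing that Algorithm~\ref{alg:judge-graph} uses a squared-Frobenius loss on $\hat\Theta$, not the log-determinant likelihood that Theorem~4.1 actually covers, a mismatch the paper's proof passes over; re-running their witness argument with identity Fisher information is the right fix, provided you follow them in first restricting $L$ to the rank-$h$ variety near $L^\star$ and certifying on the tangent space at that restricted solution, because a solution constrained only to $T(L^\star)$ can have rank up to $2h$.
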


For the mixed binary Gaussian model with three conditionally independent judge groups, we show that the CP (CANDECOMP/PARAFAC) structure of the third-order cross-moment enables recovery of mixture means and weights at a rate that scales with noise, separation (gap), and the number of judges.

\begin{theorem}[Sample complexity for recovering $(\mu_{qc},\pi_{qc})$]\label{thm:tensor-rate}
Assume the three judge groups are conditionally independent given $(Q,C)\in\{0,1\}^2$, the third-order moment admits a CP decomposition with spectral gap $\delta>0$, and the within-component observation covariance $\Sigma$ satisfies
$\|\Sigma\|_{\mathrm{op}} \le \sigma_{\max}^2$. Let $p$ be the total number of judges and $\pi_{\min}=\min_{q,c}\pi_{qc}$. If
\[
n \gtrsim \frac{\sigma_{\max}^6}{\delta^2\,\pi_{\min}^2} p\,\log\!\bigl(p/\varepsilon\bigr),
\]
then with probability at least $1-\varepsilon$,
\begin{align*}
\max_{q,c}\bigl\|\hat{\mu}_{qc}-\mu_{qc}\bigr\|_2
  &\lesssim \frac{\sigma_{\max}^3}{\delta}
    \sqrt{\frac{p\log(p/\varepsilon)}{n}},\\
\max_{q,c}\lvert \hat{\pi}_{qc}-\pi_{qc}\rvert
  &\lesssim \sqrt{\frac{p\log(p/\varepsilon)}{n}}.
\end{align*}
\end{theorem}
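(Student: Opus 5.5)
The argument has three steps: a concentration bound for the empirical cross-moment tensor, a perturbation bound for the CP decomposition, and a translation of that bound into errors on the means and weights.

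\emph{Step 1: concentration of the tensor.} Define
\[
\hat T=\frac{1}{n}\sum_{i=1}^n x_1^{(i)}\otimes x_2^{(i)}\otimes x_3^{(i)} .
\]
Conditional on $(Q,C)$, each view equals its mean plus Gaussian noise with operator norm at most $\sigma_{\max}$. Expanding the product gives one term that is trilinear in the noise, three terms that are bilinear in the noise, and three that are linear in the noise. The trilinear term is a product of three sub-Gaussian factors, so it has scale $\sigma_{\max}^3$; the lower-order terms are dominated once the means are bounded. I would bound $\|\hat T-T\|$ in tensor operator norm (or in the flattened norm used by the whitening step) with an $\epsilon$-net over the three unit spheres, whose union has dimension $p$. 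Combined with sub-Weibull or Hanson--Wright-type tail bounds on a truncation event, this should give, with probability at least $1-\varepsilon/2$,
\[
\|\hat T-T\|\lesssim \sigma_{\max}^3\sqrt{\frac{p\log(p/\varepsilon)}{n}} .
\]
The mixing over the four latent components contributes only a bounded additional mean term.

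\emph{Step 2: CP perturbation.} For the robust tensor power method or simultaneous diagonalization (Anandkumar et al., 2014), after whitening, the components $\pi_{qc}^{1/2}$-rescaled $\mu_{qc}$ are recovered up to error $O(\|\hat T-T\|/\delta)$, provided $\|\hat T-T\|\lesssim \delta\,\pi_{\min}$. This proviso is exactly what the sample-size condition $n\gtrsim \sigma_{\max}^6 p\log(p/\varepsilon)/(\delta^2\pi_{\min}^2)$ ensures. Because the spectral gap $\delta$ is assumed to already absorb the whitening conditioning, the per-view means satisfy
\[
\|\hat\mu^{(k)}_{qc}-\mu^{(k)}_{qc}\|\lesssim \frac{\sigma_{\max}^3}{\delta}\sqrt{\frac{p\log(p/\varepsilon)}{n}} .
\]
Concatenating the three views changes this only by a constant factor, which gives the bound for $\hat\mu_{qc}$. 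The permutation ambiguity is fixed by the component matching, and the label ambiguity by the symmetry-breaking rule of \S\ref{subsec:method_heuristic}.

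\emph{Step 3: weights.} The weights are read off the component norms. I would write $\pi_{qc}=\lambda_{qc}^{-2}$ (or the analogous form from the first-moment equation $\E[X]=\sum_{q,c}\pi_{qc}\mu_{qc}$) and solve a well-conditioned $4\times4$ linear system. Its error is controlled by the mean error plus the first-moment concentration $\sqrt{p\log(1/\varepsilon)/n}$, which yields $|\hat\pi_{qc}-\pi_{qc}|\lesssim\sqrt{p\log(p/\varepsilon)/n}$. In this step the factors $\sigma_{\max}^3$ and $\delta$ are absorbed into constants under the sample-size condition. A final union bound over the two failure events completes the proof.

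The main obstacle is Step 1 with the stated dependence on $p$. A naive bound on the operator norm of a third-order Gaussian tensor scales like $p^{3/2}$, so the $\sqrt{p}$ rate requires working in the whitened, rank-$4$ subspace. I would therefore first project each view onto the span of the $\mu^{(k)}_{qc}$ estimated from the pairwise moments, and only then apply the net argument. The error incurred by this projection then has to be controlled through the second-moment concentration, which also scales like $\sqrt{p\log(p/\varepsilon)/n}$.
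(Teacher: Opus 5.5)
Your architecture is the paper's: concentrate the empirical cross-moment tensor, feed the deviation into the Anandkumar et al.\ perturbation theorem with $\delta$ absorbing all conditioning, concatenate the views, then read off the weights. Your concentration step, however, differs in a way that matters. The paper never projects. It fixes unit $(u,v,w)$, treats $Z_i=\langle X_1^{(i)},u\rangle\langle X_2^{(i)},v\rangle\langle X_3^{(i)},w\rangle$ as sub-exponential with parameter $\sigma_{\max}^3$, applies Bernstein, and union-bounds over a $1/6$-net of the three spheres with at most $13^p$ points. The $\sqrt{p}$ therefore comes from $\log|\mathcal N_1\times\mathcal N_2\times\mathcal N_3|=O(p)$. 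So your Step~1 as written is exactly the paper's argument, and ``a naive operator-norm bound scales like $p^{3/2}$'' is not the right diagnosis.

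The real weakness of that argument is the tail. A product of three Gaussian factors is only sub-Weibull of order $2/3$ (the class you mention), not sub-exponential. With that tail, the union bound over $e^{O(p)}$ directions gives $\sigma_{\max}^3\bigl(\sqrt{p/n}+p^{3/2}/n\bigr)$ up to logarithms. The second term is real: with $\Sigma=\sigma_{\max}^2 I$, a single sample already contributes about $\sigma_{\max}^3p^{3/2}/n$ to the injective norm along its own directions. That term is negligible only when $n\gtrsim p^2$, which the stated condition $n\gtrsim p\log p$ does not ensure. So the paper's displayed tensor bound is too optimistic for $p\lesssim n\lesssim p^2$.

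Your final plan avoids this: project each view onto the rank-$4$ span estimated from $\hat M_{12},\hat M_{13},\hat M_{23}$, and only then run the net. The projected deviation is a $4\times4\times4$ tensor that concentrates at a dimension-free rate. The $\sqrt{p/n}$ then enters only through matrix concentration and Wedin's theorem, which need just $n\gtrsim p$. The price is an extra whitening stage and its conditioning constants, which you, like the paper, must fold into $\delta$ to reach the stated $\sigma_{\max}^3/\delta$ prefactor.

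To make your version clean, two fixes are needed.
\begin{enumerate}
\item Delete the Step~1 claim for the unprojected tensor rather than keeping it alongside the projection.
\item Handle the fact that the projections are estimated from the same data. For example, estimate the subspaces on one half of the sample and form $\hat T$ on the other, so the net argument runs over fixed $4$-dimensional spheres.
\end{enumerate}
The remaining differences from the paper are minor. For the weights, the paper asserts a Lipschitz bound $|\hat w_r-w_r|\le C_\pi\|E\|_{\mathrm{op}}$, whereas you use $\pi_{qc}=\lambda_{qc}^{-2}$ or a $4\times4$ first-moment system. For the sample size, the paper derives $n\gtrsim\sigma_{\max}^6p\log(p/\varepsilon)/\delta^2$ from $\|E\|_{\mathrm{op}}\le c_0\delta$ and then appends $\pi_{\min}^{-2}$ as a ``detectability'' requirement; you obtain it (heuristically) from the perturbation proviso $\|E\|\lesssim\delta\pi_{\min}$. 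Both treatments absorb the mean-dependent terms (linear and bilinear in the noise) into the constants.
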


Finally, when the model omits confounding latent factors, the estimated conditional means incur a systematic bias. We provide an explicit bound on this misspecification error in Appendix \ref{appsec:theory}, quantifying how it scales with the strength of the omitted confounders and their alignment with the recovered directions.

\renewcommand{\arraystretch}{0.9}

\begin{table*}[t!]
\caption{Aggregation performance across different scoring datasets, measured by MAE ($\downarrow$; mean $\pm$ std).}
\vspace{-8pt}
\label{tab:results_scoring}
\centering
\small
\begin{tabular*}{0.98\textwidth}{@{\extracolsep{\fill}}lcccccc}
\toprule
& \textbf{ASSET} & \textbf{FeedbackQA} & \textbf{Review-5K} & \textbf{Summarize} & \textbf{UltraFeedback} & \textbf{Yelp} \\
\midrule
MV       & $31.153 \pm 0.000$ & $0.822 \pm 0.000$ & $2.608 \pm 0.000$ & $1.417 \pm 0.000$ & $0.851 \pm 0.000$ & $0.923 \pm 0.000$ \\
AVG      & $33.663 \pm 0.000$ & $0.830 \pm 0.000$ & $2.274 \pm 0.000$ & $1.394 \pm 0.000$ & $0.686 \pm 0.000$ & $1.037 \pm 0.000$ \\
WS       & $29.073 \pm 0.436$ & $0.793 \pm 0.009$ & $2.593 \pm 0.052$ & $1.364 \pm 0.007$ & $0.829 \pm 0.009$ & $0.977 \pm 0.008$ \\
UWS      & $33.928 \pm 0.000$ & $0.875 \pm 0.000$ & $2.602 \pm 0.000$ & $1.362 \pm 0.000$ & $0.680 \pm 0.000$ & $0.987 \pm 0.000$ \\
\midrule
CARE-SVD & $\mathbf{27.629 \pm 0.156}$ & $\mathbf{0.730 \pm 0.002}$ & $\mathbf{1.957 \pm 0.018}$ & $\mathbf{1.325 \pm 0.004}$ & $\mathbf{0.623 \pm 0.006}$ & $\mathbf{0.694 \pm 0.004}$ \\
\bottomrule
\end{tabular*}
\end{table*}

\begin{table*}[t!]
\caption{Aggregation performance across classification and preference datasets, measured by Accuracy ($\uparrow$).}
\label{tab:results_mix}
\centering
\small
\begin{tabular*}{0.98\textwidth}{@{\extracolsep{\fill}}lcccccc}
\toprule
& \textbf{Chatbot-Arena} & \textbf{CivilComments} & \textbf{PKU-BETTER} & \textbf{PKU-SAFER} & \textbf{SHP} & \textbf{Summarize} \\
\midrule
MV           & $0.517 \pm 0.000$ & $0.691 \pm 0.000$ & $0.701 \pm 0.000$ & $0.698 \pm 0.000$ & $0.626 \pm 0.000$ & $0.600 \pm 0.000$ \\
AVG          & $0.551 \pm 0.000$ & $0.690 \pm 0.000$ & $0.726 \pm 0.000$ & $0.717 \pm 0.000$ & $0.634 \pm 0.000$ & $0.683 \pm 0.000$ \\
WS           & $0.543 \pm 0.004$ & $0.739 \pm 0.003$ & $0.575 \pm 0.002$ & $0.570 \pm 0.001$ & $0.619 \pm 0.005$ & $0.705 \pm 0.002$ \\
UWS          & $0.507 \pm 0.000$ & $0.713 \pm 0.000$ & $0.703 \pm 0.000$ & $0.701 \pm 0.000$ & $0.629 \pm 0.000$ & $0.713 \pm 0.000$ \\
Dawid--Skene & $0.546 \pm 0.000$ & $0.735 \pm 0.000$ & $0.551 \pm 0.000$ & $0.548 \pm 0.000$ & $0.612 \pm 0.000$ & $0.705 \pm 0.000$ \\
GLAD         & $0.510 \pm 0.009$ & $0.695 \pm 0.002$ & $0.697 \pm 0.008$ & $0.671 \pm 0.013$ & $0.644 \pm 0.000$ & $0.718 \pm 0.013$ \\
MACE         & $0.550 \pm 0.000$ & $0.732 \pm 0.000$ & $0.734 \pm 0.000$ & $\mathbf{0.735 \pm 0.000}$ & $0.580 \pm 0.000$ & $0.706 \pm 0.000$ \\
\midrule
CARE-SVD     & $\mathbf{0.580 \pm 0.004}$ & $\mathbf{0.778 \pm 0.004}$ & $0.691 \pm 0.002$ & $0.690 \pm 0.002$ & $0.543 \pm 0.006$ & $0.695 \pm 0.003$ \\
CARE-Tensor  & $0.564 \pm 0.013$ & $0.749 \pm 0.002$ & $\mathbf{0.779 \pm 0.002}$ & $0.731 \pm 0.002$ & $\mathbf{0.695 \pm 0.008}$ & $\mathbf{0.814 \pm 0.001}$ \\
\bottomrule
\end{tabular*}
\end{table*}

\section{Experimental Results}\label{sec: experiment}
We evaluate the effectiveness of CARE across diverse experimental setups, real-world and semi-synthetic scenarios.
Our goal is to validate the following key claims:
%
\begin{itemize}[nosep,topsep=0pt,leftmargin=*]
    \item \textbf{Improving aggregation of LLM judges:} CARE produces more accurate and robust aggregate scores from multiple LLM judges compared to baselines (Section~\ref{subsec:exp_improved_agg}).
    \item \textbf{Interpreting latent confounding factors:} CARE’s latent structure enables diagnosis by revealing associations between latent confounders and observable confounder attributes (e.g., length/style/readability) (Section~\ref{subsec:exp_latent-interpret}).
    \item \textbf{Effective Integration of Program Judges:} CARE integrates programmatic judges, known to have high bias, by explicitly modeling their biases (Section~\ref{subsec:exp_progressive_integration}).
    \item \textbf{Demonstrating Robustness under Controlled Confounding Factors:} CARE remains accurate when evaluations are deliberately affected by controlled biases, as demonstrated by the semi-synthetic data from \citep{chen-etal-2024-humans} (Section~\ref{subsec:exp_robustness}).
    \item \textbf{Effective Defense against Adversarial Answers.} \citet{zhao2025one} shows that inserting only a few tokens can fool LLM judges into producing incorrect evaluations. Building on such setup, we show that CARE provides an effective defense against such attacks (Section~\ref{subsec:adversarial}).
\end{itemize}

\noindent \textbf{Datasets \& Metrics.}
We evaluate on 12 public benchmarks covering continuous scoring (QA/feedback, summarization, reviews) and binary/pairwise classification (toxicity and preference).
We report MAE for scoring datasets and Accuracy for classification/preference datasets; the full dataset list is in Appendix~\ref{appsec:exp_details}.



\noindent \textbf{Baselines.} We compare CARE to following baseline aggregation methods: (i) majority voting (MV), (ii) simple averaging (AVG) \citep{li2024llms}, (iii) discrete-based weak supervision (WS) \citep{bach2018snorkel}, and (iv) continuous-based weak supervision (UWS) \citep{shin21universalizing}. Both WS and UWS are probabilistic graphical models that do not account for confounding factors (WS operates on discrete labels, while UWS handles continuous scores). In the Gaussian mixture setting, where quality factors are treated as discrete, we additionally include classical aggregation baselines: Dawid–Skene \cite{dawid1979}, GLAD \cite{whitehill2009glad}, and MACE \cite{hovy2013mace}.

\noindent \textbf{LLM Judges.} 
We use 11 - 20 LLMs as our judges ranging from 0.6B to 14B parameters and extract their judging scores.
The full model list is provided in Appendix~\ref{appsec:exp_details}.
In scoring tasks, LLM judges assign numeric ratings within ranges aligned to human annotations. In classification tasks, they output scores from 0–9, with 4.5 serving as a threshold for binary decisions, while in preference tasks they assign values from –3 to 3 to indicate the relative strength of preference.

\subsection{Improving Aggregation of LLM judges}\label{subsec:exp_improved_agg}

\noindent \textbf{Setup.}
We compare aggregation methods using CARE-SVD and CARE-Tensor.
To ensure consistency, we adapt the prompt template from \citep{roucher_llm_judge}, modifying it to fit our experimental setup.
The exact prompt used is provided in Appendix~\ref{appsec:exp_details}.

\noindent \textbf{Results.}
Table~\ref{tab:results_scoring} and Table~\ref{tab:results_mix} show that CARE consistently outperforms baseline methods across both scoring and classification/preference tasks.
On scoring datasets, CARE-SVD achieves the lowest MAE in all cases, reducing error by up to 26.8\% compared to MV on UltraFeedback.
Averaged across scoring datasets, CARE-SVD yields a 17.37\% relative improvement over AVG and a 12.75\% improvement over MV.
On classification and preference benchmarks, CARE achieves the best accuracy on 5 of 6 datasets, with CARE-Tensor leading on three (PKU-BETTER, SHP, and Summarize), including a 13.4\% relative improvement in accuracy on Summarize over the strongest baseline.
Overall, these results highlight CARE’s ability to reduce compounding biases and deliver more reliable aggregation across diverse evaluation settings.

\subsection{Interpreting Latent Confounding Factors}
\label{subsec:exp_latent-interpret}

CARE can also be used as a diagnostic tool. Beyond producing an aggregate score, its latent factors can point to response attributes that systematically affect judges but are not necessarily part of the underlying quality signal.
We illustrate this with one representative example for each setting.
The basic procedure is simple: we first convert each learned latent component into an example-level score, and then compare these scores with observable response features such as length, formatting, and readability.
These correlations are meant to help interpret the learned factors; we do not claim that they provide a unique causal explanation for the factor.
\subsubsection{\SYSNAME{}-SVD (Fully Gaussian)}
\label{sec:latent-interpret-fg}

\paragraph{Setup.}
For \SYSNAME{}-SVD, we use the leading eigen-directions of the learned low-rank component $\hat{L}$.
For each example $i$ and latent direction $u_r$, we compute a factor score
$z_r(i) = \langle u_r, J_i \rangle$.
We then correlate each non-quality factor score with a small set of response-level features that capture common sources of judge bias:
\begin{itemize}[leftmargin=*,noitemsep,topsep=0pt]
  \item \textbf{Length/verbosity}: total character length of the response.
  \item \textbf{Sentence structure}: average sentence length.
  \item \textbf{Formatting}: number of numbered-list items.
  \item \textbf{Numeracy/technicality}: amount of number-heavy content after regressing out length.
  \item \textbf{Authority markers}: citation-like markers, such as bracketed references or URLs.
  \item \textbf{Readability/complexity}: SMOG score, an estimated grade level based on words with three or more syllables.
\end{itemize}

\paragraph{Results.}
Figure~\ref{fig:review5k_feature_corr} shows the correlations on Review-5K, the ICLR 2024 peer-review dataset.
The first confounder is most associated with longer reviews: it has a positive correlation with response length
($\rho \approx 0.49$), and negative correlations with average sentence length and SMOG complexity
($\rho \approx -0.39$ and $-0.38$).
The second confounder follows the opposite pattern. It is associated with shorter reviews
($\rho \approx -0.46$), but also with more complex sentence structure and higher readability complexity
($\rho \approx 0.28$--$0.29$), as well as slightly more numeric content
($\rho \approx 0.17$).
These patterns are consistent with plausible peer-review confounders: judges may respond to surface cues such as verbosity, technical density, or writing style, even when these cues are not the same as paper quality.

\begin{figure}[t!]
    \centering
    \includegraphics[width=0.9\linewidth]{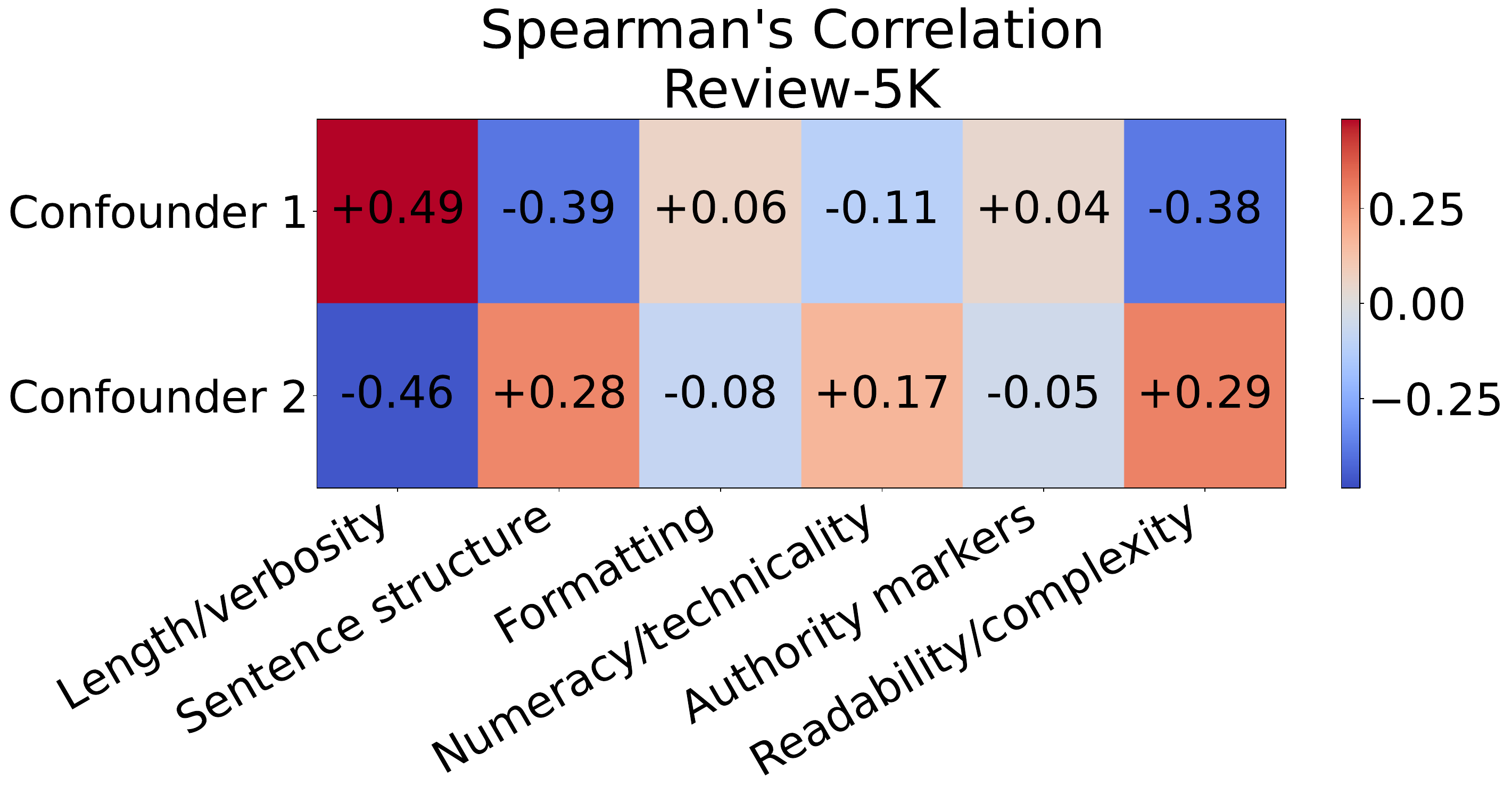}
    \vspace{-3pt}
    \caption{
    Interpreting CARE-SVD latent confounders on Review-5K.
    Heatmap reports Spearman correlations between inferred confounder scores and response features.
    }
    \label{fig:review5k_feature_corr}
\end{figure}

\subsubsection{\SYSNAME{}-Tensor (Gaussian Mixture)}
\label{sec:latent-interpret-gm}

\paragraph{Setup.}
In the \SYSNAME{}-Tensor setting, interpretability is more direct.
We use the posterior confounder probability $\Pr(C=1 \mid J)$ as an example-level confounder score and examine its association with human-interpretable response attributes.
We use the same set of concepts as in the fully Gaussian case and report results on PKU-Safer.

\paragraph{Results.}
\begin{figure}[t!]
    \centering
    \includegraphics[width=0.85\linewidth]{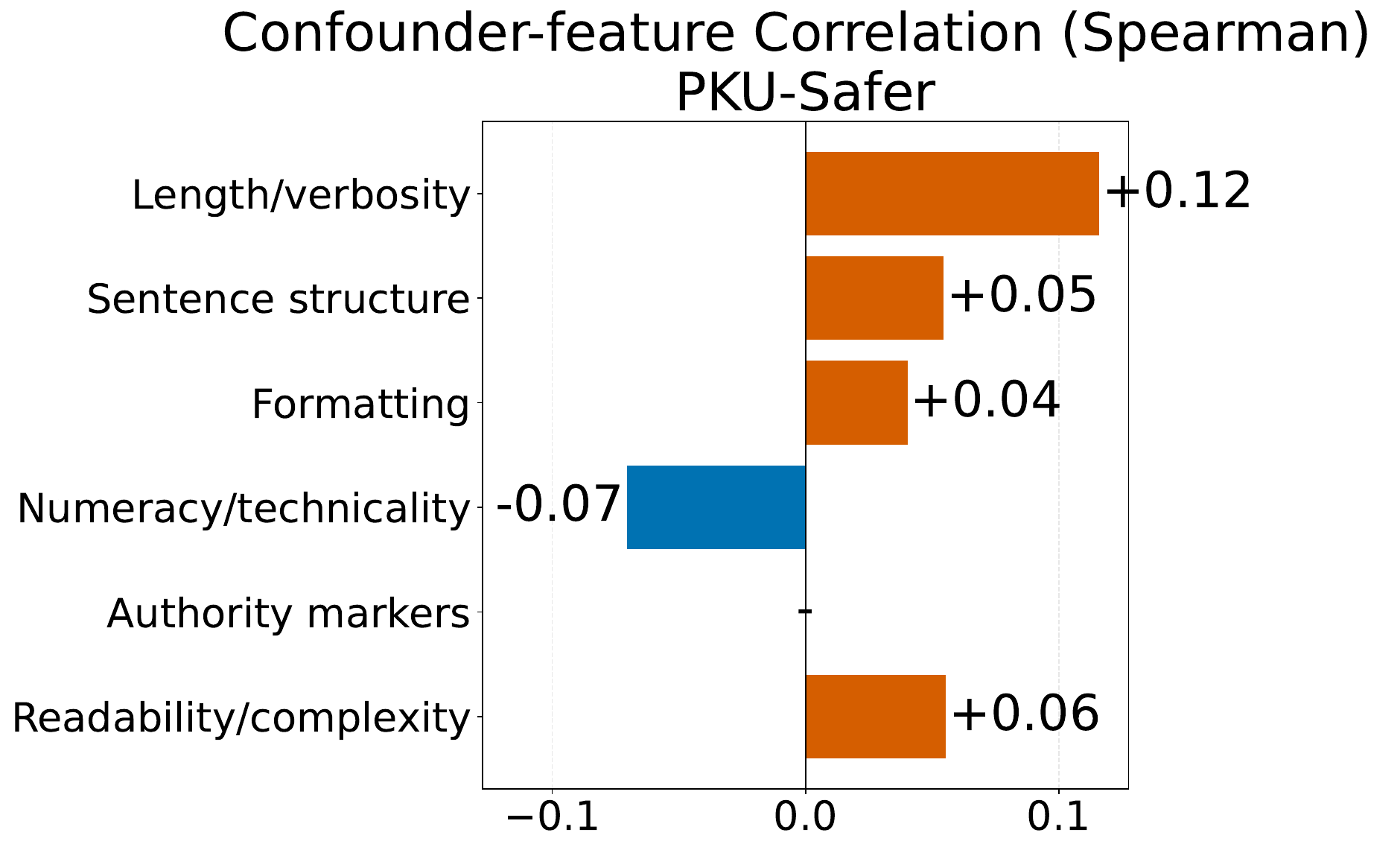}
    \caption{
    Interpreting CARE-Tensor latent confounders on PKU-Safer.
    Bars show Spearman correlations between inferred confounder posteriors and response features.
    }
    \label{fig:pku_safer_tensor_interpret}
\end{figure}

Figure~\ref{fig:pku_safer_tensor_interpret} shows that the inferred confounder is most strongly associated with verbosity-related attributes.
Length/verbosity exhibits the largest positive correlation with $\Pr(C=1 \mid J)$ ($\rho \approx +0.12$), while numeracy/technicality shows a modest negative association ($\rho \approx -0.07$).

\begin{figure}[t!]
    \begin{center}
    \includegraphics[width=0.9\linewidth]{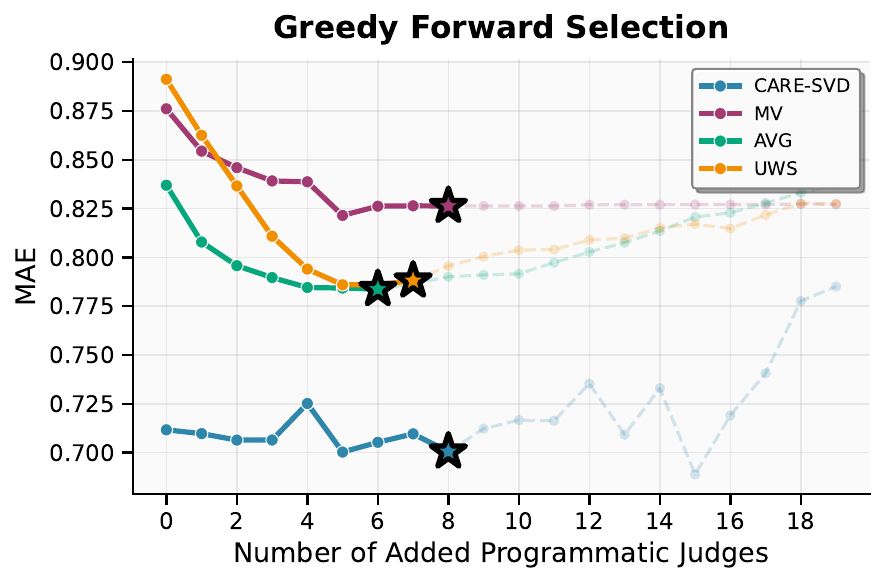}
    \end{center}
    \centering
    \vspace{-12pt}
    \caption{Integration results on FeedbackQA dataset with greedy judge selection guided by validation dataset performance.}
    \label{fig:progressive_judge_selection}
\end{figure}

\subsection{Effective Integration of Programmatic Judges}\label{subsec:exp_progressive_integration}

\noindent \textbf{Setup.}
Next, we evaluate \SYSNAME{} by integrating it with 30 synthesized programmatic judges.
Each judge is constructed by prompting OpenAI’s GPT-4o to generate evaluation logic, which is then compiled into an executable program~\citep{huang2025time}.
Programmatic judges are designed to assess responses along specific dimensions such as \emph{structure, readability, safety, relevance, and factuality}.
While cost-effective, their deterministic nature can introduce \textit{systematic biases}, resulting in noisy evaluation signals.
This setup creates a challenging aggregation scenario to test \SYSNAME{}’s robustness.
Details of the judge generation process are in Appendix~\ref{appsec:exp_details}.

We progressively integrate programmatic judges in a greedy manner: at each step, we add a judge that yields the largest improvement on a validation dataset (10\% of training dataset), measured by MAE.
The process stops when no further MAE reduction occurs for four consecutive steps.
We then evaluate aggregation methods on the \textsc{FeedbackQA} dataset, consistent with previous experiments.

\noindent \textbf{Results.}
Figure~\ref{fig:progressive_judge_selection} shows aggregation performance as judges are progressively added.
With the early-stopping criterion, \SYSNAME{} achieves the lowest error by integrating eight additional programmatic judges, reducing MAE by 15.2\% compared to MV and by 11.1\% compared to UWS.
These results highlight $\SYSNAME$’s ability to adaptively incorporate judges for improved supervision.

\subsection{Robustness to Confounding Factors}\label{subsec:exp_robustness}

\begin{table}[t]
\caption{Robustness to artificially injected bias. Reported as MAE between bias-injected and original versions; lower values indicate stronger robustness.}
\label{tab:robustness_to_confounding_factors}
\centering
\small
\begin{tabular}{lcc}
\toprule
 & \textbf{Beauty Bias} & \textbf{Authority Bias} \\
\midrule
MV       & 0.919 & 0.824 \\
AVG      & 0.506 & 0.325 \\
WS       & 1.229 & 0.754 \\
UWS      & 0.508 & 0.271 \\
\midrule
CARE-SVD & \textbf{0.375} & \textbf{0.233} \\
\bottomrule
\end{tabular}
\end{table}

\noindent \textbf{Setup.}
We assess $\SYSNAME$’s robustness using the dataset from \citet{chen-etal-2024-humans}, where LLM responses are systematically altered through targeted GPT-4 prompts.
We focus on two stylistic biases that preserve semantic correctness: \emph{beauty}, which adds superficial emojis or formatting, and \emph{authority}, which includes a fake citation.
LLM judges assign scores from 1 to 10 for each response.
Robustness is measured by comparing aggregated scores before and after perturbation using MAE, where lower values indicating greater robustness.

\noindent \textbf{Results.}
Table~\ref{tab:robustness_to_confounding_factors} shows that \SYSNAME{} achieves strong robustness against beauty and authority biases, with the lowest MAE compared to baseline methods.
This indicates that \SYSNAME{} effectively mitigates superficial stylistic changes, such as cosmetic formatting or fake citations, while maintaining consistent aggregated evaluations.

\subsection{Defending Against Adversarial Responses}\label{subsec:adversarial}
LLM judges have been shown to be vulnerable to simple token manipulations  (e.g., ``:'' or `` '') and reasoning prompts (e.g., ``Let's think step by step''), often interpreting them as signs of correctness \citep{zhao2025one}.
We hypothesize that \SYSNAME{} can mitigate these vulnerabilities by serving as a defense mechanism against such adversarial triggers.

\noindent \textbf{Setup.}
We evaluate \SYSNAME{} on adversarial responses from \citet{zhao2025one} using the multi-subject RLVR dataset \citep{su2025multisubjectrlvr}.
Each judge is presented with an adversarial response paired with the true reference answer and asked to assess its correctness.
Since adversarial answers are inherently incorrect, any judgment labeling them as correct is a false positive.
We compare MV, WS, and \SYSNAME{}, reporting the false positive rate (lower is better).

\noindent \textbf{Results.}
Table~\ref{table:adversarial_response} reports false positive rates across different adversarial response types.
\SYSNAME{} consistently reduces the false positive rate, often by a substantial margin, without requiring additional interventions.

\begin{table}[t!]
\centering
\small
\caption{False positive rates on adversarial responses. CARE-Tensor can reduce false positives significantly across cases.}
\vspace{-8pt}
\resizebox{0.48\textwidth}{!}{
\begin{tabular}{lccc}
\toprule
\textbf{Attack type} & \textbf{MV} & \textbf{WS} & \textbf{CARE-Tensor} \\
\midrule
``:''                & 0.577 & 0.520 & \textbf{0.496} \\
``,''                & 0.441 & 0.546 & \textbf{0.000} \\
``Solution''         & 0.558 & 0.522 & \textbf{0.371} \\
``Thought process:'' & 0.674 & 0.587 & \textbf{0.000} \\
`` ''                & 0.641 & 0.577 & \textbf{0.432} \\
``Step by step''    & 0.571 & 0.504 & \textbf{0.392} \\
\bottomrule
\end{tabular}
}\vspace{-5mm}
\label{table:adversarial_response}
\end{table}

\section{Related Work}
We review prior work on \emph{bias in LLM-as-a-judge} and \emph{label aggregation}, and position our method in relation to both.
An extended discussion appears in Appendix~\ref{appsec:extended_related_work}.

\paragraph{Bias in LLM-as-a-judge.}
LLMs used as automated evaluators exhibit systematic preferences, including positional, verbosity, authority, and self-enhancement biases~\citep{ye2025justice,zhujudgelm}.
Prior work mitigates these effects primarily at the level of \emph{individual judges}, through prompt-based interventions~\citep{shi2024judging,jiao2024enhancing,ye2025justice} or fine-tuned evaluators such as JudgeLM and PandaLM that better align judgments with human preferences~\citep{zhujudgelm,wangpandalm,li-etal-2024-split}.
In contrast, we target the downstream multi-judge setting: \textbf{\emph{rather than debiasing individual LLMs, we explicitly model shared latent confounders that induce correlated errors across judges, enabling principled aggregation even when single-judge biases persist.}}

\paragraph{Label Aggregation.}
Classic aggregation models such as Dawid–Skene~\citep{dawid1979}, GLAD~\citep{whitehill2009glad}, and MACE~\citep{hovy2013mace} infer latent truth by modeling annotator-specific noise.
Weak supervision frameworks extend this paradigm to programmatic sources~\citep{bach2018snorkel,fu2020fast,shin21universalizing}.
More recently, \citep{hu2024language} propose GED, which ensembles and denoises preference graphs from multiple weak LLM evaluators, while \citet{wang2025improving} analyze inference and aggregation statistics for LLM-as-a-judge outputs.
These methods typically assume conditional independence or unstructured noise; our approach departs from this assumption by \textbf{\emph{explicitly captures shared latent factors (e.g., verbosity or formality) affecting all judges, bridging single-judge bias mitigation and statistical aggregation.}}

\section{Conclusion}\label{sec:conclusion}
We introduce \textsc{CARE}, a confounder-aware aggregation framework that formulates multi-judge scoring in a higher-rank latent-variable model and delivers three main contributions.
\textbf{(i)} It explicitly models shared confounders, providing an aggregation scheme for LLM-judge scenarios.
\textbf{(ii)} It offers statistically principled estimators---sparse-plus-low-rank covariance recovery and tensor method---with provable identifiability.
\textbf{(iii)} It demonstrates consistent empirical gains, improving accuracy and robustness across diverse public benchmarks. 


\section*{Acknowledgements}
We are grateful for the support of the National Science Foundation (NSF) (CCF2106707), the Defense Advanced Research Projects Agency (DARPA Young Faculty Award), and the Wisconsin Alumni Research Foundation (WARF).

\section*{Impact Statement}
\label{appsec:impact}
This work presents a novel approach to aggregate scores from multiple LLMs serving as judges by identifying confounding variables and thus potentially reducing the bias in the overall judge scores.
The potential broader impact includes a framework for improved LLM-as-a-judge scores which can be used at various applications.
However, it is important to acknowledge that using LLMs as potential judges to automate labor-intense annotation tasks which is an active area of research carries some limitations and past research has discussed some unintended consequences, such as over-reliance on judge outputs, misuse and misinterpretation of results which might carry high real-world stakes.
It is crucial to use automated LLM-as-a-judge tools responsibly and ethically, considering potential biases in data and models, and ensuring transparency and accountability in their application.

\bibliography{ref}

\begin{thebibliography}{70}
\providecommand{\natexlab}[1]{#1}
\providecommand{\url}[1]{\texttt{#1}}
\expandafter\ifx\csname urlstyle\endcsname\relax
  \providecommand{\doi}[1]{doi: #1}\else
  \providecommand{\doi}{doi: \begingroup \urlstyle{rm}\Url}\fi

\bibitem[01.AI(2024)]{young2024yi}
01.AI.
\newblock Yi: Open foundation models by 01. ai.
\newblock \emph{arXiv preprint arXiv:2403.04652}, 2024.

\bibitem[Alibaba(2025)]{qwen3}
Alibaba.
\newblock Qwen3, April 2025.
\newblock URL \url{https://qwenlm.github.io/blog/qwen3/}.

\bibitem[Alva-Manchego et~al.(2020)Alva-Manchego, Martin, Bordes, Scarton, Sagot, and Specia]{asset_dataset}
Alva-Manchego, F., Martin, L., Bordes, A., Scarton, C., Sagot, B., and Specia, L.
\newblock {ASSET}: {A} dataset for tuning and evaluation of sentence simplification models with multiple rewriting transformations.
\newblock In \emph{Proceedings of the 58th Annual Meeting of the Association for Computational Linguistics}, pp.\  4668--4679, Online, July 2020. Association for Computational Linguistics.
\newblock URL \url{https://www.aclweb.org/anthology/2020.acl-main.424}.

\bibitem[Anandkumar et~al.(2014)Anandkumar, Ge, Hsu, Kakade, Telgarsky, et~al.]{anandkumar2014tensor}
Anandkumar, A., Ge, R., Hsu, D.~J., Kakade, S.~M., Telgarsky, M., et~al.
\newblock Tensor decompositions for learning latent variable models.
\newblock \emph{J. Mach. Learn. Res.}, 15\penalty0 (1):\penalty0 2773--2832, 2014.

\bibitem[Bach et~al.(2019{\natexlab{a}})Bach, Rodriguez, Liu, Luo, Shao, Xia, Sen, Ratner, Hancock, Alborzi, et~al.]{bach2018snorkel}
Bach, S.~H., Rodriguez, D., Liu, Y., Luo, C., Shao, H., Xia, C., Sen, S., Ratner, A., Hancock, B., Alborzi, H., et~al.
\newblock Snorkel drybell: A case study in deploying weak supervision at industrial scale.
\newblock In \emph{Proceedings of the 2019 International Conference on Management of Data}, pp.\  362--375, 2019{\natexlab{a}}.

\bibitem[Bach et~al.(2019{\natexlab{b}})Bach, Rodriguez, Liu, Luo, Shao, Xia, Sen, Ratner, Hancock, Alborzi, et~al.]{bach2019snorkel}
Bach, S.~H., Rodriguez, D., Liu, Y., Luo, C., Shao, H., Xia, C., Sen, S., Ratner, A., Hancock, B., Alborzi, H., et~al.
\newblock Snorkel drybell: A case study in deploying weak supervision at industrial scale.
\newblock In \emph{Proceedings of the 2019 International Conference on Management of Data}, pp.\  362--375, 2019{\natexlab{b}}.

\bibitem[Bai et~al.(2022)Bai, Jones, Ndousse, Askell, Chen, DasSarma, Drain, Fort, Ganguli, Henighan, et~al.]{bai2022training}
Bai, Y., Jones, A., Ndousse, K., Askell, A., Chen, A., DasSarma, N., Drain, D., Fort, S., Ganguli, D., Henighan, T., et~al.
\newblock Training a helpful and harmless assistant with reinforcement learning from human feedback.
\newblock \emph{arXiv preprint arXiv:2204.05862}, 2022.

\bibitem[Borkan et~al.(2019)Borkan, Dixon, Sorensen, Thain, and Vasserman]{borkan2019civilcomments}
Borkan, D., Dixon, L., Sorensen, J., Thain, N., and Vasserman, L.
\newblock Nuanced metrics for measuring unintended bias with real data for text classification.
\newblock In \emph{Companion Proceedings of The 2019 World Wide Web Conference}, pp.\  491--500, 2019.

\bibitem[Cachay et~al.(2021)Cachay, Boecking, and Dubrawski]{ruhling2021e2ews}
Cachay, S.~R., Boecking, B., and Dubrawski, A.
\newblock End-to-end weak supervision.
\newblock In \emph{Advances in Neural Information Processing Systems}, 2021.

\bibitem[Chandrasekaran et~al.(2012)Chandrasekaran, Parrilo, and Willsky]{Chandrasekaran_2012}
Chandrasekaran, V., Parrilo, P.~A., and Willsky, A.~S.
\newblock Latent variable graphical model selection via convex optimization.
\newblock \emph{The Annals of Statistics}, 40\penalty0 (4), August 2012.
\newblock ISSN 0090-5364.
\newblock \doi{10.1214/11-aos949}.
\newblock URL \url{http://dx.doi.org/10.1214/11-AOS949}.

\bibitem[Chen et~al.(2024)Chen, Chen, Liu, Jiang, and Wang]{chen-etal-2024-humans}
Chen, G.~H., Chen, S., Liu, Z., Jiang, F., and Wang, B.
\newblock Humans or {LLM}s as the judge? a study on judgement bias.
\newblock In \emph{Proceedings of the 2024 Conference on Empirical Methods in Natural Language Processing}, pp.\  8301--8327, Miami, Florida, USA, November 2024. Association for Computational Linguistics.
\newblock \doi{10.18653/v1/2024.emnlp-main.474}.
\newblock URL \url{https://aclanthology.org/2024.emnlp-main.474/}.

\bibitem[Chiang \& yi~Lee(2023)Chiang and yi~Lee]{chiang2023}
Chiang, C.-H. and yi~Lee, H.
\newblock Can large language models be an alternative to human evaluations?
\newblock In \emph{Proceedings of the 61st Annual Meeting of the Association for Computational Linguistics (ACL)}, pp.\  15607--15631, 2023.

\bibitem[Cui et~al.(2023)Cui, Yuan, Ding, Yao, Zhu, Ni, Xie, Liu, and Sun]{cui2023ultrafeedback}
Cui, G., Yuan, L., Ding, N., Yao, G., Zhu, W., Ni, Y., Xie, G., Liu, Z., and Sun, M.
\newblock Ultrafeedback: Boosting language models with high-quality feedback.
\newblock 2023.

\bibitem[Dawid \& Skene(1979)Dawid and Skene]{dawid1979}
Dawid, A.~P. and Skene, A.~M.
\newblock Maximum likelihood estimation of observer error-rates from incomplete data.
\newblock \emph{Journal of the Royal Statistical Society: Series C}, 28\penalty0 (1):\penalty0 20--28, 1979.

\bibitem[DeepMind()]{team2024gemma}
DeepMind, G.
\newblock Gemma 2: Improving open language models at a practical size.

\bibitem[DeepMind(2025)]{team2025gemma}
DeepMind, G.
\newblock Gemma 3 technical report.
\newblock \emph{arXiv preprint arXiv:2503.19786}, 2025.

\bibitem[Deutsch et~al.(2022)Deutsch, Dror, and Roth]{deutsch-etal-2022-limitations}
Deutsch, D., Dror, R., and Roth, D.
\newblock On the limitations of reference-free evaluations of generated text.
\newblock In \emph{Proceedings of the 2022 Conference on Empirical Methods in Natural Language Processing}, pp.\  10960--10977, Abu Dhabi, United Arab Emirates, December 2022. Association for Computational Linguistics.
\newblock \doi{10.18653/v1/2022.emnlp-main.753}.
\newblock URL \url{https://aclanthology.org/2022.emnlp-main.753/}.

\bibitem[Ethayarajh et~al.(2022)Ethayarajh, Choi, and Swayamdipta]{shp_dataset}
Ethayarajh, K., Choi, Y., and Swayamdipta, S.
\newblock Understanding dataset difficulty with $\mathcal{V}$-usable information.
\newblock In \emph{International Conference on Machine Learning}, pp.\  5988--6008. PMLR, 2022.

\bibitem[Fu et~al.(2020)Fu, Chen, Sala, Hooper, Fatahalian, and R\'e]{fu2020fast}
Fu, D.~Y., Chen, M.~F., Sala, F., Hooper, S.~M., Fatahalian, K., and R\'e, C.
\newblock Fast and three-rious: Speeding up weak supervision with triplet methods.
\newblock In \emph{Proceedings of the 37th International Conference on Machine Learning (ICML 2020)}, 2020.

\bibitem[Hooper et~al.(2021)Hooper, Wornow, Seah, Kellman, Xue, Sala, Langlotz, and R\'{e}]{Hooper21}
Hooper, S., Wornow, M., Seah, Y.~H., Kellman, P., Xue, H., Sala, F., Langlotz, C., and R\'{e}, C.
\newblock Cut out the annotator, keep the cutout: better segmentation with weak supervision.
\newblock In \emph{Proceedings of the International Conference on Learning Representations (ICLR 2021)}, 2021.

\bibitem[Hovy et~al.(2013)Hovy, Berg-Kirkpatrick, Vaswani, and Hovy]{hovy2013mace}
Hovy, D., Berg-Kirkpatrick, T., Vaswani, A., and Hovy, E.
\newblock Learning whom to trust with {MACE}.
\newblock In \emph{Proceedings of NAACL‑HLT}, pp.\  1120--1130, 2013.

\bibitem[Hu et~al.(2024{\natexlab{a}})Hu, Zhang, Xiong, Ratner, Xiong, and Krishna]{hu2024ged}
Hu, Z., Zhang, J., Xiong, Z., Ratner, A., Xiong, H., and Krishna, R.
\newblock Language model preference evaluation with multiple weak evaluators.
\newblock \emph{arXiv preprint arXiv:2410.12869}, 2024{\natexlab{a}}.

\bibitem[Hu et~al.(2024{\natexlab{b}})Hu, Zhang, Xiong, Ratner, Xiong, and Krishna]{hu2024language}
Hu, Z., Zhang, J., Xiong, Z., Ratner, A., Xiong, H., and Krishna, R.
\newblock Language model preference evaluation with multiple weak evaluators.
\newblock \emph{arXiv preprint arXiv:2410.12869}, 2024{\natexlab{b}}.

\bibitem[Huang et~al.()Huang, Cao, Bhargava, and Sala]{huangalchemist}
Huang, T.-H., Cao, C., Bhargava, V., and Sala, F.
\newblock The alchemist: Automated labeling 500x cheaper than llm data annotators.
\newblock In \emph{The Thirty-eighth Annual Conference on Neural Information Processing Systems}.

\bibitem[Huang et~al.(2025)Huang, Vishwakarma, and Sala]{huang2025time}
Huang, T.-H., Vishwakarma, H., and Sala, F.
\newblock Time to impeach llm-as-a-judge: Programs are the future of evaluation.
\newblock \emph{arXiv preprint arXiv:2506.10403}, 2025.

\bibitem[Hurst et~al.(2024)Hurst, Lerer, Goucher, Perelman, Ramesh, Clark, Ostrow, Welihinda, Hayes, Radford, et~al.]{hurst2024gpt}
Hurst, A., Lerer, A., Goucher, A.~P., Perelman, A., Ramesh, A., Clark, A., Ostrow, A., Welihinda, A., Hayes, A., Radford, A., et~al.
\newblock Gpt-4o system card.
\newblock \emph{arXiv preprint arXiv:2410.21276}, 2024.

\bibitem[Ji et~al.(2024)Ji, Hong, Zhang, Chen, Dai, Zheng, Qiu, Zhou, Wang, Li, et~al.]{pku_dataset}
Ji, J., Hong, D., Zhang, B., Chen, B., Dai, J., Zheng, B., Qiu, T., Zhou, J., Wang, K., Li, B., et~al.
\newblock Pku-saferlhf: Towards multi-level safety alignment for llms with human preference.
\newblock \emph{arXiv preprint arXiv:2406.15513}, 2024.

\bibitem[Jiao et~al.(2024)Jiao, Zhang, Xu, Li, Du, Wang, and Song]{jiao2024enhancing}
Jiao, T., Zhang, J., Xu, K., Li, R., Du, X., Wang, S., and Song, Z.
\newblock Enhancing fairness in llm evaluations: Unveiling and mitigating biases in standard-answer-based evaluations.
\newblock In \emph{Proceedings of the AAAI Symposium Series}, volume~4, pp.\  56--59, 2024.

\bibitem[Khan et~al.(2024)Khan, Hughes, Valentine, Ruis, Sachan, Radhakrishnan, Grefenstette, Bowman, Rockt{"a}schel, and Perez]{khan2024debating}
Khan, A., Hughes, J., Valentine, D., Ruis, L., Sachan, K., Radhakrishnan, A., Grefenstette, E., Bowman, S.~R., Rockt{"a}schel, T., and Perez, E.
\newblock Debating with more persuasive {LLMs} leads to more truthful answers.
\newblock \emph{arXiv preprint arXiv:2402.06782}, 2024.

\bibitem[Kocmi \& Federmann(2023)Kocmi and Federmann]{kocmi2023}
Kocmi, T. and Federmann, C.
\newblock Large language models are state-of-the-art evaluators of translation quality.
\newblock In \emph{Proceedings of the 24th Annual Conference of the European Association for Machine Translation (EAMT)}, pp.\  193--203, 2023.

\bibitem[Kuang et~al.(2022)Kuang, Arachie, Liang, Narayana, DeSalvo, Quinn, Huang, Downs, and Yang]{kuang2022firebolt}
Kuang, Z., Arachie, C., Liang, B., Narayana, P., DeSalvo, G., Quinn, M., Huang, B., Downs, G., and Yang, Y.
\newblock Firebolt: Weak supervision under weaker assumptions.
\newblock In \emph{Proceedings of the 25th International Conference on Artificial Intelligence and Statistics}, 2022.

\bibitem[Lambert et~al.(2024)Lambert, Pyatkin, Morrison, Miranda, Lin, Chandu, Dziri, Kumar, Zick, Choi, et~al.]{rewardbench}
Lambert, N., Pyatkin, V., Morrison, J., Miranda, L., Lin, B.~Y., Chandu, K., Dziri, N., Kumar, S., Zick, T., Choi, Y., et~al.
\newblock Rewardbench: Evaluating reward models for language modeling.
\newblock \emph{arXiv preprint arXiv:2403.13787}, 2024.

\bibitem[Li et~al.(2025)Li, Sun, Huang, Zhong, Jiang, Han, Zhang, Wang, and Liu]{li2025preference}
Li, D., Sun, R., Huang, Y., Zhong, M., Jiang, B., Han, J., Zhang, X., Wang, W., and Liu, H.
\newblock Preference leakage: A contamination problem in llm-as-a-judge.
\newblock 2025.
\newblock URL \url{https://arxiv.org/abs/2502.01534}.

\bibitem[Li et~al.(2024{\natexlab{a}})Li, Chen, Ai, Chu, Zhou, Dong, and Liu]{li2024calibraeval}
Li, H., Chen, J., Ai, Q., Chu, Z., Zhou, Y., Dong, Q., and Liu, Y.
\newblock Calibraeval: Calibrating prediction distribution to mitigate selection bias in llms-as-judges.
\newblock \emph{arXiv preprint arXiv:2410.15393}, 2024{\natexlab{a}}.

\bibitem[Li et~al.(2024{\natexlab{b}})Li, Dong, Chen, Su, Zhou, Ai, Ye, and Liu]{li2024llms}
Li, H., Dong, Q., Chen, J., Su, H., Zhou, Y., Ai, Q., Ye, Z., and Liu, Y.
\newblock Llms-as-judges: a comprehensive survey on llm-based evaluation methods.
\newblock \emph{arXiv preprint arXiv:2412.05579}, 2024{\natexlab{b}}.

\bibitem[Li et~al.(2024{\natexlab{c}})Li, Patel, and Du]{li2024prd}
Li, R., Patel, T., and Du, X.
\newblock {PRD}: Peer rank and discussion improve large language model based evaluations.
\newblock \emph{Transactions on Machine Learning Research}, 2024{\natexlab{c}}.
\newblock ISSN 2835-8856.
\newblock URL \url{https://openreview.net/forum?id=YVD1QqWRaj}.

\bibitem[Li et~al.(2022)Li, Sharma, Lu, Cheung, and Reddy]{li2022feedbackqa}
Li, Z., Sharma, P., Lu, X.~H., Cheung, J. C.~K., and Reddy, S.
\newblock Using interactive feedback to improve the accuracy and explainability of question answering systems post-deployment.
\newblock In \emph{Findings of the Association for Computational Linguistics: ACL 2022}, pp.\  926--937, 2022.

\bibitem[Li et~al.(2024{\natexlab{d}})Li, Wang, Ma, Wu, Wang, Gao, and Liu]{li-etal-2024-split}
Li, Z., Wang, C., Ma, P., Wu, D., Wang, S., Gao, C., and Liu, Y.
\newblock Split and merge: Aligning position biases in {LLM}-based evaluators.
\newblock In \emph{Proceedings of the 2024 Conference on Empirical Methods in Natural Language Processing}, pp.\  11084--11108, Miami, Florida, USA, November 2024{\natexlab{d}}. Association for Computational Linguistics.
\newblock \doi{10.18653/v1/2024.emnlp-main.621}.
\newblock URL \url{https://aclanthology.org/2024.emnlp-main.621/}.

\bibitem[Meta(2024)]{grattafiori2024llama}
Meta.
\newblock The llama 3 herd of models.
\newblock \emph{arXiv preprint arXiv:2407.21783}, 2024.

\bibitem[Microsoft(2025)]{abouelenin2025phi}
Microsoft.
\newblock Phi-4-mini technical report: Compact yet powerful multimodal language models via mixture-of-loras.
\newblock \emph{arXiv preprint arXiv:2503.01743}, 2025.

\bibitem[Mintz et~al.(2009)Mintz, Bills, Snow, and Jurafsky]{mintz2009distant}
Mintz, M., Bills, S., Snow, R., and Jurafsky, D.
\newblock Distant supervision for relation extraction without labeled data.
\newblock In \emph{Proceedings of the Joint Conference of the 47th Annual Meeting of the ACL and the 4th International Joint Conference on Natural Language Processing of the AFNLP: Volume 2-Volume 2}, pp.\  1003--1011. Association for Computational Linguistics, 2009.

\bibitem[Mistral(2023)]{jiang2023mistral7b}
Mistral.
\newblock Mistral 7b, 2023.
\newblock URL \url{https://arxiv.org/abs/2310.06825}.

\bibitem[Niu et~al.(2012)Niu, Zhang, R{\'e}, and Shavlik]{niu2012deepdive}
Niu, F., Zhang, C., R{\'e}, C., and Shavlik, J.
\newblock Deepdive: Web-scale knowledge-base construction using statistical learning and inference.
\newblock In Brambilla, M., Ceri, S., Furche, T., and Gottlob, G. (eds.), \emph{Proceedings of the Second International Workshop on Searching and Integrating New Web Data Sources (VLDS 2012)}, volume 884 of \emph{CEUR Workshop Proceedings}, pp.\  25--28, Istanbul, Turkey, August 2012.
\newblock URL \url{https://ceur-ws.org/Vol-884/VLDS2012_p25_Niu.pdf}.

\bibitem[Ouyang et~al.(2022)Ouyang, Wu, Jiang, Almeida, Wainwright, Mishkin, Zhang, Agarwal, Slama, Ray, et~al.]{ouyang2022training}
Ouyang, L., Wu, J., Jiang, X., Almeida, D., Wainwright, C., Mishkin, P., Zhang, C., Agarwal, S., Slama, K., Ray, A., et~al.
\newblock Training language models to follow instructions with human feedback.
\newblock \emph{Advances in neural information processing systems}, 35:\penalty0 27730--27744, 2022.

\bibitem[Park et~al.(2024)Park, Jwa, Ren, Kim, and Choi]{park2024offsetbias}
Park, J., Jwa, S., Ren, M., Kim, D., and Choi, S.
\newblock {OffsetBias}: Leveraging debiased data for tuning evaluators.
\newblock In \emph{Findings of the Association for Computational Linguistics: EMNLP 2024}, pp.\  1043--1067, 2024.

\bibitem[Rahmani et~al.(2024)Rahmani, Yilmaz, Craswell, and Mitra]{rahmani2024judgeblender}
Rahmani, H.~A., Yilmaz, E., Craswell, N., and Mitra, B.
\newblock Judgeblender: Ensembling judgments for automatic relevance assessment.
\newblock \emph{arXiv preprint arXiv:2412.13268}, 2024.

\bibitem[Ratner et~al.(2017)Ratner, Sa, Wu, Selsam, and R{\'e}]{ratner2017snorkel}
Ratner, A., Sa, C.~D., Wu, S., Selsam, D., and R{\'e}, C.
\newblock Snorkel: Rapid training data creation with weak supervision.
\newblock In \emph{Proceedings of the VLDB Endowment}, volume~11, pp.\  269--282, 2017.

\bibitem[Roucher(2025)]{roucher_llm_judge}
Roucher, A.
\newblock Using {LLM}-as-a-judge for an automated and versatile evaluation.
\newblock \url{https://huggingface.co/learn/cookbook/en/llm_judge}, 2025.
\newblock Accessed: 2025-05-15.

\bibitem[Shen et~al.(2023)Shen, Cheng, Nguyen, You, and Bing]{shen2023}
Shen, C., Cheng, L., Nguyen, X.-P., You, Y., and Bing, L.
\newblock Large language models are not yet human-level evaluators for abstractive summarization.
\newblock In \emph{Findings of the Association for Computational Linguistics: EMNLP 2023}, pp.\  4215--4233, 2023.

\bibitem[Shi et~al.(2024)Shi, Ma, Liang, Ma, and Vosoughi]{shi2024judging}
Shi, L., Ma, C., Liang, W., Ma, W., and Vosoughi, S.
\newblock Judging the judges: A systematic investigation of position bias in pairwise comparative assessments by llms.
\newblock \emph{arXiv preprint arXiv:2406.07791}, 2024.

\bibitem[Shin et~al.(2022)Shin, Li, Vishwakarma, Roberts, and Sala]{shin21universalizing}
Shin, C., Li, W., Vishwakarma, H., Roberts, N.~C., and Sala, F.
\newblock Universalizing weak supervision.
\newblock In \emph{International Conference on Learning Representations (ICLR)}, 2022.

\bibitem[Shin et~al.(2023)Shin, Cromp, Adila, and Sala]{shin2023mitigating}
Shin, C., Cromp, S., Adila, D., and Sala, F.
\newblock Mitigating source bias for fairer weak supervision.
\newblock In \emph{Advances in Neural Information Processing Systems (NeurIPS)}, 2023.

\bibitem[Stiennon et~al.(2020)Stiennon, Ouyang, Wu, Ziegler, Lowe, Voss, Radford, Amodei, and Christiano]{summarize_from_feedback_dataset}
Stiennon, N., Ouyang, L., Wu, J., Ziegler, D.~M., Lowe, R., Voss, C., Radford, A., Amodei, D., and Christiano, P.
\newblock Learning to summarize from human feedback.
\newblock In \emph{NeurIPS}, 2020.

\bibitem[Su et~al.(2025)Su, Yu, Song, Li, Mi, Tu, Zhang, and Yu]{su2025multisubjectrlvr}
Su, Y., Yu, D., Song, L., Li, J., Mi, H., Tu, Z., Zhang, M., and Yu, D.
\newblock Expanding rl with verifiable rewards across diverse domains.
\newblock \emph{arXiv preprint arXiv:2503.23829}, 2025.

\bibitem[Varma et~al.(2019)Varma, Sala, Sagawa, Fries, Fu, Khattar, Ramamoorthy, Xiao, Fatahalian, Priest, et~al.]{varma2019multi}
Varma, P., Sala, F., Sagawa, S., Fries, J., Fu, D., Khattar, S., Ramamoorthy, A., Xiao, K., Fatahalian, K., Priest, J., et~al.
\newblock Multi-resolution weak supervision for sequential data.
\newblock \emph{Advances in Neural Information Processing Systems}, 32, 2019.

\bibitem[Verga et~al.(2024{\natexlab{a}})Verga, Hofstatter, Althammer, Su, Piktus, Arkhangorodsky, Xu, White, and Lewis]{verga2024replacing}
Verga, P., Hofstatter, S., Althammer, S., Su, Y., Piktus, A., Arkhangorodsky, A., Xu, M., White, N., and Lewis, P.
\newblock Replacing judges with juries: Evaluating llm generations with a panel of diverse models.
\newblock \emph{arXiv preprint arXiv:2404.18796}, 2024{\natexlab{a}}.

\bibitem[Verga et~al.(2024{\natexlab{b}})Verga, Hofst{\"a}tter, Althammer, Su, Piktus, et~al.]{verga2024poll}
Verga, P., Hofst{\"a}tter, S., Althammer, S., Su, Y., Piktus, A., et~al.
\newblock Replacing judges with juries: Evaluating llm generations with a panel of diverse models.
\newblock \emph{arXiv preprint arXiv:2404.18796}, 2024{\natexlab{b}}.

\bibitem[Wang et~al.(2024{\natexlab{a}})Wang, Li, Chen, Cai, Zhu, Lin, Cao, Kong, Liu, Liu, and Sui]{wang2024fair}
Wang, P., Li, L., Chen, L., Cai, Z., Zhu, D., Lin, B., Cao, Y., Kong, L., Liu, Q., Liu, T., and Sui, Z.
\newblock Large language models are not fair evaluators.
\newblock In \emph{Proceedings of the 62nd Annual Meeting of the Association for Computational Linguistics (ACL)}, pp.\  9440--9450, 2024{\natexlab{a}}.

\bibitem[Wang et~al.(2025)Wang, Zhang, and Choi]{wang2025improving}
Wang, V., Zhang, M.~J., and Choi, E.
\newblock Improving llm-as-a-judge inference with the judgment distribution.
\newblock \emph{arXiv preprint arXiv:2503.03064}, 2025.

\bibitem[Wang et~al.(2024{\natexlab{b}})Wang, Yu, Yao, Zeng, Yang, Wang, Chen, Jiang, Xie, Wang, et~al.]{wangpandalm}
Wang, Y., Yu, Z., Yao, W., Zeng, Z., Yang, L., Wang, C., Chen, H., Jiang, C., Xie, R., Wang, J., et~al.
\newblock Pandalm: An automatic evaluation benchmark for llm instruction tuning optimization.
\newblock In \emph{The Twelfth International Conference on Learning Representations}, 2024{\natexlab{b}}.

\bibitem[Weng et~al.(2025)Weng, Zhu, Bao, Zhang, Wang, Zhang, and Yang]{review5k}
Weng, Y., Zhu, M., Bao, G., Zhang, H., Wang, J., Zhang, Y., and Yang, L.
\newblock Cycleresearcher: Improving automated research via automated review.
\newblock In \emph{The Thirteenth International Conference on Learning Representations}, 2025.
\newblock URL \url{https://openreview.net/forum?id=bjcsVLoHYs}.

\bibitem[Whitehill et~al.(2009)Whitehill, Wu, Bergsma, Movellan, and Ruvolo]{whitehill2009glad}
Whitehill, J., Wu, T.-f., Bergsma, J., Movellan, J.~R., and Ruvolo, P.
\newblock Whose vote should count more? optimal integration of labels from labelers of unknown expertise.
\newblock In \emph{Advances in Neural Information Processing Systems}, volume~22, pp.\  2035--2043, 2009.

\bibitem[Ye et~al.(2025)Ye, Wang, Huang, Chen, Zhang, Moniz, Gao, Geyer, Huang, Chen, Chawla, and Zhang]{ye2025justice}
Ye, J., Wang, Y., Huang, Y., Chen, D., Zhang, Q., Moniz, N., Gao, T., Geyer, W., Huang, C., Chen, P.-Y., Chawla, N.~V., and Zhang, X.
\newblock Justice or prejudice? quantifying biase in {LLM}-as-a-judge.
\newblock In \emph{The Thirteenth International Conference on Learning Representations}, 2025.
\newblock URL \url{https://openreview.net/forum?id=3GTtZFiajM}.

\bibitem[Yu et~al.(2015)Yu, Wang, and Samworth]{yu2015useful}
Yu, Y., Wang, T., and Samworth, R.~J.
\newblock A useful variant of the davis--kahan theorem for statisticians.
\newblock \emph{Biometrika}, 102\penalty0 (2):\penalty0 315--323, 2015.

\bibitem[Zeng et~al.(2024)Zeng, Yu, Gao, Meng, Goyal, and Chen]{zeng2024}
Zeng, Z., Yu, J., Gao, T., Meng, Y., Goyal, T., and Chen, D.
\newblock Evaluating large language models at evaluating instruction following.
\newblock In \emph{Proceedings of the 12th International Conference on Learning Representations (ICLR)}, 2024.

\bibitem[Zhang et~al.(2015)Zhang, Zhao, and LeCun]{yelp}
Zhang, X., Zhao, J., and LeCun, Y.
\newblock Character-level convolutional networks for text classification.
\newblock \emph{Advances in neural information processing systems}, 28, 2015.

\bibitem[Zhao et~al.(2025)Zhao, Liu, Yu, Kung, Mi, and Yu]{zhao2025one}
Zhao, Y., Liu, H., Yu, D., Kung, S., Mi, H., and Yu, D.
\newblock One token to fool llm-as-a-judge.
\newblock \emph{arXiv preprint arXiv:2507.08794}, 2025.

\bibitem[Zheng et~al.(2023{\natexlab{a}})Zheng, Chiang, Sheng, Zhuang, Wu, Zhuang, Lin, Li, Li, Xing, Zhang, Gonzalez, and Stoica]{zheng2023judging}
Zheng, L., Chiang, W.-L., Sheng, Y., Zhuang, S., Wu, Z., Zhuang, Y., Lin, Z., Li, Z., Li, D., Xing, E., Zhang, H., Gonzalez, J.~E., and Stoica, I.
\newblock Judging {LLM}-as-a-judge with {MT}-bench and chatbot arena.
\newblock In \emph{Thirty-seventh Conference on Neural Information Processing Systems Datasets and Benchmarks Track}, 2023{\natexlab{a}}.
\newblock URL \url{https://openreview.net/forum?id=uccHPGDlao}.

\bibitem[Zheng et~al.(2023{\natexlab{b}})Zheng, Chiang, Sheng, Zhuang, Wu, Zhuang, Lin, Li, Li, Xing, Zhang, Gonzalez, and Stoica]{chatbot_arena_mtbench_dataset}
Zheng, L., Chiang, W.-L., Sheng, Y., Zhuang, S., Wu, Z., Zhuang, Y., Lin, Z., Li, Z., Li, D., Xing, E.~P., Zhang, H., Gonzalez, J.~E., and Stoica, I.
\newblock Judging llm-as-a-judge with mt-bench and chatbot arena, 2023{\natexlab{b}}.

\bibitem[Zhu et~al.(2025)Zhu, Wang, and Wang]{zhujudgelm}
Zhu, L., Wang, X., and Wang, X.
\newblock Judgelm: Fine-tuned large language models are scalable judges.
\newblock In \emph{The Thirteenth International Conference on Learning Representations}, 2025.

\end{thebibliography}
\bibliographystyle{icml2026}

\newpage
\appendix
\onecolumn

The appendix is structured as follows.
It starts with the glossary table, defining key notations used throughout the paper in Appendix \ref{sec:app:gloss}.
Next, Appendix~\ref{appsec:extended_related_work} discusses additional related work. 
In Appendix \ref{appsec:algorithm_details}, we introduce details about our tensor-based CARE algorithm, discussion for general CARE method, and additional discussion about method heuristics.
Following this, Appendix~\ref{appsec:theory} offers theoretical support of our approach and supported proofs.
It includes the graphical model formulation, graph structure recovery error bound, sample complexity, and the misspecification error arising from incorrectly characterized confounding factors. 
Subsequently, Appendix~\ref{appsec:exp_details} provides experimental details and additional experiment results.
Finally, Appendix~\ref{appsec:impact} concludes by discussing the broader impacts and limitations of the work.

\section{Glossary} \label{sec:app:gloss}

The notations are summarized in Table~\ref{table:glossary}. 

\begin{table*}[h]
\centering
\caption{Glossary of variables and symbols used in this paper.}
\label{table:glossary}
\resizebox{\linewidth}{!}{%
\begin{tabular}{l l}
\toprule
Symbol & Definition \\
\midrule
$(J_1,\dots,J_p)$ & $p$ vector of Judges score \\
$Q$ & True-quality latent variable  \\
$(C_1,\dots,C_k)$ & $k$ latent confounder variables \\
$H$ & All the hidden variables (true + confounder) i.e ($Q$  $C_1,\dots,C_k$) \\
$h$ & dimension of $H$ i.e all hidden variables = $k$ + 1\\
$X$ & Score matrix of dimension ($n \times p$) where $n$ is the number of examples and $p$ is the number of judges \\
\midrule
$K$ & Precision matrix \\
$K_{oo}$ & Observable-observable connection matrix\\
$K_{oh}$ & Observable-latent connection matrix\\
$K_{hh}$ & Latent-latent connection matrix \\
$\Sigma_o$ & Covariance matrix of observable variables \\
$S$ & Sparse matrix ($\mathbb{R}^{p\times p}$) which encodes edges between judges \\
$L$ & Low-rank matrix (with $rank(L)\le h$) which captures dependencies mediated by latent variables \\
$R$ & Rotation matrix ($\mathbb{R}^{h\times h}$) \\
\midrule
$\gamma_n$ & Regularization for sparse and low-rank matrix $S$ in Algorithm \ref{alg:judge-graph}\\
$\tau$ & Regularization for low-rank matrix $L$ in Algorithm \ref{alg:judge-graph} \\
$\hat s_{\mathrm{agg}}^{(i)}$ & Aggregated scores for $i$th example in the dataset from $p$ judges \\
$\hat\Sigma$ & Sample covariance matrix \\
$\hat \Theta$ & Sample precision matrix \\
$\hat S$ & Sample Sparse matrix ($\mathbb{R}^{p\times p}$) which encodes direct connectional edges among judges \\
$\hat L$ & Sample Low-rank matrix (with $rank(L)\le h$) which captures dependencies mediated by latent variables \\
$U$ & Latent factor extraction matrix i.e latent-judge connections ($\mathbb{R}^{p\times h}$) from Algorithm \ref{alg:judge-graph} \\
$\Theta$ & Precision matrix\\ 
$w$ & Weight for aggregating judges\\
$\lambda$ & Singular values of $L$\\
$u ^ \star$ & Singular vector of $L$ corresponds to true quality factor\\
$\lambda ^ \star$ & Singular value of $L$ that corresponds to true quality factor\\
$\mu_{qc}$ & Conditional mean of judges given $Q=q, C=c$\\
$\pi_{qc}$ & Probability of $Q=q, C=c$\\
$\alpha_{qc}(J)$ & Posterior responsibility $\Pr(Q{=}q,\,C{=}c \mid J)$. Used to aggregate component-wise inferences\\
$\phi(J;\mu,\Sigma)$ & Gaussian density used in Bayes updates\\
$\hat \mu_{qc}$ & Estimated conditional mean of judges given $Q=q, C=c$\\
$\hat \pi_{qc}$ & Estimation of probability of $Q=q, C=c$ \\
$\{\mathcal G_\ell\}_{\ell=1}^{3}$ & Groups of judges that are independent of judges outside the group\\
$\hat T$ & Empirical 3-way tensor\\
$\hat \mu^{(1)}_{qc}$, $\hat \mu^{(2)}_{qc}$, $\hat \mu^{(3)}_{qc}$ & Estimated conditional mean of three views\\
$\hat\mu_{\rho(r)}$ & Estimated conditional mean of judges after permutation \\
$\mu_{\text{anchor}(r)}$ & Conditional mean of anchor sets\\
\hline
\end{tabular}}
\end{table*}

\newpage
\section{Extended Related Work}\label{appsec:extended_related_work} 

Here we discuss other related works, including biases in the use of LLM-as-a-judge, and aggregation methods in the weak supervision literature.
\subsection{Biases in \texorpdfstring{LLM\textendash as\textendash a\textendash Judge}{LLM-as-a-Judge}}
LLMs have rapidly become the standard tools for automatically evaluating generation tasks, as their judgments show strong correlations with those of human annotators---particularly in areas such as translation and summarization~\citep{kocmi2023,shen2023,chiang2023}.
Yet a growing body of work shows that these models are far from impartial.
\textbf{Positional bias}---preferring the \emph{second} answer in a pairwise comparison---was first noted in MT-Bench \citep{zheng2023judging} and later quantified in detail by \citep{wang2024fair}, who observed reversals of up to 30\% when simply swapping order.
\textbf{Verbosity bias}, wherein longer answers receive higher scores regardless of quality, is highlighted by \citep{chen-etal-2024-humans}.
LLM judges also display \textbf{self‐enhancement bias}, overrating responses produced by models from the same family \citep{zeng2024}.
Less studied but equally problematic are \textbf{concreteness/authority biases}: judges over-reward answers that contain citations, numbers, or confident tone even when these features are irrelevant \citep{park2024offsetbias}.

Mitigation strategies span two levels.
\emph{Prompt-level interventions} randomize answer order, enforce symmetric formatting, and instruct the judge to ignore superficial features \citep{wang2024fair,li-etal-2024-split}.
Adding chain-of-thought rationales or decomposing the rubric into sub-criteria (accuracy, conciseness, style) also moderates shallow heuristics \citep{khan2024debating}.  
On the \emph{model level}, fine-tuned evaluators such as JudgeLM \citep{zhujudgelm} and Split-and-Merge Judge \citep{li-etal-2024-split} are trained on curated data that explicitly counter positional and length biases.
Peer-review and debate schemes go a step further: PRD lets a second LLM critique the first judge and often corrects biased decisions \citep{li2024prd}, while \citep{khan2024debating} show that dialog with a more persuasive model leads to more truthful verdicts.

Despite progress, most debiasing work treats a \emph{single} judge in isolation.
When evaluations aggregate many LLM scorers---for robustness, cost sharing, or diversity---\textit{biases can compound in complex ways that individual fixes do not capture}.

\subsection{Label Aggregation for Multiple Noisy Evaluators}
\label{appsec:label_agg_appendix}
\paragraph{Weak-supervision.}
Treating each LLM prompt or model as a noisy \emph{labeling function} aligns aggregation with modern weak supervision.
Snorkel~\citep{ratner2017snorkel,bach2019snorkel} estimates source accuracies and dependencies to denoise programmatic labels, laying the foundation for LLM-prompt aggregation.
Across domains, weak supervision has powered industrial-scale text classification and content understanding, knowledge-base construction and distant-supervision relation extraction, multi-resolution labeling of sequential video/sensor streams, and weakly supervised medical-image segmentation~\citep{ratner2017snorkel,bach2019snorkel,niu2012deepdive,mintz2009distant,varma2019multi,Hooper21}.
On more technical side, \citet{fu2020fast} introduces a scalable moment-matching estimator with closed-form weights.
\citep{shin21universalizing} generalizes label models beyond categorical labels to arbitrary metric spaces, greatly expanding their applicability.
\citet{ruhling2021e2ews} jointly optimizes a classifier and a differentiable label model, outperforming two-stage pipelines when sources are dependent.
Firebolt further removes requirements on known class priors or source independence, estimating class-specific accuracies and correlations in closed form~\citep{kuang2022firebolt}. 
\citet{shin2023mitigating} shows that fixing source bias in labeling functions using optimal transport can improve both accuracy and fairness.
\paragraph{Aggregation of multiple \emph{LLM} judges.}
Recent work shows that \emph{ensembling smaller evaluators can beat a single large judge}.
The \textbf{PoLL} jury combines three diverse 7–35B models and attains higher correlation with human ratings than GPT-4 while costing 7× less and reducing bias \citep{verga2024poll}.
\textbf{GED} merges preference graphs from weak evaluators (Llama3-8B, Mistral-7B, Qwen2-7B) and denoises cycles; its DAG ranking surpasses a single 72B judge on ten benchmarks \citep{hu2024ged}. 
\textbf{JudgeBlender} ensembles either multiple models or multiple prompts, improving precision and consistency of relevance judgments over any individual LLM \citep{rahmani2024judgeblender}.
These findings echo classic “wisdom-of-crowds’’ results---when paired with principled aggregation, a panel of smaller, heterogeneous judges can outperform a much larger model, offering a practical path toward reliable, low-cost evaluation.
\subsection{Our Contribution in Context}
Prior research either (i) debiases one judge at a time or (ii) aggregates multiple judges assuming independent noise.
Our confounder-aware aggregation unifies these threads.
We posit latent factors (e.g., verbosity, formality) that influence \emph{all} judges simultaneously and show how to infer both the latent truth and the shared confounders.
This yields more reliable consensus scores when individual judges---human or LLM---share systemic biases.

\section{Algorithm Details}
\label{appsec:algorithm_details}
This section details the implementation of our CARE framework. Specifically, it includes the full CARE tensor algorithm, details about SVD baseline method for comparing our tensor-based algorithm, generalizations beyond Gaussian assumptions, and practical heuristics to address non-orthogonality in latent factors and justification for where the sparse structure lies in mixed Gaussian data.

\subsection{Tensor-based CARE Algorithm}
\begin{algorithm}[htbp]
\caption{CARE-Tensor}
\label{alg:graph_tensor}
\begin{algorithmic}[1]
\REQUIRE
    Score matrix $J\in\mathbb R^{n\times p}$, tolerance $\varepsilon$.

\ENSURE
    Estimates
    $\bigl\{\hat\mu_{qc},\hat\pi_{qc}\bigr\}_{q,c\in\{0,1\}}$.

\smallskip
\STATE \textbf{A.\;Anchor discovery (graph partition)}
    \STATE Compute the sample covariance $\hat\Sigma=J^\top J/n$
    \STATE Compute the precision $\hat\Theta =\hat\Sigma^{-1}$
    \STATE Perform the sparse$+$low-rank decomposition $\hat\Theta \approx \hat S-\hat L$ (Alg.~\ref{alg:judge-graph} Step 2).

    \STATE Partition judges into three disjoint groups $\{\mathcal G_\ell\}_{\ell=1}^{3}$ that satisfy
          \[
            a\!\neq\!b,\;
            j_1\!\in\!\mathcal G_a,\;
            j_2\!\in\!\mathcal G_b
            \;\Longrightarrow\;
            |\hat S_{j_1,j_2}|\le\varepsilon,
          \]
        ensuring no direct edges with strength greater than $\varepsilon$ can exist across groups.

\smallskip

\smallskip
\STATE \textbf{B.\;Empirical third-order moment tensor}
    \FOR{$\ell=1,2,3$}
        \STATE $X_\ell \gets$ columns of $J$ indexed by $\mathcal G_\ell$
              \COMMENT{$X_\ell\in\mathbb R^{n\times|\mathcal G_\ell|}$}
    \ENDFOR
    \STATE Compute
          \[
            \hat T
              =\frac1n\sum_{i=1}^{n}
                X_1^{(i)}\otimes X_2^{(i)}\otimes X_3^{(i)}
              \;\in\;
              \mathbb R^{|\mathcal G_1|\times|\mathcal G_2|\times|\mathcal G_3|}.
          \]

\smallskip
\STATE \textbf{C.\;Tensor decomposition}
    \STATE Run a CP tensor-power decomposition on $\hat T$
          to obtain $k$ = 4 components
          
          \(
          \bigl\{(\hat \pi_{qc},
                  \hat\mu^{(1)}_{qc},
                  \hat\mu^{(2)}_{qc},
                  \hat\mu^{(3)}_{qc})\bigr\}_{q,c \in \{0,1\}^2},
          \)
          where $\hat \pi_{qc}>0$ and
          $\hat\mu^{(\ell)}_{qc}\in\mathbb R^{|\mathcal G_\ell|}$.

\smallskip
\STATE \textbf{D.\;Assemble full means}
    \FOR{$q,c \in \{0,1\}^2$}
        \STATE $\hat\mu_{qc} \gets
          \operatorname{concat}\!\bigl(
            \hat\mu^{(1)}_{qc},\hat\mu^{(2)}_{qc},\hat\mu^{(3)}_{qc}
          \bigr)\in\mathbb R^{p}$.
    \ENDFOR
\smallskip
\STATE \textbf{E.\;Unsupervised state identification (quality axis)}
\STATE $v \gets \textsc{TopEigenvector}(\hat L)$.
\STATE $s_r \gets v^\top \hat\mu_r \quad (r=1,\dots,4)$.
\STATE $\mathcal S_{Q=1} \gets \textsc{TopTwoIndices}(s_1,\ldots,s_4)$; 
       $\mathcal S_{Q=0} \gets \{1,2,3,4\}\setminus \mathcal S_{Q=1}$.
      \COMMENT{two highest scores $\rightarrow Q{=}1$; remainder $\rightarrow Q{=}0$}


\smallskip
\STATE \textbf{F. Optional state alignment with anchors}
    \STATE Find the permutation $\rho$ of $\{1,\dots,4\}$ that minimizes
          $\sum_{r=1}^{4}
            \bigl\lVert
              \hat\mu_{\rho(r)}-\mu_{\text{anchor}(r)}
            \bigr\rVert_2^{2}$,
          where the four anchor prototypes correspond to $(Q,C)\!=\!\{00,01,10,11\}$.
    \STATE
      \(
        (\hat\mu_{00},\hat\mu_{01},\hat\mu_{10},\hat\mu_{11})
        \gets
        (\hat\mu_{\rho(1)},\hat\mu_{\rho(2)},\hat\mu_{\rho(3)},\hat\mu_{\rho(4)}).
      \)
      
\smallskip
\STATE \textbf{G.\;Mixing weights}
    \STATE
      \(
        (\hat\pi_{00},\hat\pi_{01},\hat\pi_{10},\hat\pi_{11})
        \gets
        (\hat \pi_{\rho(1)},\hat \pi_{\rho(2)},\hat \pi_{\rho(3)},\hat \pi_{\rho(4)}).
      \)

\smallskip

\STATE \textbf{return} $\{\hat\mu_{qc},\hat\pi_{qc}\}_{q,c\in\{0,1\}}$.
\end{algorithmic}
\end{algorithm}

\subsection{SVD Baseline in Synthetic Experiment}
We form the empirical two‐way moment between view \(1\) and view \(2\):
\begin{align*}
    \widehat M_{1,2} &= \frac{1}{n}\sum_{i=1}^{n}X_{1}^{(i)}\,X_{2}^{(i)\,\top}\\
    &= \sum_{q,c}\pi_{q,c}\,\mu_{1,q,c}\,\mu_{2,q,c}^{\top}
\;+\;\text{sampling noise},
\end{align*}
where \(\pi_{q,c}=\Pr[Q=q,C=c]\) and \(\mu_{v,q,c}=E[J_v\mid Q=q,C=c]\) for judge/view \(v\) 
A singular‐value decomposition
\[
\widehat M_{1,2}
\;=\;U_{12}\,\Sigma_{12}\,V_{12}^{\!\top}
\]
yields factor matrices
\[
U_{12}\,\Sigma_{12}^{1/2}\approx[\mu_{1,q,c}]\,R,
\quad
V_{12}\,\Sigma_{12}^{1/2}\approx[\mu_{2,q,c}]\,R,
\]
where \(R\in O(4)\) is an unknown orthogonal matrix.

Repeating on
\(\widehat M_{1,3}=\tfrac1n\sum_i X_{1}^{(i)}X_{3}^{(i)\,\top}
=U_{13}\,\Sigma_{13}\,V_{13}^{\!\top}\)
produces a second rotated copy of \([\mu_{1,q,c}]\).  We solve the Procrustes problem
\[
R
\;=\;\arg\min_{O\in O(4)}
\bigl\|\,U_{12}\,\Sigma_{12}^{1/2}\;-\;U_{13}\,\Sigma_{13}^{1/2}\,O\bigr\|*{F},
\]
then set
\(\hat\mu_{2,q,c}=(V_{12}\,\Sigma_{12}^{1/2})\,R^{\!\top}\)
and
\(\hat\mu_{3,q,c}=(V_{13}\,\Sigma_{13}^{1/2})\,R^{\!\top}\)
to align all three views. 

This SVD baseline recovers \(\{\mu_{v,q,c}\}\) up to the permutation/sign ambiguity inherent in any orthogonal transform.

\subsection{General CARE Setup}
\noindent \textbf{Extension Beyond the Gaussian Observation Model.} The multivariate-Gaussian assumption for $J|H$ is convenient—its first two or three moments already encode all information needed for the sparse + low-rank and tensor steps—but it is not a requirement. Because CARE learns the \emph{graphical} structure, the same pipeline applies whenever each judge’s conditional distribution lies in an exponential family or, more generally, a latent-variable generalized linear model (GLM):


\begin{itemize}[leftmargin=*]

\item \textbf{Categorical or ordinal scores.}  For Likert ratings or pairwise preferences we can set
\begin{equation}
J_i \mid H \;\sim\;
\begin{cases}
\mathrm{Bernoulli}\!\bigl(\sigma(W_i^{\!\top}H)\bigr),
& \text{(Binary / pairwise preference)},\\[4pt]
\mathrm{Categorical}\!\bigl(\softmax(W_i^{\!\top}H)\bigr),
& \text{(Categorical)},\\[4pt]
\mathrm{OrderedLogit}\!\bigl(W_i^{\!\top}H;\boldsymbol{\tau}\bigr),
& \text{(Ordinal / Likert)}.
\end{cases}
\label{eq:judge_conditional}
\end{equation}
Here $\sigma(t)=1/(1+e^{-t})$. For $\mathrm{OrderedLogit}(\eta;\boldsymbol{\tau})$ with $\eta=W_i^{\!\top}H$ and
$-\infty=\tau_0<\tau_1<\cdots<\tau_{R-1}<\tau_R=+\infty$, we use
\[
\Pr(J_i \le r \mid H)=\sigma(\tau_r-\eta),
\qquad
\Pr(J_i=r \mid H)=\sigma(\tau_r-\eta)-\sigma(\tau_{r-1}-\eta).
\]
The graph—hence the sparse mask $S$—is unchanged; only the node-wise likelihoods differ.  We still recover $S$ from conditional‐mutual-information or pseudo-likelihood scores, and we still factorize higher-order indicator moments such as
$\mathbb{E}\bigl[\mathbf{1}_{\{J_a=\ell\}}\;\mathbf{1}_{\{J_b=m\}}\;\mathbf{1}_{\{J_c=n\}}\bigr]$.

\item \textbf{Mixed Discrete-Continous Scores.} When some judges output real scores and others categorical flags, we use a mixed conditional distribution:
\[
p(J\mid H)=
\prod_{i\in\text{Cont.}}\!\mathcal{N}(J_i;\mu_{H_i},\sigma_i^2)
\prod_{j\in\text{Disc.}}\!\mathrm{Bernoulli}(\sigma(W_j^{\!\top}H)).
\]
CARE forms mixed raw/indicator moments, and identifiability again follows from standard tensor-decomposition guarantees for mixed conditional means.

\item \textbf{Heavy-tailed or skewed real scores.} When numeric scores are skewed or contain outliers, a multivariate-$t$ or Gaussian scale mixture is appropriate. Up to a scalar factor, the covariance still decomposes as sparse $+$ low-rank, so Steps 1–2 of Algorithm \ref{alg:judge-graph} work after a simple rescaling.

\end{itemize}

Empirically, we find that replacing the Gaussian local likelihood only affects the estimation of sparse structure and extraction of latent factors, not the subsequent symmetry-breaking or posterior computation; thus the overall CARE pipeline generalizes with minimal adjustments.

\subsection{Heuristics and Justifications}
\noindent \textbf{More Heuristics for Symmetry Breaking in CARE}
For CARE-SVD, we introduce two complementary heuristics to distinguish the true-quality latent factor from confounders. First, the \emph{human-anchor criterion} uses a small validation set with human ratings: among the estimated factors, we select the one whose loadings best agree with human judgments. Second, a \emph{loading-balance heuristic} prefers a factor that loads broadly and with comparable magnitude across competent judges, while rejecting factors dominated by a small subset (often judge-specific confounders, e.g., length-sensitive factor). 

Additionally for CARE-Tensor, if high-quality labeled samples are available (e.g. known samples with $Q=1$), we can anchor the configuration by aligning the state whose conditional means best match labeled \(Q{=}1\) samples. Without labels, we automate this by taking the leading eigenvector \(v\) of the learned low-rank factor \(\hat L\) (a proxy for the shared “quality” axis), scoring each recovered conditional mean \(\hat\mu_r\) via \(s_r=v^\top\hat\mu_r\), and assigning the highest-scoring state(s) to \(Q{=}1\).

\noindent \textbf{Heuristic for Addressing Orthogonality Violations in CARE (SVD).}

\begin{figure}[h]
    \centering
    \begin{subfigure}[b]{0.45\textwidth}
        \centering
        \includegraphics[width=\textwidth]{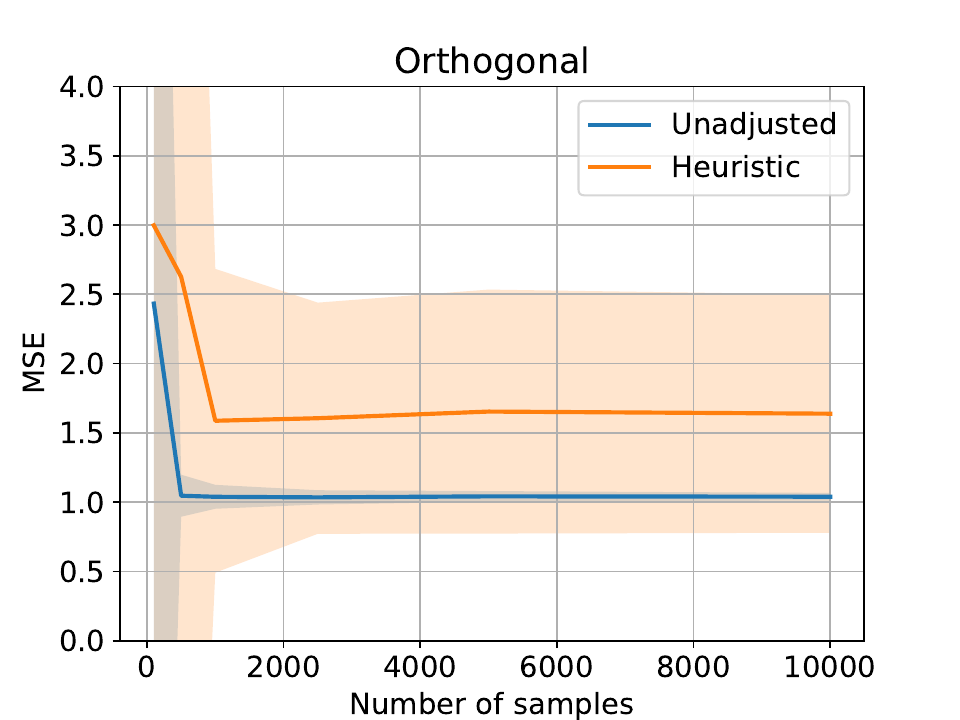}
        \caption{Orthogonal}
    \end{subfigure}
    \hfill
    \begin{subfigure}[b]{0.45\textwidth}
        \centering
        \includegraphics[width=\textwidth]{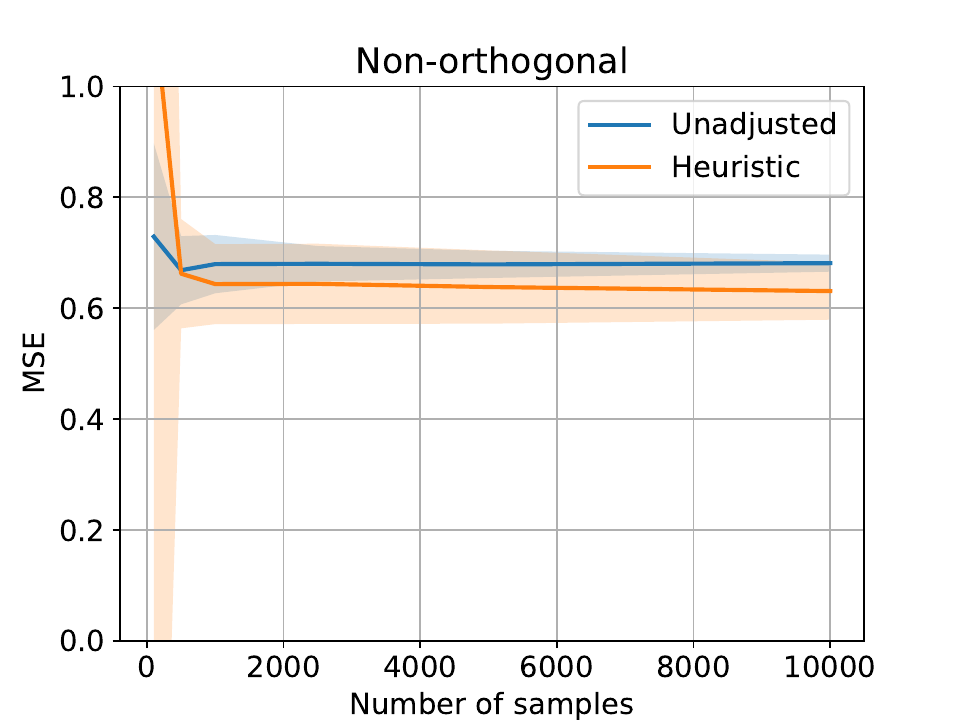}
        \caption{Non-Orthogonal}
    \end{subfigure}
    \caption{Effect of the proposed heuristic in a fully Gaussian synthetic setup. We estimate the true quality variable $Q$ and report the mean squared error. The heuristic improves estimation in the non-orthogonal setting, but slightly degrades performance in the orthogonal setting where true and confounding components are disjoint.}
    \label{fig:svd_weighting_heuristic}
\end{figure}

Existing heuristics for identifying the true quality latent factor can estimate corresponding weights, but they often suffer from bias when the orthogonality assumption—central to the application of SVD—is violated. This issue commonly arises in real-world datasets. We found the following weighting rule effective in both synthetic and real-world settings:
\[
w \gets \lambda^\star u^\star - \sum_{u_i \in U \setminus \{u^\star\}} \lambda_i u_i,
\]
where $w$ represents the learned weights for each judge, $\lambda^*$ and $u^*$ is the singular value and vector of $L$ that corresponds to the direction that is closest to true quality latent variable, $\lambda_i, u_i$ represent rest of the singular values and vectors, which can be interpreted as spurious/confounding factors.

This rule intuitively subtracts the influence of overlapping (non-orthogonal) confounding components from the estimated true score factor.

Figure~\ref{fig:svd_weighting_heuristic} illustrates the effect of this heuristic in a synthetic fully Gaussian setup. In the non-orthogonal case—where confounding components overlap with the true signal—the heuristic improves the estimation of the true latent variable. In contrast, it underperforms in the orthogonal case, where judges influenced by true scores are cleanly separated from those influenced by confounders.

\section{Theory}\label{appsec:theory}
We formalize the graphical model under joint gaussian distribution and notation (Section \ref{subsec:notation}), then discuss the identifiability of graph structure with exact and approximate recovery (Section \ref{subsec:identifiability}) and quantify the sample complexity required for consistent recovery of our SVD-based algorithm (Section \ref{subsec:sample_complexity}). Next, we present the model misspecification error when confounding factor is not correctly characterized (Section \ref{subsec:misspecification}). Finally, we discuss sample complexity required for tensor-based algorithm under mixed Gaussian distribution (Section \ref{subsec:tensor sample}. All proofs are included in Section \ref{app:proofs}.

\subsection{Model and Notation}\label{subsec:notation}
We discuss the model under joint-gaussian distribution where all variables follow the same definitions as in Section~\ref{sec:method}. Briefly, $J=(J_1,\dots,J_p)^\top$ stacks the $p$ observable judge scores,
and $H=(Q,C_1,\dots,C_k)^\top$ collects the $h=k+1$ latent variables. 
\[
  \Sigma=\Cov\!\bigl[(J,H)^\top\bigr],\qquad
  \Sigma^{-1} = K = \begin{pmatrix}
  K_{JJ} & K_{JH}\\
  K_{HJ} & K_{HH}
\end{pmatrix},
\]

where the subscript~$J$ (resp.\ $H$) refers to observable (resp.\ latent) coordinates.

The observable block factorizes via the Schur complement:
\begin{equation*}
  (\Sigma_{JJ})^{-1}=S-L,
  \quad
  S = K_{JJ},
  \quad
  L = K_{JH}\,K_{HH}^{-1}\,K_{HJ}.
  \label{eq:schur}
\end{equation*}
Here $\Sigma_o$ is the covariance matrix of observable variables, $S\in\mathbb{R}^{p\times p}$ is sparse and encodes direct conditional edges
among judges, $L$ is low-rank with $rank(L)\le h$ and captures dependencies mediated by the
latent variables.
Entry $(K_{JH})_{i\ell}$ is the edge weight between judge~$i$ and latent
factor~$\ell$.

\subsection{Graph Structure Identifiability}\label{subsec:identifiability}
While $(S,L)$ can be recovered (e.g.\ via convex sparse-plus-low-rank regularization \citep{Chandrasekaran_2012},
the finer structure of $K_{JH}$ is usually not identifiable from $L$. For example,
for arbitrary rotation matrix $R\in\R^{h\times h}$, \(L=(K_{JH}K_{HH}^{-1/2} R)(R^\top K_{HH}^{-1/2}K_{HJ})\), this indicates one cannot distinguish $K_{JH} K_{HH}^{-1/2}$ from $K_{JH} K_{HH}^{-1/2}R$ without further constraints. Hence, we need to impose additional assumptions:

\begin{assumption}[Latent--latent independence and eigengap in $K_{HH}^{-1}$]
\label{ass:latent_diag}
$K_{HH}^{-1}=\mathrm{diag}(\lambda_1,\dots,\lambda_h)$ with
$\lambda_1>\lambda_2>\dots>\lambda_h>0$.
(Equivalently, $K_{HH}=\mathrm{diag}(1/\lambda_1,\dots,1/\lambda_h)$.)
\end{assumption}

\begin{assumption}[Orthonormal latent--observable connections]
\label{ass:orthogonal}
The columns of $K_{JH}$ are orthonormal, i.e.\ $K_{JH}^\top K_{JH}=I_h$.
A special case is the \emph{disjoint-support} model where each judge connects to exactly one latent factor.
\end{assumption}

Next, we provide an exact recovery result given the above assumptions. 
\begin{theorem}[Exact Recovery]\label{thm:exact}
  Under Assumptions 1 and 2, columns in $K_{JH}$ are identifiable up to column permutations and sign flips.
\end{theorem}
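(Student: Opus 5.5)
Under Assumptions~\ref{ass:latent_diag} and~\ref{ass:orthogonal}, the plan is to show that $L$ already comes in eigendecomposition form, and then to use the uniqueness of eigenvectors for simple eigenvalues. Write $k_\ell$ for the $\ell$-th column of $K_{JH}$. Assumption~\ref{ass:latent_diag} makes $K_{HH}^{-1}$ diagonal, so
\[
L = K_{JH}K_{HH}^{-1}K_{HJ} = \sum_{\ell=1}^{h} \lambda_\ell\, k_\ell k_\ell^\top .
\]
By Assumption~\ref{ass:orthogonal} the $k_\ell$ are orthonormal. Hence $L k_\ell = \lambda_\ell k_\ell$ for each $\ell$, and $L v = 0$ for every $v \perp \mathrm{span}\{k_1,\dots,k_h\}$. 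This expression is therefore a spectral decomposition of the symmetric matrix $L$. Its nonzero eigenvalues are exactly $\lambda_1>\cdots>\lambda_h>0$, and $0$ is the only other eigenvalue, with multiplicity $p-h$.

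The key step is uniqueness. Each $\lambda_\ell$ is strictly positive and distinct from the others and from $0$, so its eigenspace $\ker(L-\lambda_\ell I)$ is one-dimensional and equals $\mathrm{span}\{k_\ell\}$. A unit vector spanning a one-dimensional space is determined up to sign, so $k_\ell$ is recovered from $L$ up to $\pm$.

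To make the identifiability statement precise, suppose $(K'_{JH},K'_{HH})$ also satisfies both assumptions and yields the same $L$. Its columns are then unit eigenvectors of $L$ with positive eigenvalues, so the set $\{\pm k'_\ell\}$ must equal $\{\pm k_\ell\}$. The eigenvalues $\lambda'_\ell$ form the same set as the $\lambda_\ell$. The only freedom left is the labeling of the columns, which gives a permutation. If the labeling is fixed by the ordering $\lambda_1>\cdots>\lambda_h$, the permutation becomes trivial, but I will state the result up to permutation to match the theorem. As a byproduct, $K_{HH}$ is also recovered from the eigenvalues.

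The step that needs the most care is ruling out rotations, since rotations are exactly the non-identifiability noted before the assumptions. A mixed basis $K_{JH}K_{HH}^{-1/2}R$ would have orthogonal columns only with non-unit norms, or would break the diagonal structure, unless $R$ is a signed permutation. The distinct-eigenvalue condition in Assumption~\ref{ass:latent_diag} closes this gap: a rotation can only act nontrivially inside an eigenspace of dimension at least two, and no such eigenspace exists among the nonzero eigenvalues. Unit norm from Assumption~\ref{ass:orthogonal} then fixes the scale. Without it, only the directions $k_\ell/\|k_\ell\|$ would be identified and the scale would trade off against $\lambda_\ell$.
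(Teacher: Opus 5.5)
Your proof is correct and follows the paper's argument: expand $L=\sum_{\ell}\lambda_\ell k_\ell k_\ell^\top$, use orthonormality to read off $(\lambda_\ell,k_\ell)$ as eigenpairs, and invoke the distinctness of the eigenvalues to get uniqueness up to sign and permutation. Your additional steps make the identifiability claim somewhat more explicit than the paper's version. These are the comparison of two admissible parameterizations yielding the same $L$, and the observation that $\lambda_\ell>0$ keeps each eigenspace separate from $\ker L$.
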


Real-world data rarely satisfy the exact orthogonality in Assumption \ref{ass:orthogonal}.  
To assess robustness, consider the following perturbed connection matrix:
\[
   \tilde K_{JH}=K_{JH}+E ,\qquad \|E\|_2\ \text{small}.
\]
The associated low-rank part is  
\(\tilde L=\tilde K_{JH}K_{HH}^{-1}\tilde K_{HJ}.\) 
Let the eigen-pairs of  
\(L=K_{JH}K_{HH}^{-1}K_{HJ}\) and \(\tilde L\) be  
\(\{(\lambda_i,u_i)\}_{i=1}^h\) and \(\{(\tilde\lambda_i,\tilde u_i)\}_{i=1}^h\),
ordered so that \(\lambda_1>\dots>\lambda_h>0\),
and denote the eigen-gap by  
\[
   \delta_i=\min_{j\neq i}|\lambda_i-\lambda_j| > 0 .
\]

\begin{theorem}[Stability under approximate orthogonality]
\label{thm:approx}
Assume Assumptions~\ref{ass:latent_diag}--\ref{ass:orthogonal} and that
\(K_{JH}\) has orthonormal columns.
Let \(\tilde K_{JH}=K_{JH}+E\) with \(\|E\|_2\) small, and define
\(\tilde L=\tilde K_{JH}K_{HH}^{-1}\tilde K_{HJ}\).
Let \(\{(\lambda_i,u_i)\}_{i=1}^h\) and \(\{(\tilde\lambda_i,\tilde u_i)\}_{i=1}^h\)
be the top-\(h\) eigenpairs of \(L\) and \(\tilde L\), ordered so
\(\lambda_1>\cdots>\lambda_h>0\).
Define the (full-spectrum) eigengap
\[
\delta_i \;:=\; \min\Bigl\{\lambda_i,\ \min_{j\in[h],\,j\neq i}|\lambda_i-\lambda_j|\Bigr\}.
\]
Then for each \(i\in[h]\), there exists a sign \(s_i\in\{\pm1\}\) such that
\[
\|\tilde u_i - s_i u_i\|_2
\;\le\;
\frac{4\,\|K_{HH}^{-1}\|_2\,\|E\|_2}{\delta_i}
\;+\;
O\!\bigl(\|E\|_2^2\bigr).
\]
\end{theorem}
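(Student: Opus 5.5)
Under Assumptions~\ref{ass:latent_diag}--\ref{ass:orthogonal}, write $D:=K_{HH}^{-1}=\mathrm{diag}(\lambda_1,\dots,\lambda_h)$. Then $L=K_{JH}DK_{HJ}$ is already an eigendecomposition: since $K_{JH}^\top K_{JH}=I_h$, we have $L K_{JH}e_i=\lambda_i K_{JH}e_i$. So $u_i$ is the $i$-th column of $K_{JH}$, with eigenvalue $\lambda_i$, and $L$ has eigenvalue $0$ on the orthogonal complement of $\mathrm{col}(K_{JH})$. The spectrum of $L$ is therefore $\{\lambda_1,\dots,\lambda_h,0,\dots,0\}$, and the distance from $\lambda_i$ to the rest of the spectrum is exactly $\delta_i=\min\{\lambda_i,\min_{j\neq i}|\lambda_i-\lambda_j|\}$. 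This is why the statement's $\delta_i$ includes $\lambda_i$ itself: we need a gap to the zero eigenspace as well as to the other latent eigenvalues. The plan is a first-order perturbation bound for a simple eigenvector, via Davis--Kahan in the $\sin\theta$ form \citep{yu2015useful}.

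\textbf{Step 1: bound the perturbation of $L$.} Expanding,
\[
\tilde L-L = E\,D\,K_{HJ} + K_{JH}\,D\,E^\top + E\,D\,E^\top .
\]
Using $\|K_{JH}\|_2=1$ from orthonormality, we get
\[
\|\tilde L-L\|_2\le 2\|D\|_2\|E\|_2+\|D\|_2\|E\|_2^2 .
\]
Here $\|D\|_2=\|K_{HH}^{-1}\|_2$. The quadratic term is absorbed into the $O(\|E\|_2^2)$ remainder.

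\textbf{Step 2: apply Davis--Kahan to the one-dimensional eigenspace spanned by $u_i$.} The Yu--Wang--Samworth variant gives, for a suitable sign $s_i$,
\[
\|\tilde u_i-s_iu_i\|_2\le \frac{2^{3/2}\|\tilde L-L\|_2}{\delta_i},
\]
where the gap is measured in the unperturbed spectrum of $L$, as the theorem requires. Combining with Step 1 gives
\[
\|\tilde u_i-s_iu_i\|_2\le \frac{2\sqrt2\,(2\|D\|_2\|E\|_2)}{\delta_i}+O(\|E\|_2^2)=\frac{4\sqrt2\,\|D\|_2\|E\|_2}{\delta_i}+O(\|E\|_2^2).
\]
This has the right form but a slightly worse constant than the stated $4$.

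\textbf{Step 3: tighten the constant.} To get exactly $4$, I would instead use the classical first-order expansion. We have $\tilde u_i=u_i+\sum_{j\neq i}\frac{v_j^\top(\tilde L-L)u_i}{\lambda_i-\mu_j}v_j+O(\|E\|^2)$, where the $(v_j,\mu_j)$ range over all eigenpairs of $L$, including the null directions. The first-order correction therefore has norm at most $\|(\tilde L-L)u_i\|_2/\delta_i\le 2\|D\|_2\|E\|_2/\delta_i$. This is within the claimed factor $4$, and the slack covers the sign choice and normalization.

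\textbf{Main obstacle: matching the ordering of eigenpairs.} The bound requires that $\tilde u_i$, defined by eigenvalue ranking, is the perturbation of $u_i$. This holds once $\|\tilde L-L\|_2<\delta_i/2$, by Weyl's inequality, so the eigenvalues of $\tilde L$ do not cross. That is the precise meaning of ``$\|E\|_2$ small'' here. For larger $\|E\|_2$ the right-hand side exceeds $2$ and the bound holds trivially after choosing $s_i$ appropriately. Handling this regime split cleanly, and tracking the constant, is the only delicate part.
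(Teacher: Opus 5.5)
Your proof is correct, and Steps 1--2 follow the paper's own argument. It uses the same expansion of $\tilde L-L$, the same bound $\|\tilde L-L\|_2\le 2\|K_{HH}^{-1}\|_2\|E\|_2+\|K_{HH}^{-1}\|_2\|E\|_2^2$ via $\|K_{JH}\|_2=1$, and Davis--Kahan at the simple eigenvalue $\lambda_i$ with a gap $\delta_i$ that accounts for the zero eigenvalue.

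You differ from the paper only in how the leading constant is justified. The paper simply asserts $\|\tilde u_i-s_iu_i\|_2\le 2\|\tilde L-L\|_2/\delta_i$. That factor $2$ is really the Yu--Wang--Samworth constant for $\sin\theta$, not for the vector distance. Since $\|\tilde u_i-s_iu_i\|_2=\sin\theta\,\sqrt{2/(1+\cos\theta)}=\sin\theta\,(1+O(\sin^2\theta))$, it carries over only up to a higher-order factor. That is harmless inside $O(\|E\|_2^2)$, but the paper does not remark on it; its proof of the sample-complexity theorem uses the $2^{3/2}$ constant you quote.

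Your Step~3 instead bounds the first-order correction through the reduced resolvent by $\|(\tilde L-L)u_i\|_2/\delta_i$. This uses the finer quantity $\|(\tilde L-L)u_i\|_2$ rather than $\|\tilde L-L\|_2$, and gives leading constant $2$ instead of $4$, a strictly sharper result. The price is that you must control the remainder, either by analytic perturbation theory for a simple eigenvalue or more elementarily. The elementary route is the classical one-vector bound $\sin\theta\le\|(\tilde L-L)u_i\|_2/(\delta_i-\|\tilde L-L\|_2)$, which follows from $(\tilde L-\lambda_i)u_i=(\tilde L-L)u_i$ and Weyl's inequality.

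One correction to your last paragraph. Once $\|\tilde L-L\|_2\ge\delta_i/2$, you can only guarantee that the leading term $4\|K_{HH}^{-1}\|_2\|E\|_2/\delta_i$ is roughly at least $1$, not $2$. The worst case with the best sign is $\sqrt2$, so the ``trivial regime'' does not follow as you state it. It is also unnecessary: the statement is asymptotic in $\|E\|_2$. You may restrict to $\|E\|_2$ small enough that $\|\tilde L-L\|_2<\delta_i/2$, or absorb the bounded large-$\|E\|_2$ regime into the constant of the $O(\|E\|_2^2)$ term.
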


\subsection{Sample Complexity Bound}\label{subsec:sample_complexity}
We now quantify how many i.i.d.\ samples are needed for the
two–stage estimator in Algorithm~\ref{alg:judge-graph} to recover the
latent–observable directions
\(K_{JH}\!\in\!\mathbb{R}^{p\times h}\).  

As detailed in Algorithm \ref{alg:judge-graph}, our estimator for $K_{JH}$ proceeds in two stages: first, a sparse $+$ low-rank decomposition of sample precision matrix. Second, we extract the latent–observable directions by taking the rank-\(h\) eigen-decomposition \(\hat L_n=\sum_{i=1}^{h}\hat\lambda_i\,\hat u_i\hat u_i^{\!\top}\) and setting \(\hat K_{JH}:=[\hat u_1,\dots,\hat u_h]\).

\begin{theorem}[Sample complexity for recovering $K_{JH}$]\label{thm:sample}
Let \( L^{*}=K_{JH}K_{HH}^{-1}K_{HJ}\in\mathbb{R}^{p\times p}\) have distinct eigenvalues
\(\lambda_{1}>\cdots>\lambda_{h}\) and define the (global) eigengap
\(\delta:=\min_{1\le i<j\le h}|\lambda_i-\lambda_j|.\)
Assume the identifiability, incoherence, and curvature conditions of \citep{Chandrasekaran_2012}.
Then for any \(\eta>0\), with probability at least \(1-2e^{-\eta}\),
\[
\max_{i\le h}\;\bigl\|\,\hat u_i-u_i\,\bigr\|_2
= O\!\Bigl( \frac{\sqrt{\eta}}{\sqrt{n}\,\xi(T)\,\delta} \Bigr),
\]
where \(n\) is the sample size and \(T=T(L^*)\).
\end{theorem}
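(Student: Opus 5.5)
The proof has two stages: a high-probability operator-norm bound on $\hat L_n-L^\star$, followed by eigenvector perturbation. For the first stage we invoke the main consistency result of \citep{Chandrasekaran_2012} for the convex program in Algorithm~\ref{alg:judge-graph}. Under its identifiability, incoherence and curvature conditions, and with $\gamma_n\asymp\sqrt{\eta/n}$ (up to constants depending on $\xi(T)$ and the incoherence parameters), that result gives, with probability at least $1-2e^{-\eta}$, algebraic consistency (correct rank $h$ of $\hat L_n$ and correct sparsity pattern of $\hat S_n$) together with the error bound
\[
\|\hat L_n-L^\star\|_2\;\lesssim\;\frac{1}{\xi(T)}\sqrt{\frac{\eta}{n}}.
\]
The probability statement comes from concentration of the sample covariance, and hence of the sample precision $\hat\Theta$, in the dual-norm sense. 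This is a Gaussian tail bound: the deviation $\hat\Sigma-\Sigma$ in spectral norm is $O(\sqrt{\eta/n})$ with probability $1-2e^{-\eta}$ for fixed $p$. It is then transferred to $\hat\Theta-\Theta$ by a first-order expansion of the matrix inverse, which is valid once $n$ is large enough that the deviation is below $\sigma_{\min}(\Sigma_{JJ})/2$.

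For the second stage, since $\hat L_n$ has rank exactly $h$ on this event, its top-$h$ eigenpairs $(\hat\lambda_i,\hat u_i)$ are well defined. We apply the Davis--Kahan $\sin\Theta$ theorem in the variant of \citep{yu2015useful} to each one-dimensional eigenspace of $L^\star$. For eigenvalue $\lambda_i$, the relevant gap is the distance to the rest of the spectrum of $L^\star$, which includes $0$ (with multiplicity $p-h$) as well as the other $\lambda_j$. We therefore use $\min(\delta,\lambda_h)$. Assuming, as is implicit in the statement, that $\lambda_h\ge\delta$ (or absorbing $\lambda_h$ into $\delta$ as in Theorem~\ref{thm:approx}), this reduces to $\delta$. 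The variant yields
\[
\min_{s\in\{\pm1\}}\|\hat u_i-s u_i\|_2\;\le\;\frac{2^{3/2}\|\hat L_n-L^\star\|_2}{\delta}.
\]
Fixing signs and taking the maximum over $i\le h$, then combining with the first-stage bound, gives the stated rate $O\bigl(\sqrt{\eta}/(\sqrt n\,\xi(T)\,\delta)\bigr)$ on the same event. Sign fixing is a convention, exactly as in Theorem~\ref{thm:exact}.

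The main obstacle is the first step: extracting from \citep{Chandrasekaran_2012} a clean spectral-norm bound on $\hat L_n-L^\star$ that scales as $\sqrt{\eta/n}/\xi(T)$ and holds with the stated probability. Their result is phrased in terms of the combined norm $\max(\|\cdot\|_\infty/\gamma,\|\cdot\|_2)$ and requires $n$ above a threshold. I would state that threshold as part of the ``standard conditions'' and read off the $\|\cdot\|_2$ component, tracking how $\xi(T)$ enters through the choice of $\gamma_n$. A secondary subtlety is that the paper's objective is a Frobenius-loss surrogate rather than the log-determinant likelihood. I would argue that the same tangent-space/dual-certificate analysis applies, since the loss is strongly convex with identity Hessian, so the Fisher-information constants simply become $1$. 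The Davis--Kahan step itself is routine.
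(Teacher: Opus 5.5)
Your proposal is correct and follows essentially the same route as the paper's proof. Both obtain the spectral-norm bound $\|\hat L_n-L^\star\|_2\lesssim \xi(T)^{-1}\sqrt{\eta/n}$ from Theorem~4.1 of Chandrasekaran et al.\ (with $\gamma_n\propto \xi(T)^{-1}\sqrt{\eta/n}$), then apply the Yu--Wang--Samworth Davis--Kahan variant with constant $2^{3/2}$ and take the maximum over $i\le h$. Your extra care addresses points the paper's proof passes over silently: you use the gap $\min(\delta,\lambda_h)$ to account for the zero eigenvalues of $L^\star$, you align signs, and you note that the Frobenius-loss surrogate requires transplanting the log-determinant analysis.
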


We defer the proof to Appendix \ref{app:sample_comp_proof}. At a high-level, we adapt the identifiability, incoherence and curvature conditions from Theorem 4.1 of \citep{Chandrasekaran_2012} and combine it with extended result of Davis-Khan's theorem \citep{yu2015useful}. 

This bound shows that the column-wise $\ell_2$ error decays at the standard parametric rate \(n^{-1/2}\), and is
attenuated by both the manifold curvature \(\xi(T)\) and the eigengap \(\delta\). Achieving an accuracy of at most \(\alpha\in(0,1)\) therefore requires
\[
   n=\tilde{O}\!\bigl(\frac{\epsilon}{\xi(T)^2\delta^{2}\alpha^{2}}\bigr)
\]
samples, up to universal constants and log-factors.

\subsection{Misspecification Error}\label{subsec:misspecification} Many label aggregation frameworks (e.g.,\citep{bach2018snorkel,fu2020fast,shin21universalizing}) assume a \emph{single} latent variable that explains the observed labels. However, in setups like LLM-as-a-judge, the scores may be influenced by additional latent factors or confounders that also affect the observed annotations. Ignoring these \emph{confounder} latents leads to model misspecification, which can bias the aggregated labels.
We characterize this bias and analyze its impact on the estimated aggregation weights.

Let $L^* = \sum_{\ell=1}^{h} \frac{1}{d_\ell} \vect{k}_\ell \vect{k}_\ell^T$ be the true rank-$h$ low-rank component of the observable precision matrix, derived from the latent-observable connection matrix $K_{JH}=[\vect{k}_1, \dots, \vect{k}_h]$ and latent-latent precision $K_{HH}=\textnormal{diag}(d_1, \dots, d_h)$. Let $\vect{u}_1^{\textsup{true}} = \vect{k}_1/||\vect{k}_1||_2$ be the true direction of influence for the quality score latent variable $Q$ (assuming $\vect{k}_1 \neq \mathbf{0}$).

Define $\mat{A} = \frac{1}{d_1} \vect{k}_1 \vect{k}_1^T$. Its principal (and only non-zero) eigenvalue is $\lambda_1 = \frac{1}{d_1}||\vect{k}_1||_2^2$, and its spectral gap (to its other zero eigenvalues) is $\delta = \lambda_1$.
Let $\mat{E} = \sum_{\ell=2}^{h} \frac{1}{d_\ell} \vect{k}_\ell \vect{k}_\ell^T$ be the confounding component, so $L^* = \mat{A} + \mat{E}$.
Let $\vect{v}_1$ be the principal unit-norm eigenvector of $L^*$. When a rank-1 model is fitted, the estimated direction is $\hat{\vect{u}}_1^{\textsup{pop}} = \vect{v}_1$.

\begin{theorem}\label{thm:misspecification}
If $\opnorm{\mat{E}} \le \delta / 2$, the $\ell_2$ deviation of the estimated direction $\vect{v}_1$ from $\vect{u}_1^{\textsup{true}}$ is bounded by:
\[
    \ltronorm{\vect{v}_1 - s \vect{u}_1^{\textsup{true}}} \le \frac{2 \opnorm{\mat{E}}}{\delta} = \frac{2 \opnorm{ \sum_{\ell=2}^{h} \frac{1}{d_\ell} \vect{k}_\ell \vect{k}_\ell^T }}{\frac{1}{d_1} \ltronorm{\vect{k}_1}^2}
\]
for a sign $s = \pm 1$ (chosen so that $s (\vect{u}_1^{\textsup{true}})^T \vect{v}_1 \ge 0$).
\end{theorem}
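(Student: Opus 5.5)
This is a Davis--Kahan perturbation argument with $\mat{A}$ as the unperturbed matrix and $\mat{E}$ as the perturbation. We note first that $\mat{A}=\lambda_1 \vect{u}_1^{\textsup{true}}(\vect{u}_1^{\textsup{true}})^T$ is rank one and positive semidefinite, with top eigenvalue $\lambda_1=\delta$ and all other eigenvalues zero. Also $\mat{E}$ is positive semidefinite, since $d_\ell>0$.

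\textbf{Step 1: eigenvalue separation for $L^*$.} By Weyl's inequality, every eigenvalue of $L^*=\mat{A}+\mat{E}$ lies within $\opnorm{\mat{E}}$ of the corresponding eigenvalue of $\mat{A}$. So the top eigenvalue of $L^*$ is at least $\delta$ (using $\mat{E}\succeq 0$, or Weyl: at least $\delta-\opnorm{\mat{E}}\ge \delta/2$). The remaining eigenvalues are at most $\opnorm{\mat{E}}\le\delta/2$. Hence the top eigenvalue of $L^*$ is simple whenever $\opnorm{\mat{E}}<\delta/2$, and $\vect{v}_1$ is well defined up to sign. The boundary case $\opnorm{\mat{E}}=\delta/2$ needs no separate treatment, because there the claimed bound equals $1\cdot$something that holds trivially; see the last step.

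\textbf{Step 2: apply the Yu--Wang--Samworth variant of Davis--Kahan.} We invoke \citep{yu2015useful} with $\Sigma=\mat{A}$, $\hat\Sigma=L^*$ and the index set $\{1\}$. The relevant gap is that of the unperturbed matrix, $\min(\lambda_0-\lambda_1,\lambda_1-\lambda_2)$ with $\lambda_0=\infty$, which equals $\lambda_1-0=\delta$. The variant gives
\[
\sin\theta(\vect{v}_1,\vect{u}_1^{\textsup{true}})\le \frac{2\opnorm{\mat{E}}}{\delta},
\qquad
\ltronorm{\vect{v}_1-s\vect{u}_1^{\textsup{true}}}\le \frac{2^{3/2}\opnorm{\mat{E}}}{\delta}.
\]
The second bound has an extra $\sqrt2$ relative to the statement.

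\textbf{Step 3: recover the sharper constant.} To remove the $\sqrt2$, we instead use the classical $\sin\theta$ theorem, which gives $\sin\theta\le \opnorm{\mat{E}}/(\delta-\opnorm{\mat{E}})\le 2\opnorm{\mat{E}}/\delta$, using the separation from Step 1 and $\opnorm{\mat{E}}\le\delta/2$. We then convert angle to distance. With $s$ chosen so that $c=s(\vect{u}_1^{\textsup{true}})^T\vect{v}_1\ge0$,
\[
\ltronorm{\vect{v}_1-s\vect{u}_1^{\textsup{true}}}^2=2(1-c)\le 2(1-c^2)=2\sin^2\theta .
\]
This gives $\sqrt2\sin\theta$, which is again off by $\sqrt2$.

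The constant is the main obstacle. We would tighten it with a direct rank-one computation. Write $\vect{v}_1=c\,\vect{u}+\sqrt{1-c^2}\,\vect{w}$ with $\vect{w}\perp\vect{u}$, where $\vect{u}=s\vect{u}_1^{\textsup{true}}$. Projecting the eigen-equation $L^*\vect{v}_1=\mu\vect{v}_1$ onto $\vect{w}$ gives
\[
\mu\sqrt{1-c^2}=\vect{w}^T\mat{E}\vect{v}_1\le\opnorm{\mat{E}}.
\]
Since $\mu\ge\delta$ (because $\mat{E}\succeq0$), this yields $\sin\theta\le\opnorm{\mat{E}}/\delta$. Then $\ltronorm{\vect{v}_1-\vect{u}}\le\sqrt2\,\opnorm{\mat{E}}/\delta\le 2\opnorm{\mat{E}}/\delta$, which is the claimed bound. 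The boundary case is also covered, since all we used was $\mu\ge\delta$ and no strict gap was needed.

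Finally, substituting $\delta=\frac1{d_1}\ltronorm{\vect{k}_1}^2$ and the definition of $\mat{E}$ gives the displayed ratio form of the bound.
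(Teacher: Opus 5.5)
Your final argument is correct, but it takes a different and tighter route than the paper's. The paper's proof is a one-line appeal to the Yu--Wang--Samworth variant of Davis--Kahan, which it uses to assert $\ltronorm{\vect{v}_1 - s\vect{u}_1^{\textsup{true}}}\le 2\opnorm{\mat{E}}/\delta$ directly. As you correctly observe in Step~2, that result gives the constant $2$ only for $\sin\theta$. For the distance between sign-aligned eigenvectors it gives $2^{3/2}$, and the paper itself uses $2^{3/2}$ in the proof of Theorem~\ref{thm:sample}. So the cited result does not literally deliver the stated constant.

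Your direct computation is the $\sin\theta$ argument specialized to this structure. You write $\vect{v}_1=c\,\vect{u}+\sqrt{1-c^2}\,\vect{w}$ and project $L^*\vect{v}_1=\mu\vect{v}_1$ onto $\vect{w}$, so the rank-one term $\mat{A}$ drops out. You then use $\mu\ge\vect{u}^T L^*\vect{u}\ge\delta$, which holds because $\mat{E}\succeq 0$; all $d_\ell>0$ since $K_{HH}$ is a diagonal principal block of a positive definite precision. This gives $\sin\theta\le\opnorm{\mat{E}}/\delta$ and hence $\ltronorm{\vect{v}_1-s\vect{u}_1^{\textsup{true}}}\le\sqrt{2}\,\opnorm{\mat{E}}/\delta$. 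That is sharper than the statement, and it never uses the hypothesis $\opnorm{\mat{E}}\le\delta/2$.

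The trade-off is this. The paper's black-box citation covers arbitrary symmetric perturbations, at the price of the extra $\sqrt2$. Yours is self-contained and reaches (in fact beats) the stated constant precisely because the signal is rank one and the omitted-confounder term is positive semidefinite. Without $\mat{E}\succeq0$ you would only have $\mu\ge\delta-\opnorm{\mat{E}}\ge\delta/2$ from Weyl, and you would be back to $2^{3/2}\opnorm{\mat{E}}/\delta$.

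One small correction: your remark in Step~1 that the boundary case $\opnorm{\mat{E}}=\delta/2$ holds \emph{trivially} is wrong. There the claimed bound is $1$, while sign alignment alone only gives $\sqrt{2}$. This is moot, because your final argument covers the boundary. In fact $\mu_1\ge\delta>\delta/2\ge\mu_2$ there, so $\vect{v}_1$ is well defined. Steps~2 and~3 can simply be dropped.
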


\begin{proof}
By Davis-Kahan theorem (Theorem 2 in \citep{yu2015useful}), if $\opnorm{\mat{E}} \le \delta / 2$, then the $\ell_2$ distance between $\vect{v}_1$ and $\vect{u}_1^{\textsup{true}}$ (after aligning their signs via $s = \pm 1$) is bounded by:
    \[ \ltronorm{\vect{v}_1 - s \cdot \vect{u}_1^{\textsup{true}}} \le \frac{2 \opnorm{\mat{E}}}{\delta}. \]
Plugging in $E$ yields the desired result:
    \[ \ltronorm{\vect{v}_1 - s \cdot \vect{u}_1^{\textsup{true}}} \le \frac{2 \opnorm{ \sum_{\ell=2}^{h} \frac{1}{d_\ell} \vect{k}_\ell \vect{k}_\ell^T }}{\frac{1}{d_1} \ltronorm{\vect{k}_1}^2}. \]
\end{proof}

The theorem quantifies the directional bias in the estimated influence of $Q$ when confounders are ignored. This bias is proportional to the collective ``strength'' of confounders in the precision domain (numerator) and inversely proportional to $Q$'s own ``strength'' (denominator). Fitting a rank-1 model forces this bias, while a higher-rank model offers the capacity to separate these influences.

\begin{corollary}[Error Bound for Estimated Conditional Mean of $Q$]
Denote the true conditional mean of true quality score latent variable $Q$ given the observable variables $O=(J_1, ..., J_p)$ be denoted by $\E[Q|O]_{\textsup{true}}$. Then, $\E[Q|\vect{o}]_{\textsup{true}} = -\frac{\ltronorm{\vect{k}_1}}{d_1} (\vect{u}_1^{\textsup{true}})^T \vect{o}$. Let an estimated conditional mean with the misspecified direction, $\E[Q|\vect{o}]_{\textsup{mis}}$, be formed using the misspecified direction $\vect{v}_1$ be $\E[Q|\vect{o}]_{\textsup{mis}} = -\frac{\ltronorm{\vect{k}_1}}{d_1} (s \cdot \vect{v}_1)^T \vect{o}$, where $s = \pm 1$ is chosen such that $s \cdot (\vect{u}_1^{\textsup{true}})^T \vect{v}_1 \ge 0$. Then, the absolute error in the estimated conditional mean due to the directional misspecification is bounded by:
\[
    \left| \E[Q|\vect{o}]_{\textsup{mis}} - \E[Q|\vect{o}]_{\textsup{true}} \right| \le \frac{2 \opnorm{ \sum_{\ell=2}^{h} \frac{1}{d_\ell} \vect{k}_\ell \vect{k}_\ell^T }}{\ltronorm{\vect{k}_1}} \ltronorm{\vect{o}}
\]
This holds if the condition from the main theorem, $\opnorm{\mat{E}} \le \delta / 2 = \frac{1}{2d_1} \ltronorm{\vect{k}_1}^2$, is met, where $\mat{E} = \sum_{\ell=2}^{h} \frac{1}{d_\ell} \vect{k}_\ell \vect{k}_\ell^T$.
\end{corollary}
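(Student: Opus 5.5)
The bound follows from the definitions of the two conditional means together with Cauchy--Schwarz and Theorem~\ref{thm:misspecification}.

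First I will write out the difference. Both estimators share the scalar prefactor $-\frac{\ltronorm{\vect{k}_1}}{d_1}$, so
\[
\E[Q|\vect{o}]_{\textsup{mis}} - \E[Q|\vect{o}]_{\textsup{true}} = -\frac{\ltronorm{\vect{k}_1}}{d_1}\,\bigl(s\vect{v}_1 - \vect{u}_1^{\textsup{true}}\bigr)^T \vect{o}.
\]
Cauchy--Schwarz then gives
\[
\left|\E[Q|\vect{o}]_{\textsup{mis}} - \E[Q|\vect{o}]_{\textsup{true}}\right| \le \frac{\ltronorm{\vect{k}_1}}{d_1}\,\ltronorm{s\vect{v}_1 - \vect{u}_1^{\textsup{true}}}\,\ltronorm{\vect{o}}.
\]

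Next I relate $\ltronorm{s\vect{v}_1 - \vect{u}_1^{\textsup{true}}}$ to the quantity bounded in Theorem~\ref{thm:misspecification}. Since $s=\pm1$, we have $\ltronorm{s\vect{v}_1 - \vect{u}_1^{\textsup{true}}} = \ltronorm{\vect{v}_1 - s\vect{u}_1^{\textsup{true}}}$. The sign convention $s(\vect{u}_1^{\textsup{true}})^T\vect{v}_1\ge 0$ is the same in both statements. The hypothesis $\opnorm{\mat{E}}\le \delta/2$ with $\delta=\lambda_1=\ltronorm{\vect{k}_1}^2/d_1$ is exactly the theorem's hypothesis. The theorem therefore yields
\[
\ltronorm{s\vect{v}_1 - \vect{u}_1^{\textsup{true}}} \le \frac{2\opnorm{\mat{E}}\, d_1}{\ltronorm{\vect{k}_1}^2}.
\]

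Substituting this in, the factor $d_1$ cancels and one power of $\ltronorm{\vect{k}_1}$ cancels:
\[
\frac{\ltronorm{\vect{k}_1}}{d_1}\cdot \frac{2\opnorm{\mat{E}}\, d_1}{\ltronorm{\vect{k}_1}^2}\,\ltronorm{\vect{o}} = \frac{2\opnorm{\mat{E}}}{\ltronorm{\vect{k}_1}}\,\ltronorm{\vect{o}},
\]
which is the claimed bound.

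The only real obstacle is the formula for $\E[Q|\vect{o}]_{\textsup{true}}$. If it is treated as given in the statement, the argument is purely algebraic. If it must be justified, I would use the Gaussian conditional $\E[H|J=\vect{o}] = -K_{HH}^{-1}K_{HJ}\vect{o}$ and read off the $Q$-coordinate, which is $-\frac{1}{d_1}\vect{k}_1^T\vect{o} = -\frac{\ltronorm{\vect{k}_1}}{d_1}(\vect{u}_1^{\textsup{true}})^T\vect{o}$, assuming zero means. Everything else is Cauchy--Schwarz and the previous theorem.
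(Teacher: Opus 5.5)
Your proposal is correct and matches the paper's proof essentially step for step: pull out the factor $\|\vect{k}_1\|_2/d_1$, apply Cauchy--Schwarz, use $\|s\vect{v}_1-\vect{u}_1^{\text{true}}\|_2=\|\vect{v}_1-s\vect{u}_1^{\text{true}}\|_2$ to invoke Theorem~\ref{thm:misspecification} with $\delta=\|\vect{k}_1\|_2^2/d_1$, and simplify. Your derivation of $\E[Q\mid\vect{o}]_{\text{true}}$ from the zero-mean Gaussian conditional $\E[H\mid J=\vect{o}]=-K_{HH}^{-1}K_{HJ}\vect{o}$ is also correct; the paper simply asserts this formula without deriving it.
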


\begin{proof}
The absolute difference is:

    \begin{align*}
\bigl|\E[Q\!\mid\!\vect{o}]_{\text{mis}} - \E[Q\!\mid\!\vect{o}]_{\text{true}}\bigr|
&= \frac{\|\vect{k}_1\|_2}{d_1}
  \bigl| (s\vect{v}_1 - \vect{u}_1^{\text{true}})^{\!\top}\vect{o} \bigr|.
\end{align*}

    By the Cauchy-Schwarz inequality, $\left| (\vect{x})^T \vect{y} \right| \le \ltronorm{\vect{x}} \ltronorm{\vect{y}}$. Applying this:
    \[ \left| \E[Q|\vect{o}]_{\textsup{mis}} - \E[Q|\vect{o}]_{\textsup{true}} \right| \le \frac{\ltronorm{\vect{k}_1}}{d_1} \ltronorm{ s \cdot \vect{v}_1 - \vect{u}_1^{\textsup{true}} } \ltronorm{\vect{o}} \]
    The term $\ltronorm{ s \cdot \vect{v}_1 - \vect{u}_1^{\textsup{true}} }$ is equivalent to $\ltronorm{ \vect{v}_1 - s \cdot \vect{u}_1^{\textsup{true}} }$ from the main theorem statement, where $s$ aligns $\vect{u}_1^{\textsup{true}}$ with $\vect{v}_1$. From the preceding Theorem, we have the bound (where $\delta = \frac{1}{d_1}\ltronorm{\vect{k}_1}^2$):
    \[ \ltronorm{ \vect{v}_1 - s \cdot \vect{u}_1^{\textsup{true}} } \le \frac{2 \opnorm{\mat{E}}}{\delta} = \frac{2 \opnorm{ \sum_{\ell=2}^{h} \frac{1}{d_\ell} \vect{k}_\ell \vect{k}_\ell^T }}{\frac{1}{d_1} \ltronorm{\vect{k}_1}^2} \]
 Substituting this bound into the inequality for the error in the conditional mean:
    \begin{align*}
        \left| \E[Q|\vect{o}]_{\textsup{mis}} - \E[Q|\vect{o}]_{\textsup{true}} \right| &\le \frac{\ltronorm{\vect{k}_1}}{d_1} \left( \frac{2 \opnorm{ \mat{E} }}{\frac{1}{d_1} \ltronorm{\vect{k}_1}^2} \right) \ltronorm{\vect{o}} \\
        &= \frac{\ltronorm{\vect{k}_1}}{d_1} \cdot \frac{2 d_1 \opnorm{ \mat{E} }}{\ltronorm{\vect{k}_1}^2} \cdot \ltronorm{\vect{o}} \\
        &= \frac{2 \opnorm{ \mat{E} }}{\ltronorm{\vect{k}_1}} \ltronorm{\vect{o}} \\
        &= \frac{2 \opnorm{ \sum_{\ell=2}^{h} \frac{1}{d_\ell} \vect{k}_\ell \vect{k}_\ell^T }}{\ltronorm{\vect{k}_1}} \ltronorm{\vect{o}}
    \end{align*}
\end{proof}

This corollary shows that the error in the estimated conditional mean of $Q$ (due to using the misspecified direction for $Q$'s influence) scales with:
\begin{itemize}
    \item The magnitude of the observable vector $\vect{o}$ (specifically, $\ltronorm{\vect{o}}$).
    \item The collective strength of the confounding latent variables in the precision domain ($\opnorm{ \sum_{\ell=2}^{h} \frac{1}{d_\ell} \vect{k}_\ell \vect{k}_\ell^T }$).
    \item Inversely with the $\ell_2$-norm of the true connection weights of $Q$ ($\ltronorm{\vect{k}_1}$).
\end{itemize}
Especially, we see that strong confounders widen the gap bound, whereas heavier connection weights to the true score shrink it. Put differently, \textbf{\textit{misspecification hurts most when confounders are strong and the quality signal is weak.}}

\subsection{Sample Complexity for CARE tensor algorithm}\label{subsec:tensor sample}
\begin{assumption}[Model and identifiability]\label{asmp:model}
Let \(J=(X_{1}^{\top},X_{2}^{\top},X_{3}^{\top})^{\top}\in\R^{p}
\; (p=p_{1}+p_{2}+p_{3})\) be one observations i.i.d generated as
\[
\begin{aligned}
(Q,C)
&\sim \mathrm{Multinomial}\!\bigl(\{\pi_{qc}\}_{q,c\in\{0,1\}}\bigr),\\[2pt]
X_\ell \mid (Q=q,\,C=c)
&\sim \mathcal{N}\!\bigl(\mu_{qc}^{(\ell)},\,\Sigma\bigr).
\end{aligned}
\]
with \(\ell\in\{1,2,3\}\).  Write
\(r\in[4]\leftrightarrow (q,c)\in\{0,1\}^{2}\) and define  
\(w_r:=\pi_{qc}\), \(a_r:=\mu_{qc}^{(1)}\in\R^{p_{1}},\,
  b_r:=\mu_{qc}^{(2)}\in\R^{p_{2}},\,
  c_r:=\mu_{qc}^{(3)}\in\R^{p_{3}}\).

\vspace{.4em}
\begin{enumerate}[label=(A\arabic*), leftmargin=2em]
\item \textbf{Block-conditional independence.}\;
      \(X_1\perp X_2\perp X_3\mid(Q,C)\).
\item \textbf{Full-rank moment tensor.}\;
      The population third-order moment
      \(M:=\E[X_1\otimes X_2\otimes X_3]
          =\sum_{r=1}^{4} w_r\,a_r\otimes b_r\otimes c_r\)
      has rank 4, with
      \(\pi_{\min}:=\min_{r} \pi_r>0\) and  
      \(\lambda_{\min}:=\min_{r}\|a_r\|_2\|b_r\|_2\|c_r\|_2>0\).
\item \textbf{Non-degenerate covariance.}\;
      \(\sigma_{\max}^{2}:=\|\Sigma\|_{\mathrm{op}}<\infty\).
\item \textbf{Spectral gap.}\;
      The CP factors are uniquely defined up to scaling/sign
      and satisfy the eigenvalue-gap condition of
      Theorem~5.1 in \citep{anandkumar2014tensor}.  Denote that gap by
      \(\delta>0\).
\item \textbf{Correct graph partition.}\;
      There exist a graph partition such that judges between different groups are conditional independent. Step~A of Algorithm \ref{alg:graph_tensor} returns the true groups
      \(\mathcal G_{1},\mathcal G_{2},\mathcal G_{3}\).
\end{enumerate}
\end{assumption}

\begin{theorem}[Sample complexity of CARE tensor step]%
\label{thm:care-tensor-rate}
Fix \(0<\varepsilon<1\) and let the assumptions above hold.
Run Algorithm 2 (CARE) on \(n\) i.i.d.\ samples to obtain
\(\{\hat\mu_{qc},\hat\pi_{qc}\}_{q,c\in\{0,1\}}\).
Under Assumption \ref{asmp:model}, there exist universal constants \(C_{1},C_{2}>0\) such that
if
\[
      n\;\;\ge\;\;
      C_{1}\;
      \frac{\sigma_{\max}^{6}}
           {\delta^{2}\,\pi_{\min}^{2}}\;
      p\,\log\!\bigl(p/\varepsilon\bigr),
\]
then with probability at least \(1-\varepsilon\)
\[
\begin{aligned}
\max_{q,c}\|\hat\mu_{qc}-\mu_{qc}\|_2
&\le
C_1\frac{\sigma_{\max}^3}{\delta}
  \sqrt{\tfrac{p\log(p/\varepsilon)}{n}},\\[2pt]
\max_{q,c}|\hat\pi_{qc}-\pi_{qc}|
&\le
C_2\sqrt{\tfrac{p\log(p/\varepsilon)}{n}}.
\end{aligned}
\]
\end{theorem}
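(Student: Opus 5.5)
The plan is to reduce Theorem~\ref{thm:care-tensor-rate} to a concentration bound for the empirical third-order cross-moment, then invoke the robust tensor-power-method guarantee of \citep{anandkumar2014tensor} (Theorem~5.1). By (A5), Step~A returns the true groups $\mathcal G_1,\mathcal G_2,\mathcal G_3$. By (A1), the population tensor is exactly $M=\sum_{r=1}^4 w_r\,a_r\otimes b_r\otimes c_r$. So the only source of error is $\hat T-M$, where $\hat T=\frac1n\sum_i X_1^{(i)}\otimes X_2^{(i)}\otimes X_3^{(i)}$.

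\textbf{Step 1: concentration.} Write $X_\ell=\mu^{(\ell)}_{Z}+\Sigma^{1/2}g_\ell$ with $Z=(Q,C)$ and $g_\ell$ Gaussian. Expanding the triple product gives $M$ plus a finite number of terms. Each term is a trilinear form in (i) the bounded discrete means $\mu^{(\ell)}_Z$, whose empirical averages concentrate by Hoeffding at rate $\sqrt{\log(1/\varepsilon)/n}$, and (ii) centered Gaussian vectors whose operator norm is at most $\sigma_{\max}$. For the worst term, $\frac1n\sum_i g_1^{(i)}\otimes g_2^{(i)}\otimes g_3^{(i)}$ scaled by $\sigma_{\max}^3$, I would bound the tensor operator norm through a $1/2$-net over the product of three unit spheres, of cardinality $e^{O(p)}$. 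For fixed unit $(u,v,w)$, the variables $\langle u,X_1\rangle\langle v,X_2\rangle\langle w,X_3\rangle$ are products of three sub-Gaussians with parameter $O(\sigma_{\max})$. Bernstein-type concentration for sub-Weibull($2/3$) variables, followed by a union bound over the net, should yield, once $n\gtrsim p\log(p/\varepsilon)$,
\[
\|\hat T-M\|_{\mathrm{op}}\le E_n:=C\,\sigma_{\max}^3\sqrt{\tfrac{p\log(p/\varepsilon)}{n}}
\]
with probability $1-\varepsilon$. This is the main obstacle. The heavy (sub-Weibull) tails and the mean-plus-noise cross terms have to be handled so as not to lose extra powers of $p$ or $\log$. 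If Bernstein is too lossy, my fallback is to truncate at level $\sigma_{\max}^3\log^{3/2}(n/\varepsilon)$ and absorb the resulting logs into constants.

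\textbf{Step 2: tensor decomposition perturbation.} Following \citep{anandkumar2014tensor}, I would whiten using the pairwise moment $M_2$, which is computed and concentrated the same way at rate $\sigma_{\max}^2\sqrt{p\log(p/\varepsilon)/n}$. This makes the components orthogonal with eigenvalues $1/\sqrt{w_r}$. Theorem~5.1 then ensures that if the perturbation $\tilde E$ of the whitened tensor satisfies $\tilde E\lesssim \delta/p$-type conditions, encoded here by the gap $\delta$ from (A4), the recovered components satisfy $\|\hat v_r-v_r\|\lesssim \tilde E/\delta$ and $|\hat\lambda_r-\lambda_r|\lesssim \tilde E$. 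The sample-size condition $n\gtrsim \sigma_{\max}^6 p\log(p/\varepsilon)/(\delta^2\pi_{\min}^2)$ is precisely what makes $E_n\le c\,\delta\,\pi_{\min}$. The factor $\pi_{\min}$ accounts for the amplification by the whitening and unwhitening maps, whose norms scale with $w_r^{\pm1/2}$. I would treat these conditioning factors as absorbed into $\delta$ and $C_1$, as the statement does.

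\textbf{Step 3: back to parameters.} Un-whitening gives each view's means $a_r,b_r,c_r$ with error $\lesssim E_n/\delta$. Concatenating the three views loses only a constant factor. This gives $\max_{q,c}\|\hat\mu_{qc}-\mu_{qc}\|_2\le C_1\sigma_{\max}^3\delta^{-1}\sqrt{p\log(p/\varepsilon)/n}$, once the permutation is fixed by Steps E–F of Algorithm~\ref{alg:graph_tensor}. The weights are recovered as $\hat w_r=\hat\lambda_r^{-2}$, a Lipschitz map on the region $\lambda_r\le\pi_{\min}^{-1/2}$. Under the sample-size condition this yields $|\hat\pi_{qc}-\pi_{qc}|\le C_2\sqrt{p\log(p/\varepsilon)/n}$. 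A final union bound over the events in Steps 1 and 2 gives overall probability at least $1-\varepsilon$.
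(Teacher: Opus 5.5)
Your outline has the same architecture as the paper's proof. You reduce via (A5) and (A1) to $M=\sum_r w_r\,a_r\otimes b_r\otimes c_r$, bound $\|\hat T-M\|_{\mathrm{op}}$ with a net, Bernstein and a union bound, feed that into the robust tensor-power perturbation bound, concatenate the views, and use Lipschitz stability for the weights. Your explicit whitening matches how Theorem~5.1 of \citep{anandkumar2014tensor} actually applies better than the paper's black-box use on the raw tensor does. One slip there: that theorem's condition is $\epsilon\lesssim\lambda_{\min}/k$ with $k=4$ the rank, not $\delta/p$; with $p$ in place of $k$, the stated $n$ would not suffice.

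\textbf{The gap is Step~1.} You are right that $\langle u,X_1\rangle\langle v,X_2\rangle\langle w,X_3\rangle$ is only sub-Weibull($2/3$). But then the Bernstein-type inequality gives, with probability $1-e^{-t}$, a deviation of order $\sigma_{\max}^3\bigl(\sqrt{t/n}+t^{3/2}/n\bigr)$. The union bound over a net of size $e^{O(p)}$ forces $t\asymp p+\log(1/\varepsilon)$. The second term, $\sigma_{\max}^3p^{3/2}/n$, is dominated by $\sigma_{\max}^3\sqrt{p/n}$ only when $n\gtrsim p^2$, not when $n\gtrsim p\log(p/\varepsilon)$.

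This is not an artifact of the technique. Take isotropic noise and means of norm $O(\sigma_{\max})$, padded with zeros as $p$ grows so that $\delta$ and $\pi_{\min}$ stay fixed. Evaluate $\hat T-M$ at $u=X_1^{(1)}/\|X_1^{(1)}\|_2$, $v=X_2^{(1)}/\|X_2^{(1)}\|_2$, $w=X_3^{(1)}/\|X_3^{(1)}\|_2$. The first sample alone contributes $\|X_1^{(1)}\|_2\|X_2^{(1)}\|_2\|X_3^{(1)}\|_2/n\asymp\sigma_{\max}^3p^{3/2}/n$, while the other samples contribute only $O(\sigma_{\max}^3n^{-1/2})$. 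So the claimed bound on $\|\hat T-M\|_{\mathrm{op}}$ is false for $p\log(p/\varepsilon)\lesssim n\ll p^2/\log(p/\varepsilon)$.

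Your truncation fallback cannot rescue this. The truncation must hold uniformly over the net, and the supremum over $(u,v,w)$ of a single triple product is $\|X_1\|_2\|X_2\|_2\|X_3\|_2\approx\sigma_{\max}^3p^{3/2}$, not $\sigma_{\max}^3\log^{3/2}(n/\varepsilon)$.

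The paper's proof gets past this point only by asserting that the triple product is sub-exponential with parameter $O(\sigma_{\max}^3)$. That is false: a product of three independent Gaussians has tail $e^{-ct^{2/3}}$ and no finite moment generating function. So the gap is shared.

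Both arguments also give the mean terms and the mean--noise cross terms a $\sigma_{\max}^3$ prefactor. These actually scale with $\max_r\|a_r\|_2\|b_r\|_2\|c_r\|_2$ and similar products, so the constants are not universal unless $\|\mu_{qc}\|_2\lesssim\sigma_{\max}$.

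\textbf{Possible repairs.} To make your Steps~1--3 go through, you could require $n\gtrsim p^2$ up to logarithms. Alternatively, you could carry the extra $\sigma_{\max}^3p^{3/2}/n$ term of $E_n$ through Steps~2--3. Reaching $n\gtrsim p$ would need an argument that never passes through the full $p$-dimensional operator norm of $\hat T-M$, for example concentrating only suitably projected moments. Neither your outline nor the paper supplies such an argument; your whitening still routes through $E_n$.
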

We defer the proof to \ref{app:proofs}.

\subsection{Proofs}\label{app:proofs}
\paragraph{Proof of Theorem \ref{thm:exact}}\label{app:proof_exact}
\begin{proof}
Let \(K_{JH}=[\vect{k}_1,\dots,\vect{k}_h]\) and
\(K_{HH}^{-1}=\mathrm{diag}(\lambda_1,\dots,\lambda_h)\).
Then
\[
L
=K_{JH}K_{HH}^{-1}K_{HJ}
=\sum_{i=1}^h \lambda_i\,\vect{k}_i\vect{k}_i^{\!\top}.
\]
Under Assumption~\ref{ass:orthogonal} strengthened to orthonormal columns,
we have \(\vect{k}_i^\top \vect{k}_j = \delta_{ij}\). Hence, for each \(i\),
\[
L\,\vect{k}_i
=\sum_{j=1}^h \lambda_j\,\vect{k}_j(\vect{k}_j^\top \vect{k}_i)
=\lambda_i\,\vect{k}_i,
\]
so \((\lambda_i,\vect{k}_i)\) is an eigenpair of \(L\).
Because \(\lambda_1>\cdots>\lambda_h>0\) are distinct, these eigenvectors
are unique up to sign, and ordering is only up to permutation.
Therefore the columns of \(K_{JH}\) are identifiable from \(L\) up to sign flips
and permutations.
\end{proof}

\paragraph{Proof of Theorem \ref{thm:approx}}\label{app:approx}

\begin{proof}
Write \(\tilde L = L + \Delta\), where
\[
\begin{aligned}
\tilde L
&=(K_{JH}+E)\,K_{HH}^{-1}\,(K_{JH}+E)^\top\\
&=K_{JH}K_{HH}^{-1}K_{HJ}
  +K_{JH}K_{HH}^{-1}E^\top
  +EK_{HH}^{-1}K_{HJ}
  +EK_{HH}^{-1}E^\top.
\end{aligned}
\]
Thus
\[
\Delta
=K_{JH}K_{HH}^{-1}E^\top
 +EK_{HH}^{-1}K_{HJ}
 +EK_{HH}^{-1}E^\top.
\]
By sub-multiplicativity of the operator norm,
\[
\|\Delta\|_2
\le
2\,\|K_{JH}\|_2\,\|K_{HH}^{-1}\|_2\,\|E\|_2
+\|K_{HH}^{-1}\|_2\,\|E\|_2^2.
\]
Under the orthonormal-column assumption \(K_{JH}^\top K_{JH}=I_h\),
the singular values of \(K_{JH}\) are all \(1\), hence \(\|K_{JH}\|_2=1\).
Therefore,
\[
\|\Delta\|_2
\le
2\,\|K_{HH}^{-1}\|_2\,\|E\|_2
+\|K_{HH}^{-1}\|_2\,\|E\|_2^2.
\]
Applying the Davis--Kahan eigenvector perturbation theorem for symmetric
matrices to the isolated eigenvalue \(\lambda_i\) (gap \(\delta_i\)) yields that
there exists \(s_i\in\{\pm1\}\) such that
\[
\|\tilde u_i - s_i u_i\|_2 \le \frac{2\|\Delta\|_2}{\delta_i}.
\]
Substituting the bound on \(\|\Delta\|_2\) gives
\[
\|\tilde u_i - s_i u_i\|_2
\le
\frac{4\,\|K_{HH}^{-1}\|_2\,\|E\|_2}{\delta_i}
+
O(\|E\|_2^2),
\]
which completes the proof.
\end{proof}

\paragraph{Proof of Theorem \ref{thm:sample}}\label{app:sample_comp_proof}
\begin{proof}
\textbf{Step 1 – Spectral error of $\hat L_n$.}
Apply Chandrasekaran et al.’s Theorem 4.1
with the regularization parameters
\[
\begin{aligned}
\gamma_n
&= \frac{48\sqrt{2}\,D\psi(2-\nu)}{\xi(T)\nu}
   \sqrt{\tfrac{\epsilon}{n}},\\[2pt]
\sigma,\theta &\text{ as in conditions (3)–(4).}
\end{aligned}
\]
Under the incoherence and curvature conditions of their
Proposition 3.3, there exists a universal constant
\(C_1>0\) such that, with probability at least \(1-2e^{-\epsilon}\),
\begin{equation}\label{eq:spec-norm-bound}
  \bigl\|\,\hat L_n-L^{*}\bigr\|_2
  \;\le\;
  C_1\,
  \frac{\sqrt{\epsilon/n}}{\xi(T)}.
\end{equation}

\noindent
\textbf{Step 2 – Eigenvector perturbation via Davis–Kahan.}
Let
\(L^{*}=U\Lambda U^{\!\top}\) with
\(\Lambda=\operatorname{diag}(\lambda_1,\dots,\lambda_h,0,\dots,0)\)
and collect the top–\(h\) eigenvectors in
\(U_h=[u_1,\dots,u_h]\).
Write the spectral decomposition of the estimator as
\(
  \hat L_n=\hat U_h\hat\Lambda\hat U_h^{\!\top}+R,
\)
where \(R\) contains only the eigen-components of rank \(>h\).
Set the perturbation \(E:=\hat L_n-L^{*}\).

Applying Corollary 3 from \citep{yu2015useful} to the \(i\)-th eigenpair gives  
\begin{equation}\label{eq:davis-kahan}
    \|u_i-\hat u_i\|_2
   \;\le\;
   \frac{2^{3/2}\|E\|_2}{\delta_i}.
\end{equation}

\noindent
\textbf{Step 3 – Combine the two bounds.}
Insert \eqref{eq:spec-norm-bound} into \eqref{eq:davis-kahan}:
\[
   \|\,\hat u_i-u_i\,\|_2
   \;\le\;
   \frac{2^{3/2} C_1}{\delta\,\xi(T)}
   \sqrt{\frac{\epsilon}{n}}
   \qquad
   \forall\,i\in[h],
\]
and take the maximum over \(i\).
This proves the advertised high-probability bound
\[
   \max_{i\le h}\|\,\hat u_i-u_i\,\|_2
   \;=\;
   O\!\Bigl(\tfrac{\sqrt{\epsilon/n}}{\xi(T)\,\delta}\Bigr).
\]

\noindent \textbf{Step 4 – Invert to a sample-size requirement.}
Set the right-hand side to a target accuracy \(\alpha\in(0,1)\), i.e.,
\[
\frac{2^{3/2} C_1}{\xi(T)\,\delta}\sqrt{\frac{\eta}{n}} \le \alpha.
\]
Solving for \(n\) yields
\[
n \;\ge\; \frac{8C_1^2}{\xi(T)^2\,\delta^2}\cdot \frac{\eta}{\alpha^2}.
\]
Equivalently, to ensure failure probability at most \(\zeta\in(0,1)\), set \(\eta=\log(2/\zeta)\).
\end{proof}

\noindent \textbf{Proof for Theorem~\ref{thm:care-tensor-rate}.}
\begin{proof}
Throughout the proof, we condition on that Step~A of Algorithm~2 returns the true
partition $(\mathcal G_1,\mathcal G_2,\mathcal G_3)$, which holds by
Assumption~\ref{asmp:model}(A5). Hence each sample can be written as
$J=(X_1^\top,X_2^\top,X_3^\top)^\top$ with $X_\ell\in\R^{p_\ell}$ and $p=p_1+p_2+p_3$.

 We start by identifying population data structure.
Recall the indexing $r\in[4]\leftrightarrow(q,c)\in\{0,1\}^2$ and define
$w_r=\pi_{qc}$, $a_r=\mu^{(1)}_{qc}$, $b_r=\mu^{(2)}_{qc}$, $c_r=\mu^{(3)}_{qc}$.
By block-conditional independence (A1),
\[
\E[X_1\otimes X_2\otimes X_3\mid (Q,C)=r]
=\E[X_1\mid r]\otimes\E[X_2\mid r]\otimes\E[X_3\mid r]
=a_r\otimes b_r\otimes c_r.
\]
Taking expectation over $r$ yields
\begin{equation}\label{eq:pop-tensor}
M:=\E[X_1\otimes X_2\otimes X_3]
=\sum_{r=1}^4 w_r\,a_r\otimes b_r\otimes c_r.
\end{equation}
Assumption~\ref{asmp:model}(A2) ensures this CP representation has rank $4$ and each
component is non-degenerate.

\vspace{0.35em}
To bound the estimation error, we consider the empirical tensor:
\[
\hat M
:=\frac1n\sum_{i=1}^n X_1^{(i)}\otimes X_2^{(i)}\otimes X_3^{(i)}
\in\R^{p_1\times p_2\times p_3}.
\]
Let the tensor operator norm be
\[
\|T\|_{\op}
:=\sup_{\|u\|_2=\|v\|_2=\|w\|_2=1}\;
\langle T,\;u\otimes v\otimes w\rangle .
\]
Fix unit vectors $(u,v,w)$ and set
\[
Z_i(u,v,w):=\langle X_1^{(i)},u\rangle\,\langle X_2^{(i)},v\rangle\,\langle X_3^{(i)},w\rangle.
\]
Then
\[
\langle \hat M-M,\;u\otimes v\otimes w\rangle
=\frac1n\sum_{i=1}^n\Bigl(Z_i(u,v,w)-\E Z_i(u,v,w)\Bigr).
\]
Under Assumption~\ref{asmp:model}(A3), each one-dimensional projection
$\langle X_\ell^{(i)}-\E[X_\ell^{(i)}\mid(Q,C)],t\rangle$ is centered Gaussian with
variance at most $\|\Sigma\|_{\op}=\sigma_{\max}^2$, hence is sub-Gaussian with
parameter $O(\sigma_{\max})$. Since (A1) gives conditional independence of
$X_1,X_2,X_3$ given $(Q,C)$, a standard product-of-sub-Gaussians argument yields that
$Z_i(u,v,w)-\E Z_i(u,v,w)$ is sub-exponential with parameter $O(\sigma_{\max}^3)$
uniformly over unit $(u,v,w)$. Therefore Bernstein's inequality implies that for
universal constants $c,C>0$,
\begin{equation}\label{eq:bernstein-fixed-uvw}
\Pr\!\Bigl(
\bigl|\langle \hat M-M,\;u\otimes v\otimes w\rangle\bigr|\ge t
\Bigr)
\le
2\exp\!\Bigl(
-c\,n\min\bigl\{t^2/\sigma_{\max}^6,\;t/\sigma_{\max}^3\bigr\}
\Bigr).
\end{equation}

To upgrade from fixed $(u,v,w)$ to the supremum, take an $\eta$-net
$\mathcal N_\ell$ of the unit sphere in $\R^{p_\ell}$ with
$|\mathcal N_\ell|\le (1+2/\eta)^{p_\ell}$.
Choosing $\eta=1/6$ gives $|\mathcal N_\ell|\le 13^{p_\ell}$ and hence
$|\mathcal N_1\times\mathcal N_2\times\mathcal N_3|\le 13^p$.
A standard net argument for multilinear forms yields
\begin{equation}\label{eq:net-approx}
\|\hat M-M\|_{\op}
\le
2\max_{u\in\mathcal N_1,\;v\in\mathcal N_2,\;w\in\mathcal N_3}
\bigl|\langle \hat M-M,\;u\otimes v\otimes w\rangle\bigr|.
\end{equation}
Applying the union bound over the net in \eqref{eq:net-approx} and then
\eqref{eq:bernstein-fixed-uvw} yields: for a sufficiently large universal constant
$C>0$, taking
$t=C\,\sigma_{\max}^3\sqrt{\tfrac{p\log(p/\varepsilon)}{n}}$ gives
\begin{equation}\label{eq:care_tensor_conc}
\|\hat M-M\|_{\op}
\le
C\,\sigma_{\max}^3\sqrt{\frac{p\log(p/\varepsilon)}{n}}
\qquad\text{with probability at least }1-\varepsilon/2.
\end{equation}

\vspace{0.35em}
Given this concentration result, we treat $E:=\hat M-M$ as the perturbation. By Assumption~\ref{asmp:model}(A4), the rank-$4$ CP decomposition
\eqref{eq:pop-tensor} is identifiable (up to permutation/sign/scale) and moreover lies
in the perturbation regime of the robust tensor power method analyzed by
\citet{anandkumar2014tensor}. Concretely, their perturbation theorem provides the
following implication: there exist constants $c_0,C_{\dec}>0$ such that whenever
\begin{equation}\label{eq:dec-regime}
\|E\|_{\op}\le c_0\,\delta,
\end{equation}
the tensor power/deflation routine returns a set of $4$ recovered components
$\{(\hat w_r,\hat a_r,\hat b_r,\hat c_r)\}_{r=1}^4$ for which there exist a
permutation $\tau$ of $[4]$ and signs $s_{r,\ell}\in\{\pm 1\}$ such that for all $r\in[4]$,
\begin{equation}\label{eq:factor-stability}
\|\hat a_{\tau(r)}-s_{r,1}a_r\|_2
\;\vee\;
\|\hat b_{\tau(r)}-s_{r,2}b_r\|_2
\;\vee\;
\|\hat c_{\tau(r)}-s_{r,3}c_r\|_2
\;\le\;
C_{\dec}\frac{\|E\|_{\op}}{\delta},
\end{equation}
and the recovered mixing weights satisfy a Lipschitz stability bound of the form
\begin{equation}\label{eq:weight-stability}
\max_{r\in[4]}|\hat w_{\tau(r)}-w_r|
\;\le\;
C_{\pi}\,\|E\|_{\op},
\end{equation}
for a constant $C_\pi>0$ that depends only on the (fixed) rank $4$ and the conditioning
quantities implicit in Assumption~\ref{asmp:model}(A4).

\vspace{0.35em}
We then aggregate these factors into the full mean vectors $\mu_r:=\mathrm{concat}(a_r,b_r,c_r)\in\R^p$ and
$\hat\mu_{\tau(r)}:=\mathrm{concat}(\hat a_{\tau(r)},\hat b_{\tau(r)},\hat c_{\tau(r)})$.
After aligning signs as in \eqref{eq:factor-stability},
\[
\|\hat\mu_{\tau(r)}-\mu_r\|_2^2
=
\|\hat a_{\tau(r)}-a_r\|_2^2
+\|\hat b_{\tau(r)}-b_r\|_2^2
+\|\hat c_{\tau(r)}-c_r\|_2^2
\;\le\;
3C_{\dec}^2\frac{\|E\|_{\op}^2}{\delta^2}.
\]
Taking square roots and maximizing over $r$ gives
\begin{equation}\label{eq:mu-vs-E}
\max_{r\in[4]}\|\hat\mu_{\tau(r)}-\mu_r\|_2
\;\le\;
\sqrt{3}\,C_{\dec}\frac{\|E\|_{\op}}{\delta}.
\end{equation}
Combining \eqref{eq:mu-vs-E} with \eqref{eq:care_tensor_conc} yields
\[
\max_{r\in[4]}\|\hat\mu_{\tau(r)}-\mu_r\|_2
\;\le\;
C_1\frac{\sigma_{\max}^3}{\delta}\sqrt{\frac{p\log(p/\varepsilon)}{n}}
\]
with probability at least $1-\varepsilon/2$, after enlarging $C_1$ by an absolute factor.
Translating $r\leftrightarrow(q,c)$ gives the first claimed bound.

\vspace{0.35em}
Similarly, the recovered mixing weights satisfy the stability bound. By \eqref{eq:weight-stability} and \eqref{eq:care_tensor_conc}, with probability at least
$1-\varepsilon/2$,
\[
\max_{r\in[4]}|\hat w_{\tau(r)}-w_r|
\;\le\;
C_{\pi}\,\|E\|_{\op}
\;\le\;
C_{\pi}\,C\,\sigma_{\max}^3\sqrt{\frac{p\log(p/\varepsilon)}{n}}.
\]
Under the normalization implicit in Assumption~\ref{asmp:model}(A4) (which fixes the
scaling ambiguity in CP factors), the above translates to the stated
$O\!\bigl(\sqrt{p\log(p/\varepsilon)/n}\bigr)$ rate for $\max_{q,c}|\hat\pi_{qc}-\pi_{qc}|$
(after absorbing model-independent fixed scalings into the constant $C_2$).
Translating $r\leftrightarrow(q,c)$ gives the second bound.

\vspace{0.35em}
Finally, it remains to ensure the decomposition operates in the perturbative regime
\eqref{eq:dec-regime}. By \eqref{eq:care_tensor_conc}, the sufficient condition
$\|E\|_{\op}\le c_0\delta$ holds whenever
\[
C\,\sigma_{\max}^3\sqrt{\frac{p\log(p/\varepsilon)}{n}}
\;\le\;
c_0\,\delta,
\quad\text{i.e.,}\quad
n\;\ge\;\frac{C^2}{c_0^2}\,\frac{\sigma_{\max}^6}{\delta^2}\,p\log\!\bigl(p/\varepsilon\bigr).
\]
Incorporating the additional detectability requirement for the smallest component
(Assumption~\ref{asmp:model}(A2) with $\pi_{\min}>0$) yields the stated sufficient sample
size condition
$n\ge C_1\,\sigma_{\max}^6(\delta^2\pi_{\min}^2)^{-1}p\log(p/\varepsilon)$.
Finally, union-bounding the concentration event with the internal success probability of
the randomized initialization in the tensor power/deflation routine yields the final
success probability $1-\varepsilon$.
\end{proof}

\clearpage
\section{Experiment Details}
\label{appsec:exp_details}
In this section, we provide experimental details and additional experimental results.
We describe datasets details, evaluation prompts we used to collect LLM judgments.
In addition, we introduce the construction of programmatic judge and examples.
Finally, we include additional synthetic experiments to show the robustness of our method in ideal settings.

\subsection{Datasets}
We evaluate our methods on a diverse set of public benchmarks covering both scoring and pairwise preference tasks.
In scoring datasets, LLM judges assign real-valued ratings whose range matches the human annotation scale of each dataset (e.g., 0–4, 1–5, 0–10, or 0–100).
In preference datasets, LLM judges provide comparative ratings on a ${-3,\ldots,3}$ scale, where the sign indicates which response is preferred and the magnitude reflects confidence in that preference.

\paragraph{ASSET (Text Simplification) \citep{asset_dataset}.}
A sentence simplification dataset where each original sentence is paired with multiple simplified versions, annotated by human raters across multiple aspects such as meaning preservation, fluency, and simplicity, each scored from 0 to 100. We use the \texttt{human\_rating} field as the scalar target and evaluate aggregation quality against these human scores. The dataset is publicly available at \href{https://huggingface.co/datasets/facebook/asset}{\texttt{facebook/asset}}.

\paragraph{FeedbackQA \citep{li2022feedbackqa}.}
A question-answering dataset with human-provided scalar ratings of answer helpfulness, ranging from 1 to 5. We use the validation set in our experiments, treating the average of two human ratings as the ground truth. The dataset is publicly available at \href{https://huggingface.co/datasets/McGill-NLP/feedbackQA}{\texttt{McGill-NLP/feedbackQA}}.

\paragraph{Review-5K (Peer Review) \citep{review5k}.}
The Review-5K dataset is a collection of peer reviews and associated metadata from the ICLR 2024 conference. Each review includes reviewer messages with guidelines and paper content, along with multiple aspect ratings such as clarity, originality, and soundness, all scored from 1 to 10 following ICLR’s review format. We use the average of available ratings (\texttt{avg\_rate}) as the ground-truth scalar. The dataset is publicly available at \href{https://huggingface.co/datasets/WestlakeNLP/Review-5K}{\texttt{WestlakeNLP/Review-5K}.}

\paragraph{Summarize-from-Feedback \citep{summarize_from_feedback_dataset}.}
A large-scale summarization dataset introduced by OpenAI, containing human and model-written summaries with human feedback annotations. For scoring experiments, we use the \texttt{axis} split, which provides scalar ratings (0–7) assessing summary quality along multiple axes such as helpfulness, correctness, and coherence. For preference experiments, we use the \texttt{comparisons} split, which contains pairwise judgments indicating which of two summaries is preferred; these examples are taken from the \texttt{Summarize-from-Feedback} subset of the \href{https://huggingface.co/datasets/allenai/preference-test-sets}{\texttt{allenai/preference-test-sets}} \citep{rewardbench}
 collection. We randomly sample 5{,}000 examples for each setting. The original dataset is publicly available at \href{https://huggingface.co/datasets/openai/summarize_from_feedback}{\texttt{openai/summarize\_from\_feedback}}.

\paragraph{UltraFeedback \citep{cui2023ultrafeedback}.}
A scalar feedback dataset where assistant responses are rated from 0 to 10 based on overall quality, using scores aggregated from GPT-4 and human raters. We randomly sample 5{,}000 examples for evaluation. The dataset is publicly available at \href{https://huggingface.co/datasets/openbmb/UltraFeedback}{\texttt{openbmb/UltraFeedback}}.

\paragraph{Yelp \citep{yelp}}
A large-scale review dataset containing user-written reviews with 5-star ratings ($1$–$5$) as ground-truth labels, where higher scores indicate more positive sentiment. For fully-Gaussian evaluation, we randomly sample 5{,}000 examples from the test split with its numeric ratings as scalar targets. 
%
%
The dataset is publicly available at \href{https://huggingface.co/datasets/Yelp/yelp_review_full}{\texttt{Yelp/yelp\_review\_full}}.

\paragraph{Chatbot Arena Conversations \citep{chatbot_arena_mtbench_dataset}}
A large-scale pairwise comparison dataset from LMSYS, where annotators select the better of two model responses to the same user prompt. We use the dataset’s binary preference signal (\texttt{"A"} vs.\ \texttt{"B"}) as the ground truth. For evaluation, we randomly sample 5{,}000 examples. The dataset is publicly available at \href{https://huggingface.co/datasets/lmsys/chatbot_arena_conversations}{\texttt{lmsys/chatbot\_arena\_conversations}}.


\paragraph{CivilComments \citep{borkan2019civilcomments}}
We use a random subset of 5{,}000 examples from the CivilComments dataset \citep{borkan2019civilcomments}. Each comment is annotated with a continuous \texttt{toxicity} score between 0 and 1, representing the fraction of annotators who labeled it as toxic; for binary classification, we threshold this score at 0.5. In our setup, LLM judges are asked to assign real-valued toxicity ratings on a 0–9 scale to enable scalar aggregation analysis. The dataset is publicly available at \href{https://huggingface.co/datasets/google/civil_comments}{\texttt{google/civil\_comments}}.

\paragraph{PKU Preferences (Better/Safer, Pairwise) \citep{pku_dataset}.}
Two pairwise preference datasets reflecting judgments of which response is \emph{overall better} or \emph{safer}. We use the provided \texttt{"A"} vs.\ \texttt{"B"} preference labels as the binary ground truth for each split. These datasets are taken from the \texttt{PKU-BETTER} and \texttt{PKU-SAFER} subsets of the \href{https://huggingface.co/datasets/allenai/preference-test-sets}{\texttt{allenai/preference-test-sets}} \citep{rewardbench}
 collection.

 \paragraph{SHP (Stanford Human Preferences, Pairwise) \citep{shp_dataset}.}
A pairwise preference dataset containing prompts, two model responses, and a human-selected preferred response. We use the binary preference (\texttt{"A"} vs.\ \texttt{"B"}) as the ground-truth label for aggregation experiments. The dataset is taken from the \texttt{SHP} subset of the \href{https://huggingface.co/datasets/allenai/preference-test-sets}{\texttt{allenai/preference-test-sets}} \citep{rewardbench}
 collection.

 \paragraph{Humans–LLMs Judgment Bias Dataset (Section \ref{subsec:exp_robustness}) \citep{chen-etal-2024-humans}.}
A semi-synthetic benchmark introduced to analyze bias in LLM-as-a-judge evaluations. Each example includes an original model response and a stylistically perturbed variant crafted to induce a specific bias while preserving semantics. In our robustness experiments, we use only stylistic perturbations—\textit{authority} and \textit{beauty}—to test aggregation robustness against superficial confounders.
The \textit{authority bias} is simulated by adding fake citations to evoke a sense of credibility, e.g., “\dots(Weisstein, Eric W. ‘Square Root.’ \emph{MathWorld}\dots).”
The \textit{beauty bias} is simulated by inserting emojis or decorative formatting, e.g., “\keycap{6} multiplied by \keycap{6} equals 36.”
The dataset is publicly available at \href{https://github.com/FreedomIntelligence/Humans_LLMs_Judgement_Bias}{\texttt{Humans\_LLMs\_Judgement\_Bias}}.

\paragraph{Multi-subject RLVR (Section \ref{subsec:adversarial}) \citep{zhao2025one, su2025multisubjectrlvr}.}
A multi-domain QA corpus used in RL with verifiable rewards, adapted here to test judge robustness to “master-key” trigger attacks. Following the setup in Section~5.4, each item pairs the question and its reference answer with an \emph{adversarially crafted, incorrect} response (e.g., minimal token triggers or reasoning openers). LLM judges are asked to assess correctness; because the adversarial response is wrong by construction, any “correct” judgment is a false positive. We report false-positive rates by attack type to evaluate CARE’s effectiveness as a defense against trigger-based failures of LLM-as-a-judge.


\subsection{Computing Resources}
We used a server equipped with an NVIDIA A100 (40GB). Generating LLM judge outputs took up to 3 hours per dataset.

\subsection{Hyperparameter Tuning}
We reserve 15\% of the data as a validation set. 
For \textsc{CARE-SVD}, we search over $\gamma_n \in \{0.1, 0.2, 0.25, 0.5, 0.75, 1, 2, 3, 5, 7, 10\}$ 
and select the value that yields the best performance on the validation set. 
For \textsc{CARE-Tensor}, we perform a grid search over 
$\gamma_n \in \{10^{-3}, 5{\times}10^{-3}, 10^{-2}, 5{\times}10^{-2}, 10^{-1}\}$ 
and $\tau \in \{10^{-3}, 5{\times}10^{-3}, 10^{-2}, 5{\times}10^{-2}, 10^{-1}\}$, 
choosing the combination that achieves the best validation performance.

\subsection{LLM Judge Models}
We use the following large language models (LLMs) as judges.  
For continuous scoring tasks, we employ \texttt{Llama-3.2-\{1,3\}B}~\citep{grattafiori2024llama}, \texttt{Mistral-7B-Instruct-v0.3}~\citep{jiang2023mistral7b}, \texttt{Qwen3-\{0.6,1.7,4,8\}B}~\citep{qwen3}, \texttt{Phi-4-mini-instruct}~\citep{abouelenin2025phi}, and \texttt{Gemma-3-\{1,4\}B-IT}~\citep{team2025gemma}.  
For classification and preference tasks, we additionally include \texttt{Llama-3.2-\{1,3\}B-Instruct}~\citep{grattafiori2024llama}, \texttt{Gemma-2-9B-IT}~\citep{team2024gemma}, \texttt{Qwen2.5-\{0.6,1.7,4,8\}B}, and \texttt{Yi-1.5-\{6,9\}B-Chat}~\citep{young2024yi}.  
In each dataset, we exclude any model whose outputs contain more than 10\% missing values or out-of-range outputs.



\subsection{Prompt Templates}
In this subsection we provide the prompts we used for collecting LLM judgements.

\begin{boxI}
[title=\textbf{LLM Judge Scoring Template (ASSET – Sentence Simplification)}]

You are rating the quality of a simplified sentence for a specific aspect of text simplification.
Examine both sentences carefully before assigning a single integer score in [0, 100].
Use the whole range; do not default to 50 or round to multiples of 5 unless warranted.

\textbf{Aspect to evaluate:} \{aspect\}

Interpret the aspect as follows (case-insensitive):
\begin{itemize}[noitemsep, leftmargin=*]
\item \textbf{meaning / meaning preservation} — Judge how faithfully the simplification preserves the original meaning (facts, relations, intent). Ignore simplicity/fluency except where they alter meaning.
\item \textbf{fluency / grammaticality} — Judge grammatical correctness and naturalness of the simplification. Ignore meaning differences and degree of simplification; do not reward added content.
\item \textbf{simplicity / readability} — Judge how much easier to read the simplification is compared with the original (shorter sentences, simpler vocabulary/structure). Minor meaning loss may be acceptable, but removing essential information or making text ambiguous should not receive top scores.
\item \textbf{other} — If the aspect is unrecognised, judge overall simplification quality, prioritising meaning preservation, then fluency, then simplicity.
\end{itemize}

\textbf{Anchor guidance (choose any integer within each band):}
\begin{itemize}[noitemsep, leftmargin=*]
\item 0–20 Unusable for the aspect.
\item 21–40 Poor for the aspect.
\item 41–60 Mixed: noticeable problems but some positives.
\item 61–80 Good: solid performance with minor issues.
\item 81–95 Very good: strong performance with only tiny flaws.
\item 96–100 Excellent for the aspect.
\end{itemize}

\textbf{General rules:}
\begin{itemize}[noitemsep, leftmargin=*]
\item Focus strictly on the stated aspect when scoring.
\item Do not reward hallucinations or added facts not in the original.
\item Non-round integers are allowed (e.g., 73, 88) if they best reflect quality.
\end{itemize}

Respond strictly in this format:
\begin{verbatim}
Feedback:::
Total rating: X
\end{verbatim}
Where \texttt{X} is an integer in [0, 100]. Output nothing else.

\textbf{Original sentence:}
\{original\}

\textbf{Candidate simplification:}
\{simplification\}

\textbf{Expected output:}
\begin{verbatim}
Feedback:::
Total rating:
\end{verbatim}
\end{boxI}

\begin{boxI}
[title=\textbf{LLM Judge Scoring Template (FeedbackQA, Ultrafeedback)}]
You will be given a user\_question and system\_answer couple.
\newline

Your task is to provide a `total rating' scoring how well the system\_answer answers the user concerns expressed in the user\_question.
Give your answer as a float on a scale of \{min\_rating\} to \{max\_rating\}, where \{min\_rating\} means that the system\_answer is not helpful at all, and \{max\_rating\} means that the answer completely and helpfully addresses the question.
\newline
\newline
Provide your feedback as follows:
\newline
\newline
Feedback:::
\newline
Total rating: (your rating, as a float between \{min\_rating\} and \{max\_rating\})
\newline
\newline

Now here are the question and answer.
\newline
\newline

Question: \{question\}
\newline
Answer: \{answer\}
\newline
\newline

Feedback:::
\newline
Total rating: 
\end{boxI}

\begin{boxI}
[title=\textbf{LLM Judge Scoring Template (Review-5K – Research Paper Review Quality)}]

You are scoring the \textbf{overall quality of a research paper submission}.
Use the reviewing guidelines below to assess methodological soundness, clarity, contribution, and overall merit.
Assign one of the standard conference rating levels (an integer in [1, 10]) according to this rubric:

\begin{itemize}[noitemsep, leftmargin=*]
\item 1 – \textbf{Strong Reject}: fundamental flaws; clearly unsuitable for publication.
\item 3 – \textbf{Reject, not good enough}: major weaknesses outweigh contributions.
\item 5 – \textbf{Marginally below acceptance}: borderline but leaning negative.
\item 6 – \textbf{Marginally above acceptance}: borderline but leaning positive; improvements still desirable.
\item 8 – \textbf{Accept, good paper}: solid contribution with meaningful advances and only minor issues.
\item 10 – \textbf{Strong Accept}: outstanding paper with exceptional clarity and impact.
\end{itemize}

If you believe an intermediate score (2, 4, 7, 9) better reflects the work, you may use it, positioning it between the adjacent rubric descriptions above.

\textbf{Reviewing guidelines:}
{guidelines}

Respond strictly in this format:
\begin{verbatim}
Feedback:::
Total rating: X
\end{verbatim}
Where \texttt{X} is an integer between 1 and 10 (inclusive). Output nothing else.

\textbf{Input:}
\begin{verbatim}
Paper content:
{paper}

Feedback:::
Total rating:
\end{verbatim}
\end{boxI}

\begin{boxI}
[title=\textbf{LLM Judge Scoring Template (Summarize-from-Feedback – Overall Quality)}]

You are judging the overall \textbf{quality and faithfulness} of a candidate summary generated from a longer text.
Your task is to evaluate how well the summary captures the main ideas, factual details, and intent of the original passage, balancing \textbf{accuracy, coverage, and clarity}.

\textbf{Scoring criteria:}
\begin{itemize}[noitemsep, leftmargin=*]
\item Prioritise \textbf{faithfulness}: summaries must not introduce false or unrelated content.
\item Assess \textbf{coverage}: the main points, outcomes, and reasoning from the original text should be reflected.
\item Evaluate \textbf{clarity and conciseness}: good summaries are readable and direct, without unnecessary details.
\item Do not reward stylistic flourish or length; correctness and substance matter most.
\end{itemize}

Assign a single integer score from 1 to 7 based on the following rubric:
\begin{itemize}[noitemsep, leftmargin=*]
\item 1 – Very poor: inaccurate or largely off-topic; major facts are wrong or missing.
\item 2 – Poor: significant factual or coverage errors; misleading or incomplete.
\item 3 – Fair: captures some ideas but omits key information or introduces confusion.
\item 4 – Adequate: mostly correct but has gaps, minor hallucinations, or extraneous details.
\item 5 – Good: covers key ideas faithfully with small omissions or slight verbosity.
\item 6 – Very good: accurate, coherent, and clear with only minor imperfections.
\item 7 – Excellent: fully faithful, comprehensive, and well-written.
\end{itemize}

\textbf{Guidance:}
\begin{itemize}[noitemsep, leftmargin=*]
\item Focus on factual alignment and coverage, not stylistic preferences.
\item Small rephrasings or wording changes should not affect the rating.
\item Reserve 7 for summaries that are both faithful and complete.
\end{itemize}

Respond exactly in this format:
\begin{verbatim}
Feedback:::
Total rating: X
\end{verbatim}
Where \texttt{X} is an integer from 1 to 7 (inclusive). Output nothing else.

\textbf{Inputs:}
\begin{verbatim}
Original text:
{original}

Summary:
{summary}

Feedback:::
Total rating:
\end{verbatim}
\end{boxI}

\begin{boxI}
[title=\textbf{LLM Judge Scoring Template (Yelp – Restaurant Review Sentiment)}]

You are evaluating the \textbf{overall sentiment} of a Yelp restaurant review.
Read the entire review, consider the reviewer’s tone, described experiences, and final impression, then assign a single star rating from 1 to 5 according to this rubric:

\begin{itemize}[noitemsep, leftmargin=*]
\item 1 – Strongly negative; serious problems or clear dissatisfaction.
\item 2 – Mostly negative; several complaints outweigh positives.
\item 3 – Mixed or neutral; balanced feedback with no strong lean.
\item 4 – Mostly positive; satisfied with only minor issues.
\item 5 – Highly positive; enthusiastic praise and strong recommendation.
\end{itemize}

Base the rating on the \textbf{reviewer’s intent}, not your personal opinion, and ignore formatting quirks.

Respond exactly in this format:
\begin{verbatim}
Feedback:::
Total rating: X
\end{verbatim}
Where \texttt{X} is an integer from 1 to 5 and nothing else.

\textbf{Input:}
\begin{verbatim}
Review:
{text}

Feedback:::
Total rating:
\end{verbatim}
\end{boxI}


\begin{boxI}
[title=\textbf{LLM Judge Preference Template (No Tie)}]

You will compare two assistant responses (\texttt{A} and \texttt{B}) for the same user question.
Judge which response better helps the user overall, considering \textbf{factual accuracy, completeness, clarity, safety, and tone}.

Return a single integer chosen from {0, 1, 2, 4, 5, 6} using this rubric:
\begin{itemize}[noitemsep, leftmargin=*]
\item 0 : Response A is \textbf{far superior} to Response B
\item 1 : Response A is \textbf{clearly better} than Response B
\item 2 : Response A is \textbf{slightly better} than Response B
\item 4 : Response B is \textbf{slightly better} than Response A
\item 5 : Response B is \textbf{clearly better} than Response A
\item 6 : Response B is \textbf{far superior} to Response A
\end{itemize}

\textbf{Important (must follow all points below or the answer is invalid):}
\begin{itemize}[noitemsep, leftmargin=*]
\item Ties are \textbf{NOT allowed}. Never output 3 or any number outside {0, 1, 2, 4, 5, 6}.
\item If Response A is better overall, output only one of {0, 1, 2}.
\item If Response B is better overall, output only one of {4, 5, 6}.
\item If the responses are very close, choose 2 (if A is slightly better) or 4 (if B is slightly better).
\item Output only the integer — no explanations, words, or additional text.
\end{itemize}

\textbf{Reminder:}
Response A better $\Rightarrow$ 0/1/2
Response B better $\Rightarrow$ 4/5/6
Outputs outside these sets will be rejected.

Respond strictly in this format:
\begin{verbatim}
Feedback:::
Total rating: X
\end{verbatim}
Where \texttt{X} is one of {0, 1, 2, 4, 5, 6}.

\textbf{Inputs:}
\begin{verbatim}
Question: {question}
Response A: {answer_a}
Response B: {answer_b}
Feedback:::
Total rating:
\end{verbatim}

\textbf{Note.}
For postprocessing and aggregation, this 6-point asymmetric scale is shifted to the range [-3, 3].
We adopted this transformation because LLM judges tend to be systematically biased toward positive values.

\end{boxI}

\begin{boxI}
[title=\textbf{LLM Judge Scoring Template (CivilComments – Toxicity Rating)}]

You are a \textbf{toxicity rater}.
Given a single user comment, assign a discrete toxicity score on a 0–9 scale.
Higher numbers indicate more toxic language.

\textbf{Return exactly one integer using this rubric:}
\begin{itemize}[noitemsep, leftmargin=*]
\item 0 : Clearly non-toxic or benign language.
\item 1 : Mostly neutral with only faint negativity.
\item 2 : Slightly negative or dismissive language.
\item 3 : Mild insults or disrespectful tone.
\item 4 : Noticeable harassment, repeated negativity, or coarse language.
\item 5 : Borderline toxic; sustained insults or hostility.
\item 6 : Clearly toxic with direct insults or slurs.
\item 7 : Highly toxic; explicit harassment or hateful language.
\item 8 : Severe toxicity including threats or strong hate speech.
\item 9 : Extremely toxic content with explicit threats or violent hate speech.
\end{itemize}

\textbf{Guidelines:}
\begin{itemize}[noitemsep, leftmargin=*]
\item Focus on the \textbf{literal wording} of the comment.
\item Penalize \textbf{hate, harassment, threats}, or extreme vulgarity.
\item If the comment mixes toxic and mitigating language, choose the score that reflects the overall impact.
\end{itemize}

Respond strictly in this format:
\begin{verbatim}
Feedback:::
Total rating: X
\end{verbatim}
Where \texttt{X} is an integer between 0 and 9 (inclusive). Output nothing else.

\textbf{Input:}
\begin{verbatim}
Comment: {text}

Feedback:::
Total rating:
\end{verbatim}
\end{boxI}

\subsection{Programmatic Judges}
Programmatic judges, synthesized by LLMs, distill the evaluation logic of LLMs into interpretable and cost-effective program code~\citep{huang2025time, huangalchemist}.
These programmatic judges provide specialized and independent assessments, offering an alternative to using LLMs directly as evaluators.
Although cheap to obtain, they may inherently encode the biases or noise present in the originating LLMs.
We incorporate these judges to evaluate CARE’s aggregation effectiveness under conditions with increased signal noise.

We describe the creation of programmatic judges and the criteria we consider.
We use OpenAI's GPT-4o~\citep{hurst2024gpt} to generate programmatic judges with the following prompt:
\begin{boxI}[title=\textbf{Programmatic Judge Template}]
\paragraph{Task:}
Create a Python function that serves as a judge to evaluate LLM-generated responses.
\paragraph{Requirements:}
\begin{enumerate}
    \item Function name: \texttt{program\_judge}
    \item Function signature: \texttt{program\_judge(query: str, response: str) -> dict}
    \item Return a dictionary with the following keys:
    \begin{itemize}
        \item \texttt{'score'} (float, 0–10)
        \item \texttt{'reasoning'} (string)
        \item \texttt{'criteria'} (string)
    \end{itemize}
    \item The score ranges from 0 (lowest quality) to 10 (highest quality)
    \item Higher scores indicate higher-quality responses
    \item Implement a unique judging strategy
    \item Include comprehensive error handling
    \item Add a concise docstring describing the judging logic
\end{enumerate}
\paragraph{Instruction:}
The judge may rely on third-party specialized models or Python libraries. Do not use Google search or web crawling. Generate a complete, unique judge function that assigns higher scores to higher-quality responses.
\end{boxI}

We follow the suggested criteria from~\citet{huang2025time}, which are studied that can be translated into executable code.
Specifically, we synthesize 30 programmatic judges and progressively integrate them into our 10 LLM-based evaluators.
The considered evaluation criteria include: (i) factual accuracy, (ii) logical coherence, (iii) clarity and conciseness, (iv) completeness or coverage of the answer, (v) relevance to the query (semantic similarity), (vi) language quality and readability, (vii) bias detection, (viii) safety and toxicity, and (ix) response verbosity or redundancy.
For example:
\begin{itemize}[leftmargin=*]
    \item \textbf{Coherence}: The judge evaluates logical coherence by checking whether entities mentioned in the query also appear in the response.
    Greater overlap indicates stronger consistency (see Program~\ref{prog:coherence}).
    \item \textbf{Relevance}: A judge uses TF-IDF to convert questions and responses into word vectors, computing cosine similarity to measure semantic alignment (see Program~\ref{prog:semantic_relevance}).
    \item \textbf{Readability}: A judge leverages a third-party API to evaluate language complexity, using metrics like the Flesch–Kincaid grade level (see Program~\ref{prog:readability}).
\end{itemize}
All the used judging logic, conditions, and pre-defined keyword lists are generated by the LLM.

Here we provide several examples in our selected programs in Figure~\ref{fig:progressive_judge_selection} to illustrate this approach.

\begin{lstlisting}[caption={Language Quality/Readability.},label={prog:readability},breaklines=true]
import textstat

def program_judge(query: str, response: str) -> dict:

    # Use Flesch Reading Ease score to evaluate readability. Higher scores indicate easier readability.
    
    score = textstat.flesch_reading_ease(response)

    # Normalize the score to a 0-10 scale (original scale is 0-100)
    normalized_score = score / 10

    # Reasoning includes the original FRE score for transparency
    reasoning = f"The response received a Flesch Reading Ease score of {score}, indicating its readability. Higher scores mean the text is easier to read."

    return {'score': normalized_score, 'reasoning': reasoning, 'criteria': "Language Quality/Readability"}
\end{lstlisting}

\begin{lstlisting}[caption={Logical Coherence.},label={prog:coherence},breaklines=true]
import spacy
nlp = spacy.load("en_core_web_sm")

def program_judge(query: str, response: str) -> dict:
    """
    Judges responses based on Logical Coherence.
    Checks if entities in the query also appear in the response.
    """
    q_ents = [ent.text for ent in nlp(query).ents]
    r_ents = [ent.text for ent in nlp(response).ents]
    matched = [e for e in q_ents if e in r_ents]

    score = (len(matched) / len(q_ents) * 10) if q_ents else 10.0
    reasoning = f"The response contained {len(matched)} out of {len(q_ents)} important entities from the query."

    return {'score': score, 'reasoning': reasoning, 'criteria': "Logical Coherence"}
\end{lstlisting}

\begin{lstlisting}[caption={Completeness/Coverage of Answer.},label={prog:semantic_relevance},breaklines=true]

from sklearn.feature_extraction.text import TfidfVectorizer
from sklearn.metrics.pairwise import cosine_similarity

def program_judge(query: str, response: str) -> dict:
    """
    Judges responses based on completeness and coverage of the query using TF-IDF and cosine similarity.
    """
    tfidf = TfidfVectorizer()
    tfidf_matrix = tfidf.fit_transform([query, response])
    similarity = cosine_similarity(tfidf_matrix[0:1], tfidf_matrix[1:2])[0][0]

    score = similarity * 10
    reasoning = f"The response covers {similarity*100:.2f}% of the query."

    return {'score': score, 'reasoning': reasoning, 'criteria': "Completeness/Coverage of Answer"}
\end{lstlisting}


\begin{table}[t]\footnotesize
\centering
\setlength{\tabcolsep}{5pt}
\renewcommand{\arraystretch}{0.95}

\caption{Judge loadings on latent factors in \textsc{CARE-SVD}. 
Factor~1 corresponds to true quality $Q$; Factor~2 reflects a length confounder.}
\label{tab:confounding_appendix}

\begin{tabular}{|l|c|c|}
\hline
\textbf{Judge} &
\makecell{\textbf{$Q$}\\\textbf{(true quality)}} &
\makecell{\textbf{Length}\\\textbf{confounder}} \\ \hline
Qwen3-8B & 0.396 & -0.240 \\
Llama-3.1-8B-Instruct & 0.664 & -0.076 \\
gemma-3-4b-it & 0.706 & -0.152 \\
Llama-3.2-1B & -0.009 & -0.140 \\
Qwen3-4B & 0.180 & 0.008 \\
gemma-3-1b-it & 0.243 & 0.595 \\
Llama-3.2-3B & 0.033 & 0.057 \\
Phi-4-mini-instruct & 0.715 & -0.051 \\
Qwen3-1.7B & 0.199 & -0.012 \\
Mistral-7B-Instruct-v0.3 & 0.804 & 0.016 \\
dummy\_eval\_1 & 0.098 & 0.742 \\
dummy\_eval\_2 & 0.035 & 0.290 \\
human\_eval\_1 & 0.337 & 0.078 \\
human\_eval\_2 & 0.338 & 0.059 \\ \hline
\end{tabular}
\end{table}

\subsection{Additional Controlled Experiment on Confounding Factors}
\label{appsubsec:confounding_appendix}
Unlike the semi-synthetic perturbations in Section~\ref{subsec:exp_robustness}, here we investigate whether \SYSNAME{} can separate the true quality latent factor from naturally arising confounders in a more controlled setting. Specifically, we introduce two dummy judges whose scores are directly correlated with response length or the presence of specific words. If \SYSNAME{} functions as intended, CARE should recover a factor structure in which high-quality judges align with the true quality factor $Q$, while the dummy judges align with a distinct confounder.
\paragraph{Setup.}
We ran \textsc{CARE-SVD} on \textsc{FeedbackQA} with 14 judges: 10 LLM judges, 2 programmatic ``dummy'' judges (length/keyword-sensitive), and 2 human annotators. Factor loadings are reported in Table~\ref{tab:confounding_appendix}.
\paragraph{Results.}
The observed loadings align with our hypothesis:
\begin{itemize}[leftmargin=*,labelsep=0.5em,noitemsep,topsep=0pt,partopsep=0pt,parsep=0pt]
\item \textbf{Factor~1 (true quality $Q$).} This factor exhibits \emph{broad, balanced loadings} across competent LLM judges and the two human judges, with much weaker loadings for the programmatic dummy judges. Within model families, larger models have higher loadings (e.g., Llama-3.1-8B $>$ Llama-3.2-3B $\approx$ Llama-3.2-1B), suggesting that $Q$ reflects underlying capability. Instruction-tuned models (Mistral-7B-Instruct, Phi-4-mini-instruct, Llama-3.1-8B-Instruct, Gemma-3-4B-it) also show above-median loadings, consistent with their alignment to human rubrics.
\item \textbf{Factor~2 (length confounder).} This factor is dominated by a \emph{high, concentrated loading} on the length-sensitive dummy\_eval\_1, with a secondary loading on gemma-3-1b-it (0.59). In contrast, nearly all other judges—including both humans and stronger instruction-tuned models—have near-zero loadings. Such a one-sided, few-judge pattern is characteristic of a confounder rather than true quality.
\end{itemize}

\subsection{Ablation on Quality-Factor Selection in \SYSNAME{}-SVD}
We validate the effectiveness of the heuristic that chooses the leading eigenvector as the quality factor in \SYSNAME{}-SVD.

\paragraph{Setup.}
We use the same scoring-task setup as in Table~\ref{tab:results_scoring}. Beyond the default heuristic (choosing the first factor), we also evaluate all recovered latent factors with non-negligible eigenvalues (eigenvalue $>10^{-8}$). For each dataset, we treat each factor in turn as the quality factor, construct the corresponding aggregation rule, and report its test-set MAE. This allows us to quantify how close the default choice is to the best-performing factor in hindsight.\footnote{The “best factor” uses the test set for selection and is therefore optimistic; the goal is diagnostic rather than a deployable selection rule.}

\renewcommand{\arraystretch}{0.9}
\begin{table*}[t!]
\caption{Aggregation performance across different scoring datasets, measured by MAE ($\downarrow$; mean $\pm$ std).}
\vspace{-8pt}
\label{tab:heuristic_validation}
\centering
\small
\begin{tabular*}{0.98\textwidth}{@{\extracolsep{\fill}}lcccccc}
\toprule
& \textbf{ASSET} & \textbf{FeedbackQA} & \textbf{Review-5K} & \textbf{Summarize} & \textbf{UltraFeedback} & \textbf{Yelp} \\
\midrule
\textbf{1st Factor} &
$\mathbf{27.148 \pm 0.133}$ &
$\mathbf{0.753 \pm 0.003}$ &
$\mathbf{1.950 \pm 0.006}$ &
$\mathbf{1.325 \pm 0.003}$ &
$\mathbf{0.622 \pm 0.006}$ &
$\mathbf{0.694 \pm 0.005}$ \\
2nd Factor &
$31.757 \pm 0.100$ &
-- &
$2.222 \pm 0.007$ &
-- &
$0.781 \pm 0.045$ &
$1.067 \pm 0.062$ \\
3rd Factor &
$32.399 \pm 0.159$ &
-- &
$3.115 \pm 0.006$ &
-- &
$0.939 \pm 0.010$ &
$0.955 \pm 0.023$ \\
4th Factor &
$30.529 \pm 0.144$ &
-- &
$2.667 \pm 0.004$ &
-- &
$0.996 \pm 0.005$ &
$1.125 \pm 0.012$ \\
5th Factor &
$31.807 \pm 0.170$ &
-- &
$3.369 \pm 0.004$ &
-- &
$1.040 \pm 0.062$ &
$1.099 \pm 0.003$ \\
\bottomrule
\end{tabular*}
\end{table*}

\paragraph{Results.}
Table~\ref{tab:heuristic_validation} shows that the leading factor consistently yields the best MAE among all recovered factors, supporting our default choice of the top eigenvector as the quality direction in \SYSNAME{}-SVD. We also observe that on datasets where higher-order factors are effectively negligible (e.g., FeedbackQA and Summarize), alternative factors are not meaningfully recoverable, and the benefit of \SYSNAME{}-SVD is correspondingly limited.

\subsection{Synthetic Experiments with Latent Confounding Factor}
To evaluate the robustness of \SYSNAME{} under latent confounding, we report two synthetic regimes with increasing confounder strength.
The first is a \emph{second-order-sufficient} regime, where the dominant shared second-order structure remains aligned with the true-quality factor $Q$, so both CARE-SVD and CARE-Tensor remain robust as confounding increases.
The second is a \emph{second-order-insufficient} regime, where pairwise structure is dominated by a confounder $C$ (with $C\perp Q$), causing spectral recovery to degrade toward chance, while multi-view third-order moments remain identifiable.
Across both regimes, \SYSNAME{} degrades substantially more gracefully than standard aggregation baselines as confounding strengthens.

\begin{figure*}[t!]
\centering
\begin{subfigure}[t]{0.48\textwidth}
  \centering
  \includegraphics[width=\textwidth]{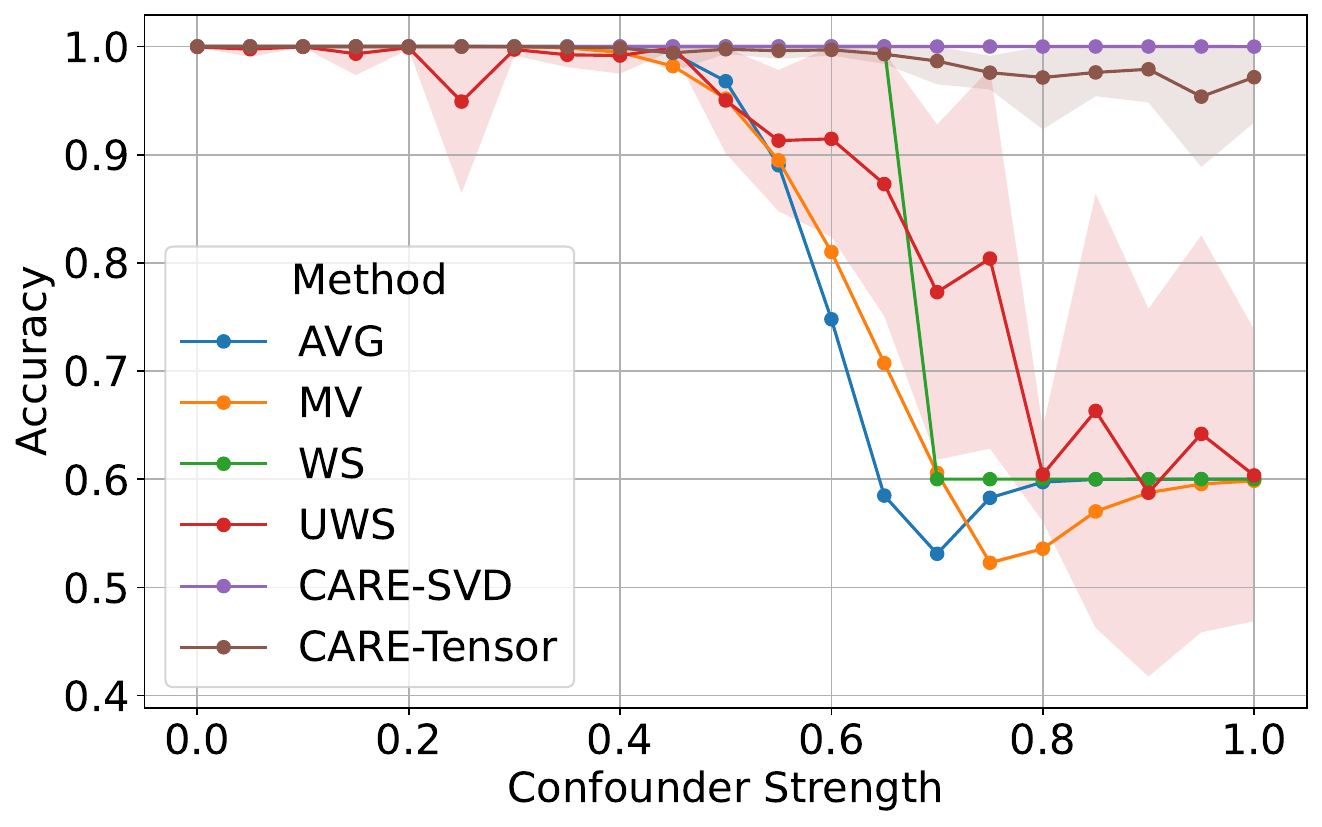}
  \caption{\textbf{Second-order-sufficient.} CARE-SVD and CARE-Tensor remain robust as confounding increases, while baselines collapse.}
  \label{fig:synthetic_second_order_sufficient}
\end{subfigure}\hfill
\begin{subfigure}[t]{0.48\textwidth}
  \centering
  \includegraphics[width=\textwidth]{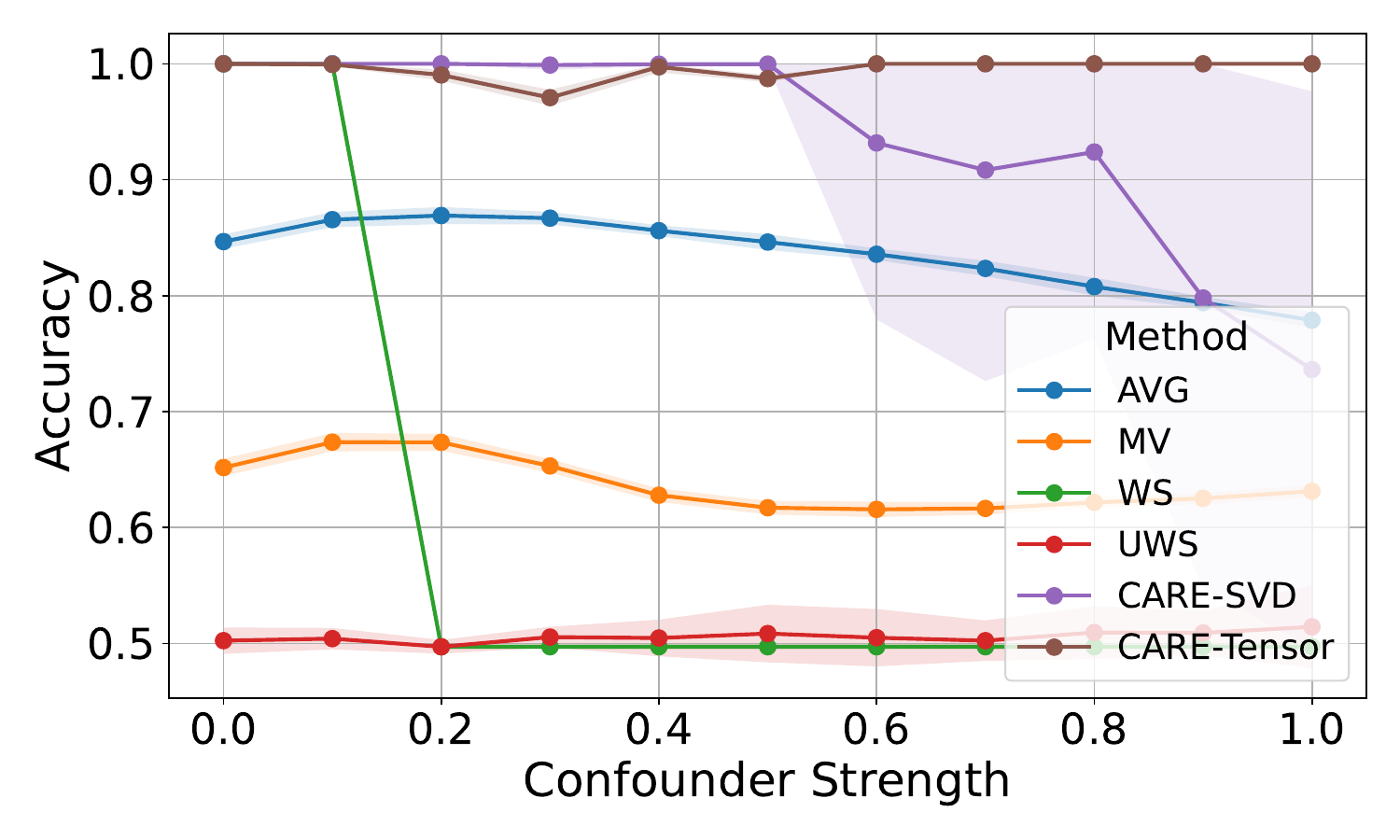}
  \caption{\textbf{Second-order-insufficient.} Baselines and CARE-SVD degrade toward chance, while CARE-Tensor remains robust.}
  \label{fig:synthetic_second_order_insufficient}
\end{subfigure}
\caption{\textbf{Two synthetic regimes under latent confounding.}
(a) In a second-order-sufficient regime, both CARE-SVD and CARE-Tensor remain accurate across the confounder-strength sweep,
whereas standard baselines (AVG, MV, WS, UWS) deteriorate sharply at high confounding.
(b) In a second-order-insufficient regime where $C\perp Q$ and second-order structure is confounder-dominated, baselines and CARE-SVD approach chance as confounding increases,
while CARE-Tensor remains accurate by exploiting identifiable three-view third-order cross-moments.}
\label{fig:synthetic_two_regimes}
\end{figure*}

\paragraph{Second-order-sufficient regime.}\mbox{}\\
\textbf{Setup.}
We synthesize a binary true-quality label ($Q$) and a binary confounder ($C$), evaluated by five simulated judges under a Gaussian mixture model.
Each view has a $4\times 5$ conditional-mean table over the four $(C,Q)$ states. One view is primarily $Q$-driven, while the other two encode strong but oppositely signed confounder effects.
We vary confounder strength $g\in[0,1]$ via linear interpolation:
\[
\tilde{\mu}_{(1,q)}(g) := \mu_{(0,q)} + g\big(\mu_{(1,q)}-\mu_{(0,q)}\big),\qquad q\in\{0,1\},
\]
where $g=0$ removes the confounder and $g=1$ restores its full effect.
We generate $50{,}000$ i.i.d.\ samples by sampling $(C,Q)$ with probabilities $(0.2,0.3,0.3,0.2)$, drawing each view from $\mathcal{N}(\mu_{(C,Q)},0.01I)$, and concatenating the three views.
Across 10 random seeds, we sweep $g\in\{0.0,0.05,\ldots,1.0\}$ and evaluate aggregation methods for recovering $Q$.

\textbf{Results.}
Figure~\ref{fig:synthetic_two_regimes}a shows that CARE-SVD and CARE-Tensor maintain high accuracy across the entire sweep,
whereas baselines (AVG, MV, WS, UWS) degrade markedly as confounding strengthens.

\paragraph{Second-order-insufficient regime.}\mbox{}\\
\textbf{Setup.}
We sample $(C,Q)$ uniformly with $C\perp Q$ (each state has probability $0.25$).
Each sample has three views, each with $d=12$ continuous judge features (total dimension $36$), drawn from
\[
X^{(v)}\mid(C,Q)\sim \mathcal{N}\!\big(\mu^{(v)}_{(C,Q)},\,0.01I\big),\qquad v\in\{1,2,3\}.
\]
Within each view, the first $q_{\text{only}}=3$ features are $Q$-only, the next $c_{\text{only}}=8$ are $C$-only, and the remaining feature depends on both.
We sweep a confounder strength $c\in[0,1]$ by scaling only the confounder-dependent mean shifts:
\[
\tilde{\mu}^{(v)}_{(1,q)}(c) := \mu^{(v)}_{(0,q)} + c\big(\mu^{(v)}_{(1,q)}-\mu^{(v)}_{(0,q)}\big),\qquad q\in\{0,1\}.
\]
Because most features are $C$-only, increasing $c$ makes the dominant second-order structure align with $C$, so CARE-SVD’s leading low-rank direction becomes uninformative for predicting $Q$.
In contrast, CARE-Tensor remains identifiable from three-view third-order cross-moments; we map recovered components to $Q$ using a small validation split (and use oracle tri-view grouping plus cross-view alignment to isolate the tensor step).
For each seed, we sample $3{,}000$ examples and evaluate methods over $c\in\{0.0,0.1,\ldots,1.0\}$ We report mean $\pm$ standard deviation over 25 seeds.

\textbf{Results.}
Figure~\ref{fig:synthetic_two_regimes}b shows that as confounding strengthens, baselines and CARE-SVD degrade toward chance accuracy,
while CARE-Tensor remains robust across the sweep.

\begin{figure}[h!]
    \centering
    \subfloat[Random Partitioning]{{\includegraphics[width=0.45\textwidth]{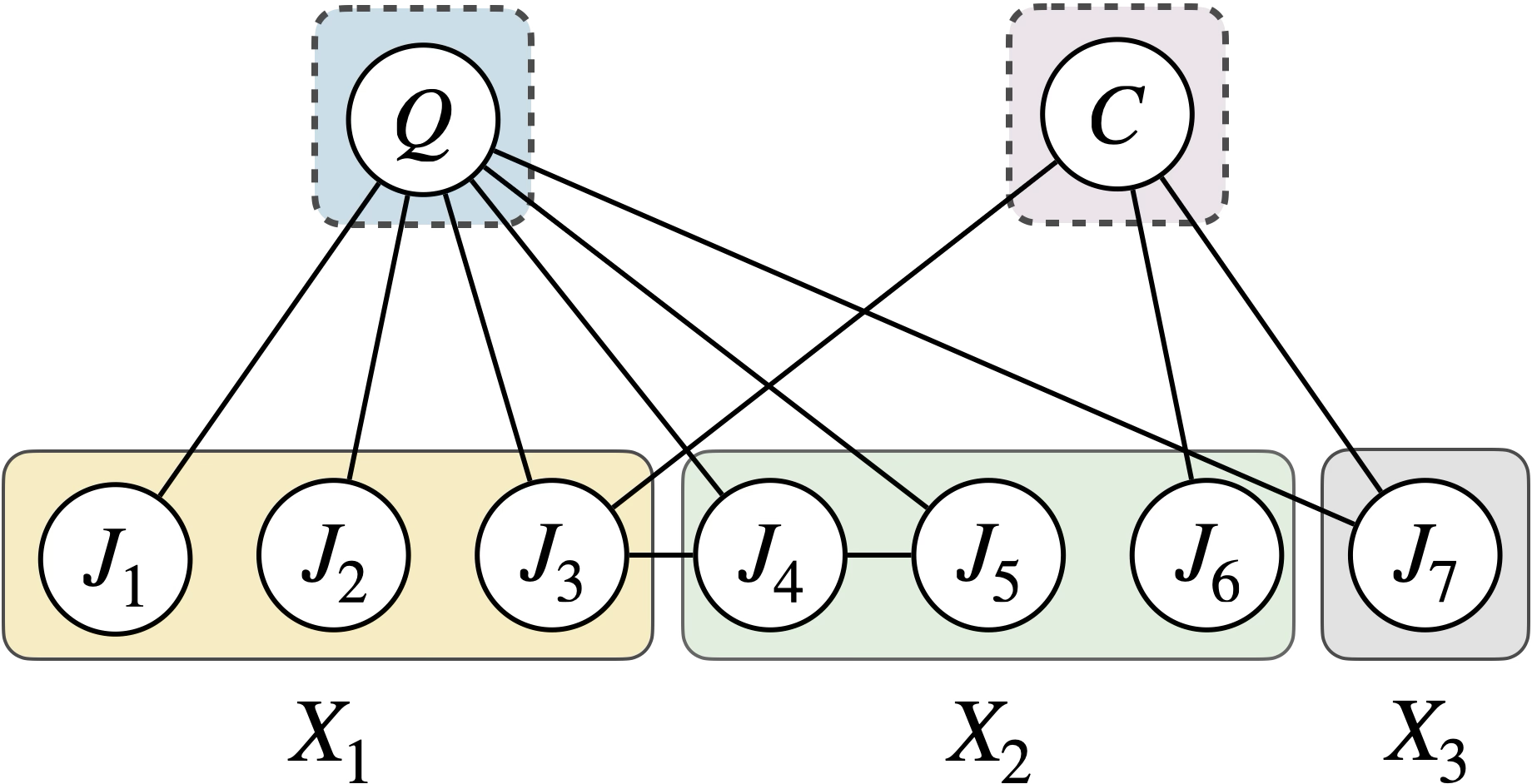}}}%
    \hfill
    \subfloat[Graph-aware Partitioning]{{\includegraphics[width=0.45\textwidth]{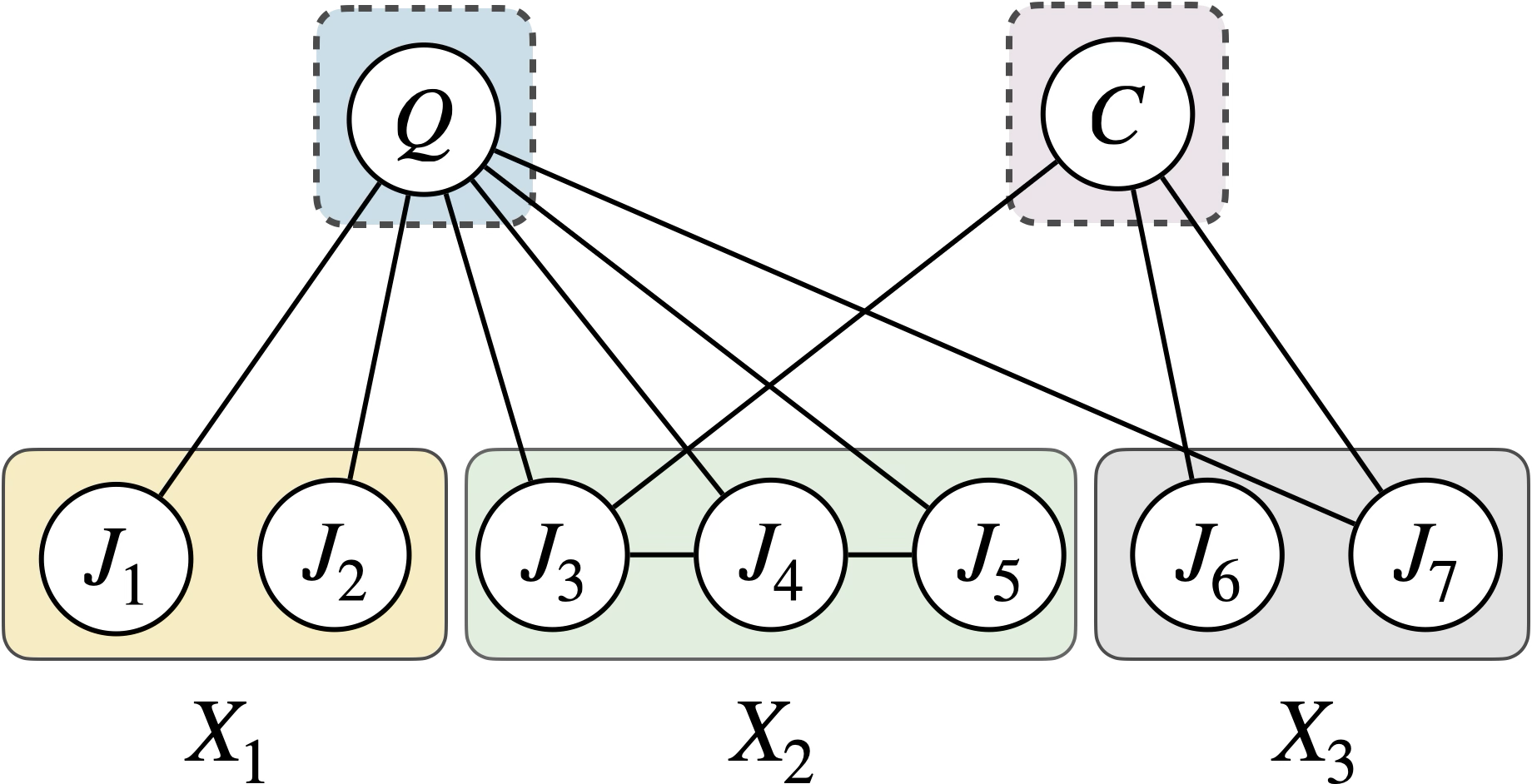}}}%
    \caption{Random Partitioning vs. Graph Aware Partitioning. A random partitioning (a) leaves cross-view edges that violate the independence
  assumptions of tensor methods, whereas the graph-aware partitioning (b) considers
  cross-view edges and restores the required separation.}
    \label{fig:partitioning}
\end{figure}

\subsection{Synthetic Experiment on Graph-Aware Tensor Decomposition}
Tensor decomposition relies on a multi-view assumption: the judge groups (views) should be (approximately) conditionally independent given the latent variables.
When judges exhibit conditional dependencies, a naive (random) split of judges into views can create many cross-view edges, violating this assumption and degrading tensor recovery.
A key component of \SYSNAME{}-Tensor is to form views \emph{using} the learned dependency structure, so as to minimize cross-view interactions. In this experiment, we evaluate whether this graph-aware view formation improves estimation accuracy.

\paragraph{Setup.}
We simulated $n=10{,}000$ items scored by $p=12$ judges and partitioned the judges into three views of four judges each.
To induce conditional dependencies, we planted within-view edges of strength $0.3$ at $40\%$ density (and no edges across the true views).
We compared two view-formation strategies over ten random seeds (Figure~\ref{fig:partitioning}):
\begin{itemize}[leftmargin=*]
    \item \textbf{Random partitioning:} assign judges to views uniformly at random, ignoring the dependency structure.
    \item \textbf{Graph-aware partitioning:} assign judges to views to minimize cross-block edges in the empirical precision matrix, aligning views with the learned conditional dependency structure.
\end{itemize}
Performance is measured by the $\ell_2$ error in recovering latent component means, i.e., $\ltronorm{\mu_{qc}-\hat{\mu}_{qc}}$.

\paragraph{Results.}
As shown in Figure~\ref{fig:grouping_error}, graph-aware partitioning reduces reconstruction error by more than an order of magnitude compared to random partitioning.
This shows that \textbf{respecting judge dependencies during view formation is critical for accurate tensor recovery}, validating graph-aware view formation as a key component of \SYSNAME{}-Tensor.

\begin{figure}[t!]
  \centering
  \includegraphics[width=0.45\textwidth]{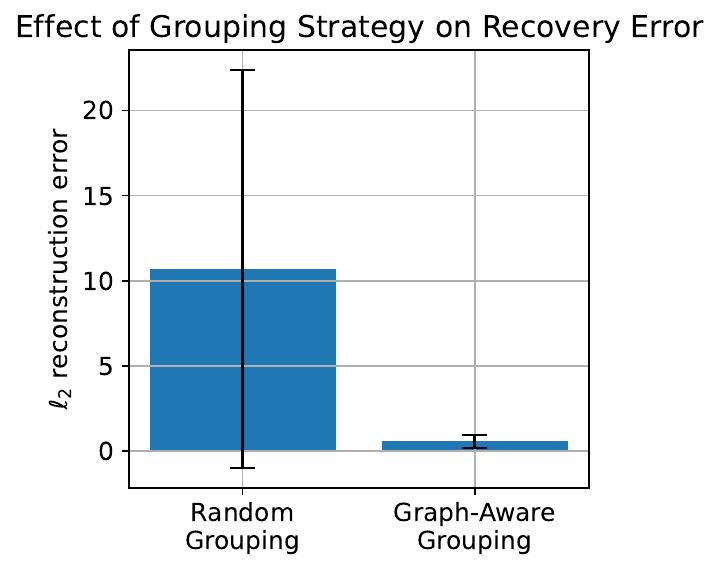}
  \caption{\(\ell_{2}\) reconstruction error (mean \(\pm\) SD) for random vs.\ graph-aware grouping.}
  \label{fig:grouping_error}
\end{figure}

\end{document}